\documentclass[options]{article} %\documentclass[preprint,3p]{elsarticle}
\usepackage{amssymb}
\usepackage{amsfonts}
\usepackage{amsmath}
\usepackage{amsthm}
\usepackage{algorithmic}
\usepackage{algorithm}
\usepackage{booktabs}
\usepackage{url}
\usepackage{multirow}
\usepackage{IEEEtrantools}
\usepackage{graphicx,color}
\usepackage[bf,SL,BF]{subfigure}
\usepackage{natbib}%\biboptions{authoryear}
\usepackage{color,xcolor}
\usepackage{soul} 
\usepackage[colorlinks, citecolor=blue, linkcolor=black]{hyperref}
%\usepackage{ctex}
%\usepackage{fontspec}
%\setmainfont{Times New Roman}
\usepackage{enumerate}
\usepackage[title]{appendix}
\usepackage{lscape}
\usepackage{makecell}
\usepackage{colortbl}
%  \mathbf
%%%%%%%%%%%%%%%%%%%%%%%%%%%%%%%%%%%%%%%%%
\newcommand{\bPsi}{\boldsymbol{\Psi}}\newcommand{\bS}{\mathbf{S}}\newcommand{\bB}{\mathbf{B}}\newcommand{\bM}{\mathbf{M}}
\newcommand{\bC}{\mathbf{C}}\newcommand{\bX}{\mathbf{X}}\newcommand{\bP}{\mathbf{P}}\newcommand{\bR}{\mathbf{R}}\newcommand{\bY}{\mathbf{Y}}\newcommand{\bU}{\mathbf{U}}\newcommand{\bA}{\mathbf{A}}\newcommand{\bI}{\mathbf{I}}\newcommand{\bL}{\mathbf{L}}\newcommand{\bW}{\mathbf{W}}\newcommand{\bH}{\mathbf{H}}\newcommand{\bLmd}{\boldsymbol{\Lambda}}\newcommand{\bV}{\mathbf{V}}\newcommand{\bZ}{\mathbf{Z}}\newcommand{\bo}{\mathbf{0}}\newcommand{\be}{\mathbf{e}}\newcommand{\bx}{\mathbf{x}}\newcommand{\bz}{\mathbf{z}}\newcommand{\bu}{\mathbf{u}} \newcommand{\bs}{\mathbf{s}}\newcommand{\bb}{\mathbf{b}}\def\b1{{\mathbf1}}
\newcommand{\bO}{\mathbf{O}}

% \mtahbb
%%%%%%%%%%%%%%%%%%%%%%%%%%%%%%%
\newcommand{\bbE}{\mathbb{E}}

% \mtahrm
%%%%%%%%%%%%%%%%%%%%%%%%%%%%%%%
\newcommand{\bTG}{\mathrm{TG}}
\usepackage{xspace}
\newcommand{\et}{\emph{t}\xspace}
\newcommand{\tfa}{\emph{t}FA\xspace}
\newcommand{\tbfa}{\emph{t}BFA\xspace}
\newcommand{\mt}{\emph{Mt}\xspace}

% \widetilde
%%%%%%%%%%%%%%%%%%%%%%%%%%%%%%%
\newcommand{\btS}{\widetilde{\mathbf{S}}}\newcommand{\btC}{\widetilde{\mathbf{C}}}\newcommand{\btR}{\widetilde{\mathbf{R}}}\newcommand{\btSig}{\widetilde{\boldsymbol{\Sigma}}}\newcommand{\btW}{\widetilde{\mathbf{W}}}\newcommand{\btPsi}{\widetilde{\bPsi}}

% \mathcal
%%%%%%%%%%%%%%%%%%%%%%%%%%%%%%%
\newcommand{\cD}{\mathcal{D}}\newcommand{\cL}{\mathcal{L}}\newcommand{\cG}{\mathcal{G}}\newcommand{\cT}{\mathcal{T}}\newcommand{\cN}{\mathcal{N}}\newcommand{\cX}{\mathcal{X}}\newcommand{\cY}{\mathcal{Y}}

% \widehat
%%%%%%%%%%%%%%%%%%%%%%%%%%%%%%%
\newcommand{\bwA}{\widehat{\bA}}\newcommand{\bwC}{\widehat{\bC}}\newcommand{\bwR}{\widehat{\bR}}\newcommand{\bwSig}{\widehat{\bSig}}
\newcommand{\bwTheta}{\widehat{\bTheta}}
\newcommand{\bwPsi}{\widehat{\bPsi}}

% \tilde
%%%%%%%%%%%%%%%%%%%%%%%%%%%%%%%
 \newcommand{\bttheta}{\tilde{\btheta}}
\newcommand{\tpsi}{\tilde{\psi}}

% \hat
%%%%%%%%%%%%%%%%%%%%%%%%%%%%%%%

% \boldsymbol
%%%%%%%%%%%%%%%%%%%%%%%%%%%%%%%
\newcommand{\beps}{\boldsymbol{\epsilon}}\newcommand{\bmu}{\boldsymbol{\mu}}\newcommand{\btheta}{\boldsymbol{\theta}}\newcommand{\bTheta}{\boldsymbol{\Theta}} 
\newcommand{\bSig}{\boldsymbol{\Sigma}}

\newcommand\refe[1]{(\ref{#1})}\newcommand\refap[1]{\ref{#1}}
\newcommand\reff[1]{Fig.~\ref{#1}}\newcommand\reft[1]{Table~\ref{#1}}\newcommand\refs[1]{Sec.~\ref{#1}} %\newcommand\reft[1]{Tab.~\ref{#1}}
  
 \newcommand\refthm[1]{Theorem~\ref{#1}}\newcommand\refdef[1]{Definition~\ref{#1}}
\theoremstyle{plain} \newtheorem{defi}{Definition} %\linespread{1.6}
\newtheorem{thm}{Theorem}

%

 % jhzhao
  % jhzhao
 % Ma
  % Ma
% Shang
 % Shang

\def\R{{\mathbb R}}

\def\tr{\mbox{tr}}
\def\vc{\mbox{vec}}

\def\diag{\mbox{diag}}

\def\s2{\sigma^2}
\def\Gam{\mbox{Gam}}
\def\vec{\mbox{vec}}
\def\vech{\mbox{vech}}

%\newcommand\blue[1]{\textcolor[rgb]{0.00,0.00,1.00}{\ref{#1}}}
%\setlength{\oddsidemargin}{-3mm} \setlength{\evensidemargin}{-3mm}%-7mm
%\setlength{\textwidth}{169mm} \setlength{\topmargin}{-10mm} %155mm
%\setlength{\textheight}{235mm} \setlength{\parskip}{\bigskipamount}
%\addtolength{\parindent}{-\parindent}
%\numberwithin{equation}{section}
%\renewcommand{\refname}{\fontsize{10pt}{\baselineskip}\selectfont References}

%\pdfminorversion=4
% NOTE: To produce blinded version, replace "0" with "1" below.
\newcommand{\blind}{0}

% DON'T change margins - should be 1 inch all around.
\addtolength{\oddsidemargin}{-.5in}%
\addtolength{\evensidemargin}{-1in}%
\addtolength{\textwidth}{1in}%
\addtolength{\textheight}{1.7in}%
\addtolength{\topmargin}{-1in}%

\begin{document}
	
	\def\spacingset#1{\renewcommand{\baselinestretch}%
		{#1}\small\normalsize} \spacingset{1}

	\if0\blind
	{
		\title{\bf Robust bilinear factor analysis based on the matrix-variate \emph{t} distribution}
		\author{Xuan Ma, Jianhua Zhao\thanks{The corresponding author. Email: jhzhao.ynu@gmail.com}\hspace{.2cm}, Changchun Shang, Fen Jiang and Philip L.H. Yu \\
			School of Statistics and Mathematics, Yunnan University of Finance and Economics}
		\maketitle
	} \fi

	\if1\blind
	{
		\bigskip
		\bigskip
		\bigskip
		\begin{center}
			{\LARGE\bf Robust bilinear factor analysis based on the matrix-variate \emph{t} distribution}
		\end{center}
		\medskip
	} \fi
	
	\bigskip
	
	\begin{abstract}
		Factor Analysis based on multivariate \emph{t} distribution (\tfa) is a useful robust tool for extracting common factors on heavy-tailed or contaminated data. However, \tfa is only applicable to vector data. When \tfa is applied to matrix data, it is common to first vectorize the matrix observations. This introduces two challenges for \tfa: (i) the inherent matrix structure of the data is broken, and (ii) robustness may be lost, as vectorized matrix data typically results in a high data dimension, which could easily lead to the breakdown of \tfa. To address these issues, starting from the intrinsic matrix structure of matrix data, a novel robust factor analysis model, namely bilinear factor analysis built on the matrix-variate \emph{t} distribution (\tbfa), is proposed in this paper. The novelty is that it is capable to simultaneously extract common factors for both row and column variables of interest on heavy-tailed or contaminated matrix data. Two efficient algorithms for maximum likelihood estimation of \tbfa are developed. Closed-form expression for the Fisher information matrix to calculate the accuracy of parameter estimates are derived. Empirical studies are conducted to understand the proposed \tbfa model and compare with related competitors. The results demonstrate the superiority and practicality of \tbfa. Importantly, \tbfa exhibits a significantly higher breakdown point than \tfa, making it more suitable for matrix data. 
	\end{abstract}
	
	\noindent
	{\it Keywords:} Factor analysis, Matrix data, Robustness, matrix-variate distribution, Expectation Maximization.
	%\vfill
	%
	%\newpage
	\spacingset{1.75} % DON'T change the spacing! 1.75
	
	\section{Introduction}\label{sec:intr} 
	Factor Analysis (FA) is a powerful multivariate analysis method that identifies latent common characteristics (or factors) within a given set of variables, and has been widely used in diverse scientific areas including but not limited to dimension reduction, time series prediction, psychology, social sciences, and economics. 
	However, FA is based on the Gaussian assumption and hence is vulnerable
	to outliers. To overcome this problem, the FA model modeling with multivariate \emph{t} (\tfa) is suggested and has been proven to be effective for robust estimation \citep{zhang2014robust,mclachlan2007extension}. However, FA and \tfa are only applicable to vector-valued or vector data, where observations are vectors. Nowadays, matrix-valued data or matrix data, where observations are matrices, has become increasingly common in our real life. For instance,
	
	\emph{Data 1}. In water research \citep{oyying-water}, observations consist
	of values of variables such as water temperature, water PH value,
	oxygen concentration, nitrogen concentration, etc., measured at 22
	monitoring stations. The data can naturally be represented as a matrix with columns being 24 variables and rows being 22 different
	stations. The sample size of this data is 3. 
	% \begin{itemize}
		% 	\item 
		% \end{itemize}
	
	When FA and \tfa are applied to matrix data such as \emph{Data 1}, it is common practice to first vectorize the matrix observations. This leads to two problems: (i) vectorization destroys the natural matrix structure, merging row and column variables together, making it incapable of capturing common factors among row variables and column variables of interest. Take \emph{Data 1} as an example, FA and \tfa can not extract the interesting common `station' factors and `water quality' factors; and (ii) vectorization often results in a very high dimension, making FA and \tfa easily trapped into the curse of dimensionality. For example, the dimension of the vectorized \emph{Data 1}, $d=528$, is much greater than the sample size of $N=3$. This is the case of the so-called $d>N$ problem, on which FA and \tfa would fail due to the numerical issues \citep{zhao2008-efa}. \tfa suffers one more problem, namely (iii) robustness may be lost, as the breakdown point of \tfa is upper bounded by the inverse of the vectorized data dimension \citep{Dumbgen-bkd} and the higher the dimension, the lower the breakdown point that measures the robustness.  
	
	To address problem (i) and (ii), several matrix-based methods have emerged as promising alternatives to vector-based ones, including probabilistic and non-probabilistic methods. An example for non-probabilistic methods is the matrix factor models (MFM) proposed in \cite{wang2019factor}. Probabilistic methods include matrix-variate principal component analysis (PCA) \citep{james-mvfa}, factored PCA (FPCA), and bilinear probabilistic PCA (BPPCA) \citep{zhao2012-bppca}. Based on BPPCA, \cite{zhao2023-BFA} present a bilinear factor analysis (BFA) model. These methods share a common characteristic: they make use of bilinear transformations with number of parameters typically significantly smaller than those in linear transformations used by vector-based models. As a result, they effectively mitigate high-dimensional problems. 
	
	Despite the advantages of these matrix-based probabilistic methods, they all rely on the assumption of normal distribution and are thereby susceptible to outliers. Recently, in order to improve the robustness of FPCA, \cite{zhao2023-rfpca} propose a robust extension of FPCA (RFPCA) using the matrix-variate \et (\mt) distribution \citep{gupta2013elliptically}. Inspired by RFPCA, in this paper we propose a new robust extension of BFA for matrix data, which is built on the \mt distribution, to simultaneously solve the problems (i), (ii) and (iii) suffered by \emph{t}FA. Throughout this paper, the proposed robust BFA model is referred to as \tbfa. The main contributions of this paper are threefold. 
	
	%\citep{zhao2023-tbppca} propose a robust extension of BPPCA using the matrix-variate \emph{t} distribution. \citep{zhao2023-tbppca} Following a similar strategy as In this paper, to simultaneously solve the problems (i), (ii) and (iii) suffered by \emph{t}FA, we propose a new robust extension of BFA for matrix data, which is built on the matrix-variate \emph{t} distribution . 
	
	(i) Due to the utilization of a bilinear transformation, the proposed \tbfa alleviates greatly the high-dimensional problems and is capable to simultaneously extract common factors for both row and column variables of interest even in the presence of matrix outliers. Moreover, the proposed \tbfa is generally compared favorably with the related methods.
	
	(ii) Two efficient algorithms for ML estimation of \tbfa model parameters are developed and analytical expression for the Fisher information matrix to calculate the accuracy of parameter estimates are also derived. The factor scores in \tbfa are represented as matrices, instead of vectors in \tfa, visual analysis to the matrix factor scores are provided.
	
	(iii) More importantly, while a rigorous theoretical result is not yet available, our empirical results show that \tbfa has a significantly higher breakdown point than its vector-based counterpart \tfa, which makes \tbfa applicable to matrix data. 
	
	% \begin{enumerate}[(i)]
		% 	\item    
		% 	\item 
		% 	\item 
		% \end{enumerate}
	
	The rest of this paper is organized as follows. In \refs{sec:rlt}, we briefly review some related works. In \refs{sec:tbfa.model}, we propose our \tbfa, and develop its maximum likelihood inference in \refs{sec:tbfa.infer}. In \refs{sec:expr}, we conduct a number of experiments on simulated and real-world data. In \refs{sec:conclusion}, we close the paper with some conclusions and discussions.  
	
	%As will be seen in \refs{sec:face.odtc}, the expected latent weights of \tbfa can be readily used for outlier detection, and they are much more reliable than those by \tfa, due to the better robustness.  
	
	\section{Related works}\label{sec:rlt}
	In this section, we briefly review the multivariate-\emph{t} factor analysis (\tfa) \citep{zhang2014robust,mclachlan2007extension} and bilinear factor analysis (BFA) \citep{zhao2023-BFA} models. The following notations are used throughout this paper. $\bI$ and $\bo$ represent identity and zero matrices of suitable dimensions, respectively. The transpose of vector $\bx$ or matrix $\bX$ is denoted by $\bx'$ or $\bX'$, the matrix trace by $\tr(\cdot)$ and the Kronecker product by $\otimes$. The half-vectorization $\vech(\bX)$ is the vector containing unique sub-diagonal elements of matrix $\bX$.
	%and the supplementary material

	%\mbox{linspace}$(a, b, n)$ denotes a vector of $n$ linearly equally spaced points between $a$ and $b$, namely $(a,a+(b-a)/(n-1),\dots,b)'$.
	
	%\subsection{Factor analysis (FA)}\label{sec:fa}
	\subsection{Multivariate-t factor analysis (tFA) }\label{sec:tfa}
	If a $d$-dimensional random vector $\bx$ follows the multivariate \emph{t} distribution with center $\bmu\in \R^{d}$, covariance matrix $\bSig\in \R^{d\times d}$ and degrees of freedom $\nu>0$, denoted by $t_d(\bmu,\bSig,\nu)$, the probability density function (p.d.f.) of $\bx$ is 
	\begin{equation}
		\hskip-1em p(\bx)=\frac{|\bSig|^{-\frac{1}{2}} \Gamma(\frac{\nu+d}{2})}{(\pi \nu)^{\frac{d}{2}}\Gamma(\frac{\nu}{2})}
		\left(1+\frac{\delta_\bx(\btheta)}{\nu}\right)^{-\frac{\nu+d}{2}},\label{eqn:t.pdf}
	\end{equation}
	where $\delta_\bx(\btheta)=(\bx-\bmu)'\bSig^{-1}(\bx-\bmu)$ is the squared Mahalanobis distance between $\bx$ and $\bmu$, and $\nu$ can tune robustness by adaptively controlling the heaviness of tails (kurtosis). The mean $\bmu$ and the covariance matrix $\bSig\nu/(\nu-2)$ exist when $\nu>1$ and $\nu>2$, respectively \citep{liu-tdist}. 
	% \begin{equation}
		% \delta_\bx(\btheta)=(\bx-\bmu)'\bSig^{-1}(\bx-\bmu),\label{eqn:mdist.x}
		% \end{equation}
	
	The multivariate-\et distribution can be described as a hierarchical representation, as shown in \refe{eqn:t.hrc} \citep{liu-tdist}. Specifically, we introduce a latent variable $\tau$ following the Gamma distribution $\Gam(\nu/2, \nu/2)$. Given $\tau$, the conditional distribution of $\bx$ is a multivariate normal distribution $\cN_d(\bmu,\bSig/\tau)$.
	\begin{equation}\label{eqn:t.hrc}
		\bx|\tau\sim\cN_d(\bmu,\bSig/\tau),\,\,	\tau\sim\cG(\nu/2,\nu/2).
	\end{equation}
	
	Suppose the $d$-dimensional data vector $\bx$ follows a $q$-factor multivariate-\et factor analysis (\tfa) model. Based on the hierarchical representation in \refe{eqn:t.hrc}, the latent variable model for \tfa can be expressed as
	\begin{equation}\label{eqn:tfa}
		\left\{\begin{array}{l}
			\bx = \bA\bz + \bmu +  \beps,\quad \mbox{with}\,\,\tau\sim \cG(\nu/2,\nu/2),\\ 
			\bz|\tau \sim \cN_q(\bo,\bI/\tau),\quad  \beps|\tau \sim \cN_d(\bo,\bPsi),
		\end{array}\right.
	\end{equation}
	where $\bz  \in \R^{q}$ is the latent factor vector, $\bA \in \R^{d \times q}$ is the factor loading matrix, $\bmu \in \R^{d}$ is the mean vector, $\bPsi = \diag(\psi_{1}, \psi_{2}, \dots, \psi_{d})$ is a positive diagonal matrix. Assuming $\bH \in \R^{q \times q}$ is an orthogonal matrix, it is easy to see that model \refe{eqn:tfa} remains invariant if we replace $\bA$ by $\bA\bH$. Therefore, the parameter $\bA$ can only be determined up to an orthogonal rotation, and the number of free parameters in \tfa model is $\cD = d(q + 2) - q(q - 1)/2 + 1$. In addition, to avoid over-parameterization, the maximum number of factors $q$ should satisfy the inequality $q_{\text{max}} \leqslant d + (1 - \sqrt{1 + 8d})/2$ \citep{zhao2014-fa-auto}. 
	
	% with each $\psi_{i}$ representing the unique error variance of the $i$-th variable in $\bx$, i.e, $x_i$, $i=1,2, \dots, d$. 

	%The linear space spanned by the columns of $\bA$ is known as the factor loading space, denoted by $\mathcal{M}(\bA)$.  $(\bA\bH,\bH'\bz)$ with $(\bA,\bz)$. This implies that 
	
	%According to the model \refe{eqn:tfa}, we have $\bx \sim t_d(\bmu, \bSig, \nu)$. To avoid over-parameterization, the number of free parameters in $\bSig$, $d(q+1)-q(q-1)/2$, should not exceed that of the unconstrained covariance matrix, which is $d(d+1)/2$. Thus, we have 
	
	\subsection{Bilinear factor analysis (BFA)}\label{sec:bfa}
	The latent variable model of BFA for matrix data introduced in \cite{zhao2023-BFA} is defined as
	\begin{equation}\label{eqn:bfa}
		\left\{\begin{array}{l}
			\bX=\bC \bZ \bR^{\prime}+\bW+\mathbf{C} \beps_r+\beps_c \bR^{\prime}+\beps, \\
			\bZ \sim \cN_{q_{c}, q_{r}}(\bo, \bI, \bI), \quad \beps_r \sim \cN_{q_{c}, d_{r}}(\bo, \bI, \bPsi_r ), \\
			\beps_c \sim \cN_{d_{c}, q_{r}}(\bo, \bPsi_c, \bI), \quad \beps \sim \cN_{d_{c}, d_{r}}(\bo, \bPsi_c, \bPsi_r),
		\end{array}\right.
	\end{equation}
	where $\cN_{q_{c}, q_{r}}$ denotes the $q_c \times q_r$-dimensional matrix-variate normal distribution. $\bW \in \R^{d_c \times d_r}$ is the mean matrix, and $\bC\in\R^{d_c\times q_c}$ and $\bR\in\R^{d_r\times q_r}$ are the column and row factor loading matrices, respectively. $\bZ \in \R^{q_c \times q_r}$ is the latent common factor matrix, and $\beps_c \in \R^{d_c \times q_r}$, $\beps_r \in \R^{q_c \times d_r}$, and $\beps \in \R^{d_c \times d_r}$ represent the column, row, and common error terms, respectively. The positive diagonal matrices $\bPsi_c=\diag(\psi_{c,1},\psi_{c,2},\dots,\psi_{c,d_c})$ and $\bPsi_r=\diag(\psi_{r,1},\psi_{r,2},\dots,\psi_{r,d_r})$. It is assumed that $\bZ$, $\beps_c$, $\beps_r$, and $\beps$ are independent of each other. Notably, BFA model in \refe{eqn:bfa} will degenerate to the classic FA when $d_c =1$ or $d_r=1$.
	
	Like \tfa, the factor loading matrices $\bC$ and $\bR$ in BFA can only be uniquely determined up to orthogonal rotation matrices. Consequently, the numbers of free parameters in $\bC$ and $\bR$ are $d_cq_c-q_c(q_c-1)/2$ and $d_rq_r-q_r(q_r-1)/2$, respectively, and the total number of free parameters in BFA is $\cD=d_c(q_c+1)-q_c(q_c-1)/2 +d_r(q_r+1)-q_r(q_r-1)/2 + d_cd_r$. To avoid over-parameterization, the maximum numbers of column and row factors should satisfy the inequalities $q_{c,max} \leqslant d_c + (1 - \sqrt{1 + 8d_c})/2$ and $q_{r,max} \leqslant d_r + (1 - \sqrt{1 + 8d_r})/2$, respectively. 
	
	%Likewise, the maximum number of row factors should satisfy the degrees of freedom of $\bSig_c$ is $\cD_c=d_c(q_c+1)-q_c(q_c-1)/2$, which cannot exceed that of the unconstrained covariance matrix, i.e., $d_c(d_c+1)/2$. This implies that 

	\section{Robust BFA based on a matrix-variate \emph{t} distribution (\tbfa)}\label{sec:tbfa}
	In this section, based on the \emph{Mt} distribution \citep{gupta2013elliptically}, we propose a novel robust BFA model (\tbfa). In \refs{sec:matrix-t}, we briefly review the \emph{Mt} distribution, based on which, we propose our \tbfa in \refs{sec:tbfa.model}. We discuss the model identifiability issue associated with \tbfa in \refs{sec:tbfa.identi}.
	
	\subsection{Matrix-variate t (Mt) distribution}\label{sec:matrix-t}
	\begin{defi}\label{def.matrix-t}
		Let $\bX\in\R^{d_c\times d_r}$ be a random matrix. It is said that $\bX$ follows a Mt distribution with the mean matrix $\bW\in\R^{d_c\times d_r}$, column and row covariance matrices $\bSig_c\in\R^{d_c\times d_c}$ and $\bSig_r\in\R^{d_r\times d_r}$, and degrees of freedom $\nu$, denoted by $\bX\sim Mt_{d_c,d_r}(\bW, \bSig_c, \bSig_r, \nu)$, if $\vc(\bX)\sim t_{d_cd_r}(\vc(\bW), \bSig_r\otimes\bSig_c,\nu)$. The p.d.f. of $\bX$ is given by
		\begin{equation}\label{eqn:mvt.density}
			p(\bX)=\frac{|\bSig_c|^{-\frac{d_r}{2}}|\bSig_r|^{-\frac{d_c}{2}} \Gamma(\frac{\nu+d_cd_r}{2})}{(\pi \nu)^{\frac{d_cd_r}{2}}\Gamma(\frac{\nu}{2})}\left(1+\frac{1}{\nu}\tr\left\{\bSig_c^{-1}(\bX-\bW)\bSig_r^{-1}(\bX-\bW)'\right\}\right)^{-\frac{\nu+d_cd_r}{2}}.
		\end{equation}
	\end{defi}
	
	$\bX$ can be expressed as a latent variable model. Introduce a latent weight variable $\tau$, which follows a Gamma distribution $\Gam(\nu/2, \nu/2)$. Given $\tau$, $\bX$ follows the matrix-normal distribution $\cN_{d_c,d_r}(\bW,\bSig_c/\tau,\bSig_r)$, that is, 
	\begin{equation}\label{eqn:mvt.hrc}
		\bX|\tau\sim\cN_{d_c,d_r}(\bW,\bSig_c/\tau,\bSig_r),\,\, \tau\sim\Gam(\nu/2,\nu/2).
	\end{equation}
	
	According to \refdef{def.matrix-t}, the \emph{Mt} distribution is essentially a multivariate \emph{t} distribution with a separable covariance structure $\bSig=\bSig_r\otimes \bSig_c$. Consequently, the \emph{Mt} distribution naturally inherits key characteristics of the multivariate \emph{t} distribution. Specifically, the parameter $\nu$ serves to adjust robustness, and the expected value of the latent variable $\tau$ can be used for outlier detection. For detailed explanations and proofs, please refer to \cite{zhao2023-rfpca}.
	
	%Although both multivariate \emph{t} and \emph{Mt} distributions are used for robust modeling, as will be shown in \refs{sec:rbst}, \emph{Mt} exhibits a higher breakdown point compared to multivariate \emph{t}, making it more suitable for matrix data. On the other hand, due to the better robustness of \emph{Mt}, it is more reliable than multivariate \emph{t} in detecting outliers, as demonstrated in \refs{sec:odtc}.
	
	\subsection{Model formulation}\label{sec:tbfa.model}
	Based on the \mt distribution in \refdef{def.matrix-t}, we propose our \tbfa. Its latent variable model is defined as follows. 
	\begin{IEEEeqnarray}{rCl}\label{eqn:tbfa}
		\hskip-1.5em\left\{
		\begin{array}{l}
			\bX=\bC\bZ\bR'+\bW+\bC\beps_r+\beps_c\bR'+\beps,\quad \mbox{with}\\ 
			\bZ|\tau \sim \cN_{q_c,q_r}(\bo,\bI/\tau,\bI),\,\,\beps_r|\tau \sim \cN_{q_c,d_r}(\bo,\bI/\tau,\bPsi_r),\\
			\beps_c|\tau \sim \cN_{d_c,q_r}(\bo,\bPsi_c/\tau,\bI),\,\,\beps|\tau \sim \cN_{d_c,d_r}(\bo,\bPsi_c/\tau,\bPsi_r),\\
			\tau \sim \cG(\nu/2,\nu/2),
		\end{array}
		\right.
	\end{IEEEeqnarray}
	where $\bW\in\R^{d_c\times d_r}$ is the mean matrix, and $\bC\in\R^{d_c\times q_c}$ and $\bR\in\R^{d_r\times q_r}$ are the column and row factor loading matrices, respectively. $\bZ\in\R^{q_c\times q_r}$ represents the latent factor matrix, $\beps_c\in\R^{d_c\times q_r}$, $\beps_r\in\R^{q_c\times d_r}$, and $\beps\in\R^{d_c \times d_r}$ are the column, row, and common error terms, respectively. Given $\tau$, it is assumed that $\bZ$, $\beps_c$, $\beps_r$, and $\beps$ are independent of each other. The positive matrices $\bPsi_c=\diag(\psi_{c,1},\psi_{c,2},\dots,\psi_{c,d_c})$ and $\bPsi_r=\diag(\psi_{r,1},\psi_{r,2},\dots,\psi_{r,d_r})$.
	
	Similar as BFA in \refs{sec:bfa}, the numbers of free parameters for $\bC$ and $\bR$ are given by $d_cq_c-q_c(q_c-1)/2$ and $d_rq_r-q_r(q_r-1)/2$, respectively. In summary, the total number of free parameters for the \emph{t}BFA model is $\cD=d_c(q_c+1)-q_c(q_c-1)/2 +d_r(q_r+1)-q_r(q_r-1)/2 + d_cd_r + 1$. In addition, the maximum numbers of column factors $q_{c,max}, q_{r,max}$ should satisfy $q_{c,max} \leqslant d_c + (1 - \sqrt{1 + 8d_c})/2$, and $q_{r,max} \leqslant d_r + (1 - \sqrt{1 + 8d_r})/2$.
	
	From \refe{eqn:tbfa}, we can obtain $$\bC\bZ\bR'|\tau\sim\cN_{d_c,d_r}(\bo,\bC\bC'/\tau,\bR\bR'),\,\, \bC\beps_r|\tau\sim\cN_{d_c,d_r}(\bo,\bC\bC'/\tau,\bPsi_r),\,\, \beps_c\bR'|\tau\sim\cN_{d_c,d_r}(\bo,\bPsi_c/\tau,\bR\bR').$$
	Consequently, we have
	\begin{IEEEeqnarray}{rCl}
		\hskip-2em\bX|\tau\sim\cN_{d_c,d_r}(\bW,\bSig_c/\tau,\bSig_r),\quad \bX\sim Mt_{d_c,d_r}(\bW,\bSig_c,\bSig_r,\nu),\label{eqn:dist.X}
	\end{IEEEeqnarray}
	where
	\begin{equation}
		\bSig_c=\bC\bC'+\bPsi_c,\quad
		\bSig_r=\bR\bR'+\bPsi_r.\label{eqn:tbfa.Sig}
	\end{equation}
	
	%\subsection{Probability distributions}\label{sec:tbfa.pdf}
	To better understand \tbfa model, following \cite{zhao2012-bppca, zhao2023-BFA}, we introduce two latent matrices, $\bY^{r}$ and $\bY_{\beps}^r$, and rewrite the model in \refe{eqn:tbfa} as \refe{eqn:tbfa.c}. Model \refe{eqn:tbfa.c} can be viewed as a two-stage representation of the model \refe{eqn:tbfa}. From a projection perspective, the data matrix $\bX$ is first projected onto $\bY^r$ along the column direction, followed by the projection of $\bY^r$ and the residual $\bY_{\beps}^r$ onto the latent factors $\bZ$ and error terms $\beps_c$ along the row direction. 
	
	%Similarly, by introducing $\bY_c$ and $\bY_{\beps}^{c}$, the model in \refe{eqn:tbfa} can be equivalently represented as \refe{eqn:tbfa.r}, that is, first project along the row direction, then project along the column direction. \reff{fig:GM} provides a probabilistic graphical model from a generative model perspective for the models in \refe{eqn:tbfa}, \refe{eqn:tbfa.c}, and \refe{eqn:tbfa.r}.
	%\vskip 1mm
	\begin{center}
		\begin{minipage}{0.45\textwidth}
			\begin{equation}\label{eqn:tbfa.c}
				\left\{
				\begin{array}{ll}
					\bX=\bC\bY^r+\bW+\bY^r_\epsilon, & \hbox{} \\
					\bY^r=\bZ\bR'+\beps_r, & \hbox{}\\
					\bY^r_\epsilon=\beps_c\bR'+\beps. & \hbox{}\\
				\end{array}
				\right.
			\end{equation}
		\end{minipage}
		\hfill
		\begin{minipage}{0.45\textwidth}
			\begin{equation}\label{eqn:tbfa.r}
				\left\{
				\begin{array}{ll}
					\bX=\bY^c\bR'+\bW+\bY^c_\epsilon, & \hbox{} \\
					\bY^c=\bC\bZ+\beps_c, & \hbox{}\\
					\bY^c_\epsilon=\bC\beps_r+\beps. & \hbox{}\\
				\end{array}
				\right.
			\end{equation}
		\end{minipage}
	\end{center}
	%\vskip2em
	%\begin{figure*}[tb]
	%	\centering
	%	\subfigure[]{
		%		\scalebox{0.8}[0.8]{\includegraphics{graph/GM1.pdf}}
		%	}
	%	\subfigure[]{
		%		\scalebox{0.8}[0.8]{\includegraphics{graph/GM2.pdf}}
		%	}
	%	\subfigure[]{
		%		\scalebox{0.8}[0.8]{\includegraphics{graph/GM3.pdf}}
		%	}
	%	\caption{Probabilistic graphical models for \tbfa. (a) Original generative model \refe{eqn:tbfa}; (b) Two-stage generative model \refe{eqn:tbfa.c}; (c) Two-stage generative model \refe{eqn:tbfa.r}.}\label{fig:GM}
	%\end{figure*}
	
	The following probability distributions can be obtained from \tbfa model \refe{eqn:tbfa}:
	\begin{IEEEeqnarray}{rCl}
		&&\bY^r|\tau \sim \cN_{q_c,d_r}(\bo,\bI/\tau,\bSig_r), \quad \bY^r_\epsilon|\tau \sim \cN_{d_c,d_r}(\bo,\bPsi_c/\tau,\bSig_r), \nonumber \\
		&&\bY^c|\tau \sim \cN_{d_c,q_r}(\bo,\bSig_c/\tau,\bI), \quad \bY^c_\epsilon|\tau \sim \cN_{d_c,d_r}(\bo,\bSig_c/\tau,\bPsi_r), \nonumber \\
		&&\bZ|\tau,\bY^r \sim \cN_{q_c,q_r}(\bY^r\bPsi_r^{-1}\bR\bM_r^{-1},\bI/\tau,\bM_r^{-1}), \label{eqn:dist.Z.Ytau}\\
		&&\bY^r|\bX,\tau \sim \cN_{q_c,d_r}(\bM_c^{-1}\bC'\bPsi_c^{-1}(\bX-\bW),\bM_c^{-1}/\tau,\bSig_r ), \label{eqn:dist.Yr.Xtau}\\
		&&\bY^c|\bX,\tau \sim \cN_{d_c,q_r}((\bX-\bW)\bPsi_r^{-1}\bR\bM_r^{-1},\bSig_c/\tau,\bM_r^{-1}), \label{eqn:dist.Yc.Xtau}\\
		&&\tau|\bX\sim\cG\left((\nu+d_cd_r)/2,(\nu+\delta_{\bX}(\btheta))/2 \right),\label{eqn:dist.tau.X}	
	\end{IEEEeqnarray}
	where $\bSig_c$ and $\bSig_r$ are given in \refe{eqn:tbfa.Sig} and
	\begin{equation}
		\bM_c = \bC'\bPsi_c^{-1}\bC + \bI,\quad
		\bM_r = \bR'\bPsi_r^{-1}\bR + \bI,\label{eqn:tbfa.Mc.Mr}
	\end{equation}
	\begin{equation}
		\delta_{\bX}(\btheta)=\tr\{\bSig_c^{-1}(\bX-\bW)\bSig_r^{-1}(\bX-\bW)'\}.\label{eqn:mdist.X}
	\end{equation}
	
	\subsection{Model identifiability}\label{sec:tbfa.identi}
	Similar as BPPCA and BFA \citep{zhao2012-bppca,zhao2023-BFA}, there are two parameter indeterminacies in \tbfa. One indeterminacy arises from the rotational invariance in FA models, and the other results from the separable covariance structure of matrix-variate distributions.
	
	(i) The factor loading matrices $\bC$ (reps. $\bR$) in \tbfa can be determined uniquely up to an orthogonal rotation. For example, let $\bP_c$ and $\bP_r$ be arbitrary orthogonal matrices of suitable dimensions. Model \refe{eqn:tbfa} remains invariant if we replace $(\bC,\bZ)$ by $(\bC\bP_c,\bP_c^{'}\bZ)$ and/or replace $(\bZ,\bR)$ by $(\bZ\bP_r,\bR\bP_r)$.
	%with , i.e., the factor space $\mathcal{M}(\bR \otimes \bC)$ of \tbfa can be determined uniquely. %Similarly to FA in \refs{sec:fa}, the factor loading $\bA$ can only be determined up to an orthogonal rotation.
	
	(ii) The parameters $(\bC, \bPsi_c)$ and $(\bR, \bPsi_r)$ can be uniquely determined up to a constant. It is easy to verify that for any $a>0$, model \refe{eqn:tbfa} is invariant if we substitute $(\sqrt{a}\cdot\bC, a\cdot\bPsi_c)$ and $(\bR/\sqrt{a}, \bPsi_r/a)$ into \refe{eqn:tbfa}.
	% caused by the separable covariance structure of the matrix-variate model
	
	To ensure the uniqueness of parameters when investigating the accuracy of estimators in \refs{sec:acc}, for (i), following \cite{fosdick2014-sfa}, we constrain $\bC$ and $\bR$ to lower triangular matrices, i.e., $c_{11}=0$, $r_{11}=0$. For (ii), it is common practice to constrain the first element of $\bPsi_c$ to be 1, i.e.,  $\psi_{c,1}=1$ \citep{srivastava2008}.

	\section{ Maximum likelihood inference of \tbfa}\label{sec:tbfa.infer}
	
	\subsection{Maximum likelihood (ML) estimation of tBFA}\label{sec:tbfa.mle}
	To obtain the ML estimates of the parameters $\btheta=\{\bW,\bC,\bPsi_c,\bR,\bPsi_r,\nu\}$ in \tbfa model, we develop two algorithms in this section. Given a set of observations $\cX=\{\bX_n\}_{n=1}^N$, the log-likelihood function for the observed data (ignoring constant terms) is
	\begin{IEEEeqnarray}{rCl}\label{eqn:tbfa.like}
		\cL(\btheta|\cX)& =& \sum\nolimits_{n=1}^N\ln{\{p(\bX_n)\}} =-\frac12\sum\nolimits_{n=1}^N \left\{(\nu+d_cd_r)\ln{(\nu+\delta_{\bX_n} (\btheta))}+d_r\ln|\bSig_c|\right.\nonumber\\ &&\hskip-0.1em\left.+\>d_c\ln|\bSig_r|\right\}+N\left\{\ln{\Gamma(\frac{\nu+d_cd_r}2)}-\ln{\Gamma(\frac{\nu}2)}+\frac{\nu}2\ln{\nu}\right\}.
		%  & & +\,(\nu+d_cd_r)\ln{(\nu+\delta_{\bX_n}(\btheta))}\big],
	\end{IEEEeqnarray}
	where $\delta_{\bX_n}(\btheta)$ is defined in \refe{eqn:mdist.X}.
	%\nonumber \\
	
	We will use EM-type algorithms to maximize $\cL$ in \refe{eqn:tbfa.like}. It is widely acknowledged that the convergence speed of the EM algorithm depends on the proportion of missing information in complete data, and the basic EM algorithm may converge very slowly \citep{meng-aecm}. For practical applications, developing algorithms with fast convergence or reduced time consumption is crucial. Therefore, we propose two EM-type algorithms. The first is the Expectation Conditional Maximization Either (ECME) algorithm \citep{chuanhai_ecme}, which requires less missing information and is expected to converge faster. The second is the Alternating Expectation Conditional Maximization (AECM) algorithm \citep{meng-aecm}, which requires more missing information but has lower per-iteration complexity.
	
	\subsubsection{The ECME algorithm}\label{sec:ECME}
	ECME algorithm consists of an E-step and four CM-steps. In each CM-step, the maximization is performed only with respect to (w.r.t.) a subset of parameters, while keeping the others fixed. The parameters $\btheta$ are partitioned into $\btheta_1=\bW$, $\btheta_2=(\bC, \bPsi_c)$, $\btheta_3=(\bR, \bPsi_r)$, and $\btheta_4=\nu$.
	
	\noindent
	{\bf E-step:} Considering $\cT=\{\tau_n\}_{n=1}^N$ as missing data, the log-likelihood function of the complete data is $\sum\nolimits_{n=1}^N\ln\left\{p(\bX_n|\tau_n)\right.\\ \left.\cdot\>p(\tau_n)\right\}$, and taking its expectation w.r.t. the conditional distribution $p(\cT|\cX,\btheta)$ yields (ignoring constant terms): 
	%$\sum\nolimits_{n=1}^N\ln{\{p(\bX_n|\tau_n)p(\tau_n)\}}$
	\begin{IEEEeqnarray}{rCl}
		Q_1(\btheta|\cX)&=&N\left\{\frac\nu2\ln{\frac\nu2}-\ln{\Gamma(\frac\nu2)}-\frac{d_r}2\ln|\bSig_c|-\frac{d_c}2\ln|\bSig_r|\right\}\nonumber\\
		&&\hskip-2em+\>\sum\nolimits_{n=1}^N\left\{\frac\nu2(\bbE[\ln{\tau_n|\bX_n}]-\bbE[\tau_n|\bX_n])-\bbE[\tau_n|\bX_n]\delta_{\bX_n}(\btheta)\right\},\label{eqn:tbfa.Q}
	\end{IEEEeqnarray}
	where
	\begin{equation}
		\tilde{\tau}_n=\bbE[\tau_n|\bX_n]= \frac{\nu+d_cd_r}{\nu+\delta_{\bX_n}(\btheta)},\quad \bbE[\ln\tau_n|\bX_n]= \psi(\frac{\nu+d_cd_r}{2})-\ln(\frac{\nu+\delta_{\bX_n}(\btheta)}{2}),\label{eqn:Etau.X}
	\end{equation}
	and $\psi(x)=d\ln(\Gamma(x))/dx$ is the digamma function.
	% \begin{IEEEeqnarray}{rCl}
		% 	\tilde{\tau}_n&=&\bbE[\tau_n|\bX_n]= \frac{\nu+d_cd_r}{\nu+\delta_{\bX_n}(\btheta)},\label{eqn:Etau.X}\\
		% 	\bbE[\ln\tau_n|\bX_n]&=& \psi(\frac{\nu+d_cd_r}{2})-\ln(\frac{\nu+\delta_{\bX_n}(\btheta)}{2}),\label{eqn:Elntau.X}
		% \end{IEEEeqnarray}

	\noindent
	{\bf CM-step 1:} Given $(\btheta_2,\btheta_3,\btheta_4)$, maximizing $Q_1$ in \refe{eqn:tbfa.Q} w.r.t. $\btheta_1=\bW$ results in
	\begin{equation}
		\btW=\frac1{\sum\nolimits_{n=1}^N\tilde{\tau}_n}\sum\nolimits_{n=1}^N\tilde{\tau}_n\bX_n. \label{eqn:ecme.W}
	\end{equation}
	
	\noindent
	{\bf CM-step 2:} With $(\bttheta_1,\btheta_3,\btheta_4)$ fixed, maximize $Q_1$ in \refe{eqn:tbfa.Q} w.r.t. $\btheta_2$. This leads to the simplification of $Q_1$ to
	\begin{equation}\label{eqn:tbfa.Qc}
		Q_{1c}(\btheta_2|\cX)=-\frac{Nd_r}{2}\left\{\ln|\bSig_c| +\tr(\bSig_c^{-1}\bS_c)\right\},
	\end{equation}
	where the robust sample column covariance matrix is 
	\begin{IEEEeqnarray}{rCl}
		\bS_c &=& \frac{1}{Nd_r}\sum\nolimits_{n=1}^N \tilde{\tau}_n (\bX_n-\btW)\bSig_r^{-1}(\bX_n-\btW)'.\label{eqn:tbfa.Sc}
	\end{IEEEeqnarray}
	Let the normalized column covariance matrix be
	\begin{equation}\label{eqn:tbfa.Scn}
		\btS_c = \bPsi_c^{-\frac{1}{2}} \bS_c \bPsi_c^{-\frac{1}{2}}.
	\end{equation}	        
	Given $\bPsi_c$, the maximization of $Q_{1c}$ w.r.t. the parameters $\bC$ yields
	\begin{equation}\label{eqn:ecme.C}
		\btC = \bPsi_c^{\frac{1}{2}} \bU_{q_c'} (\bLmd_{q_c'}-\bI)^{\frac{1}{2}}\bV_c,
	\end{equation}
	where $\bU_{q_c'} = (\bu_{c,1},\bu_{c,2}, \dots, \bu_{c,q_c'})$, $\bLmd_{q_c'} = \diag(\lambda_{c,1}, \lambda_{c,2}, \dots, \lambda_{c,q_c'})$, and $\bV_c \in \R^{q_c' \times q_c}$ is an orthogonal matrix satisfying $\bV_c \bV_c'=\bI$. The pairs $(\lambda_{c,i}, \bu_{c,i})$'s represent eigenvalue-eigenvector pairs of $\btS_c$ in \refe{eqn:tbfa.Scn}, arranged in decreasing order, $\lambda_{c,1} \geqslant \lambda_{c,2} \geqslant \dots \geqslant \lambda_{c,d_c}$. The optimal $q_c'$ is determined as follows: if $\lambda_{c,q_c} \geq 1$, then $q_c' = q_c$; otherwise, $q_c'$ is the unique integer satisfying $\lambda_{c,q_c'} \geq 1 \geq \lambda_{c,q_c'+1}$. 
	
	Given $\btC$, maximize $Q_{1c}$ w.r.t. the parameters $\bPsi_c$. \cite{zhao2008-efa} propose updating each diagonal element of $\bPsi_c$ sequentially. The advantage of this approach is that each entry has a closed-form update. Let $\bPsi_{c,i} = \diag(\tpsi_{c,1}, \dots,\tpsi_{c,i-1}, \psi_{c,i},\psi_{c,i+1}, \dots, \psi_{c,d_c})$. By assuming that $\bPsi_c$ is positive definite, we can choose an arbitrary small number $\eta>0$ and assume $\tpsi_{c,i} \geqslant \eta$. Let $\bC^{*} = \bPsi_c^{-1/2}\btC$, and $\be_{i}$ be the $i$-th column of the identity matrix $\bI_{d_c}$,
	\begin{equation} \label{eqn:B}
		\bB_{i} = \sum_{k=1}^{i-1} \tilde{\omega}_{k}\be_{k}\be_{k}^{\prime} + \bI + \bC^{*}\bC^{*\prime},
	\end{equation}
	where $\tilde{\omega}_i$ in \refe{eqn:B} can be obtained by
	\begin{equation} \label{eqn:omega}
		\begin{array}{l}
			\tilde{\omega}_i = \tpsi_{c,i}/\psi_{c,i} - 1.
		\end{array}
	\end{equation}
	Suppose $\bb_{k}$ is the $k$-th column vector of $\bB_{i}^{-1}$, $b_{kk}$ represents the $(k,k)$-th element of $\bB_{i}^{-1}$. $\tpsi_{c,i}$ is then given by
	\begin{equation} \label{eqn:psi_i}
		\tpsi_{c,i} = \max\left\{\left[b_{ii}^{-2}\left(\bb_i'\bS_c\bb_i - b_{ii}\right)+1\right]\psi_{c,i},\eta\right\}.
	\end{equation}
	By \refe{eqn:omega} and \refe{eqn:psi_i}, we know $\tpsi_{c,k} \geqslant \eta$, $\tilde{\omega}_{k} \geq -1,k=1,\dots,i-1$. Thus, $\bB_{i}$ is invertible and $\tpsi_{c,i}$ in \refe{eqn:psi_i} can always be computed. A pseudocode for the update of $\bPsi_c$ is available in \cite{zhao2014-fa-auto}. 
	% Algorithm 1 for update  co See \refa{alg:update_psi} in \ref{sec:upPsi} for details.
	
	\noindent
	{\bf CM-step 3:} Given $(\bttheta_1,\bttheta_2,\btheta_4)$, maximize $Q_1$ in \refe{eqn:tbfa.Q} w.r.t. $\btheta_3$. $Q_1$ can be simplified to
	\begin{equation}\label{eqn:tbfa.Qr}
		Q_{1r}(\btheta_3)=-\frac{Nd_c}{2}\left\{\ln |\bSig_r| +\tr\left(\bSig_r^{-1}\bS_r\right)\right\},
	\end{equation}
	where the robust sample row covariance matrix is
	\begin{equation}\label{eqn:tbfa.Sr}
		\bS_r = \frac{1}{Nd_c}\sum\nolimits_{n=1}^N\tilde{\tau}_n (\bX_n-\btW)'\btSig_c^{-1}(\bX_n-\btW).
	\end{equation}
	Let the normalized row covariance matrix be 
	\begin{equation}\label{eqn:tbfa.Srn}
		\btS_r=\bPsi_r^{-1/2} \bS_r \bPsi_r^{-1/2}.
	\end{equation}
	Given $\bPsi_r$, maximize $Q_{1r}$ w.r.t. $\bR$, yielding 
	\begin{equation}\label{eqn:ecme.R}
		\btR = \bPsi_r^{\frac{1}{2}} \bU_{q_r^{'}} (\bLmd_{q_r^{'}}-\bI)^{\frac{1}{2}}\bV_r,
	\end{equation}	
	where $\bU_{q_r'}$, $\bLmd_{q_r'}$, and $\bV_r$ are defined similarly to the counterparts of CM-step 2, but computed using $\bS_r$ from \refe{eqn:tbfa.Srn}.
	
	%Given $\btR$, similar to CM-step 2, we can obtain $\btPsi_r$.
	
	\noindent
	{\bf CM-step 4:} Given $(\bttheta_1,\bttheta_2,\bttheta_3)$, maximize the likelihood $\cL$ in \refe{eqn:tbfa.like} w.r.t. $\btheta_4=\nu$. Set the derivative of $\cL$ w.r.t. $\nu$ to zero, i.e., $\cL'(\nu)=0$, yielding the following equation:
	\begin{IEEEeqnarray}{rCl}
		\hskip-2em\cL'(\nu)=-\psi(\frac{\nu}2)+\ln\frac{\nu}2+1+\psi(\frac{\nu+d_cd_r}2)-\ln{(\frac{\nu+d_cd_r}2)}+\frac1N\sum\nolimits_{n=1}^N\left[\ln{\left(\frac{\nu+d_cd_r}{\nu+\tilde{\delta}_{\bX_n}}\right)}-\frac{\nu+d_cd_r}{\nu+\tilde{\delta}_{\bX_n}}\right]=0,\label{eqn:ecme.nu}
	\end{IEEEeqnarray}
	% \begin{IEEEeqnarray}{rCl}
		% 	&&\hskip-1em\cL'(\nu)=-\>\psi(\frac{\nu}2)+\ln\frac{\nu}2+1+\psi(\frac{\nu+d_cd_r}2)-\ln{(\frac{\nu+d_cd_r}2)}\nonumber\\
		% 	&&\hskip1em+\>\frac1N\sum\nolimits_{n=1}^N\left[\ln{\left(\frac{\nu+d_cd_r}{\nu+\tilde{\delta}_{\bX_n}}\right)}-\frac{\nu+d_cd_r}{\nu+\tilde{\delta}_{\bX_n}}\right]=0,\label{eqn:ecme.nu}
		% \end{IEEEeqnarray}
	where $\tilde{\delta}_{\bX_n}=\delta_{\bX_n}(\bttheta_1,\bttheta_2,\bttheta_3)$, and $\delta_{\bX_n}(\btheta)$ is defined in \refe{eqn:mdist.X}. Solving equation \refe{eqn:ecme.nu} gives $\tilde{\nu}$. This is only a one-dimensional optimization problem and can be solved by the bisection method as suggested in \cite{liu-tdist}. 
	% For clarity, the whole algorithm is summarized in \refa{alg:ecme}. 
	% \begin{algorithm}[htb]
		% 	\caption{ECME (resp. PX-ECME) algorithm for \tbfa.}
		% 	\label{alg:ecme}
		% 	\begin{algorithmic}[1]
			% 		\REQUIRE Data $\cX$, subspace dimensions $(q_c,q_r)$, $\eta$ and initialization of $(\bC, \bR,\bPsi_r,\bPsi_c,\nu)$. 
			% 		\REPEAT
			% 		\STATE \emph{E-step:} Compute the conditional expectation $\tilde{\tau}_n=\bbE[\tau_n|\bX_n]$ by \refe{eqn:Etau.X}.
			% 		\STATE \emph{CM-step 1:} Update $\btW$ by \refe{eqn:ecme.W}.
			% 		\STATE \emph{CM-step 2:} Compute $\bS_c$ (resp. $\bS_c^{*}$) by \refe{eqn:tbfa.Sc} (resp. \refe{eqn:tbfa.Sc.x}). Update $\btC$ and $\btPsi_c$ by \refe{eqn:ecme.C} and \refa{alg:update_psi}.\label{alg:ecme.line.C}	
			% 		\STATE \emph{CM-step 3:} Compute $\bS_r$ (resp. $\bS_r^{*}$) via \refe{eqn:tbfa.Sr} (resp. \refe{eqn:tbfa.Sr.x}). Update $\btR$ and $\btPsi_r$ by \refe{eqn:ecme.R} and \refa{alg:update_psi}. \label{alg:ecme.line.R}
			% 		\STATE \emph{CM-step 4:} Update $\tilde{\nu}$ by solving equation \refe{eqn:ecme.nu}.	
			% 		\UNTIL{change of $\cL$ is smaller than a threshold.}
			% 		\ENSURE $\bttheta=(\btW,\btC,\btR,\btPsi_c,\btPsi_r,\tilde{\nu})$.
			% 	\end{algorithmic}
		% \end{algorithm}
	
	\subsubsection{The AECM algorithm}\label{sec:AECM}
	%The AECM algorithm for \emph{t}BFA consists of three cycles, each with its own E-step and  CM-step. The parameters $\btheta$ of \emph{t}BFA are partitioned into $\btheta_1=(\bW,\nu)$, $\btheta_2=(\bC, \bPsi_c)$, and $\btheta_3=(\bR, \bPsi_r)$.
	In this subsection, we develop the AECM algorithm for \tbfa, which consists of multiple cycles within a single iteration and each cycle has its own E-step and CM-step. We partition the parameters to be estimated into $\btheta_1=(\bW,\nu)$, $\btheta_2=(\bC, \bPsi_c)$, and $\btheta_3=(\bR, \bPsi_r)$. The AECM algorithm for \tbfa comprises three cycles, with the following details.
	
	%as a flexible and powerful variant of the traditional EM algorithm, 
	
	1) {\bf Cycle 1:} In the E-step, $(\cX, \cT)=\{\bX_n, \tau_n\}_{n=1}^N$ is treated as complete data. In the CM-step, with $\btheta_2$ and $\btheta_3$ given, maximize the expected log-likelihood of complete data w.r.t. $\btheta_1$. 
	
	\noindent
	{\bf E-step 1:} It is the same as the E-step in the ECME algorithm.   
	
	\noindent
	{\bf CM-Step 1:} Given $\btheta_2$ and $\btheta_3$, maximizing $Q_1$ in \refe{eqn:tbfa.Q} w.r.t. $\btheta_1$ results in $\btW$ in \refe{eqn:ecme.W}, and it leads to the solution for $\tilde{\nu}$ in the following equation \refe{eqn:aecm.nu}.
	\begin{equation}
		-\psi(\frac{\nu}2)+\ln\frac{\nu}2+1+\psi(\frac{\nu+d_cd_r}2)-\ln{(\frac{\nu+d_cd_r}2)} +\frac1N\sum\nolimits_{n=1}^N\left(\ln{\tilde{\tau}_n}-\tilde{\tau}_n\right)=0,\label{eqn:aecm.nu}
	\end{equation} 
	where $\tilde{\tau}_n$ is computed using \refe{eqn:Etau.X}.
	
	2) {\bf Cycle 2:} The E-step treats $(\cX, \cY^r, \cT)=\{\bX_n, \bY_n^r,\tau_n\}_{n=1}^N$ as the complete data set, and the CM-step, with $\bttheta_1$ and $\btheta_3$ given, maximizes the expected complete data log-likelihood w.r.t. $\btheta_2$. 
	
	\noindent
	{\bf E-step 2:} The log-likelihood of the complete data is 
	\begin{equation*}\label{eqn:aecm.like.com.c}
		\cL_c^c(\btheta_2|\cX,
		\cY^r, \cT)=\sum\nolimits_{n=1}^N\ln\{p(\bX_n|\bY^r_n,\tau_n)p(\bY^r_n|\tau_n)p(\tau_n)\}.
	\end{equation*}
	Given $\bttheta_1, \btheta_2, \btheta_3$, calculate the expectation of $\cL_c^c$ w.r.t. the distribution $p(\cY^r,\cT|\cX;\bttheta_1,\btheta_2,\btheta_3)$, yielding
	%$p(\cY^r|\cX,\cT;\bttheta_1,\btheta_2,\btheta_3)p(\cT|\cX;\btheta)$ 
	\begin{equation}
		Q_c(\btheta_2)=-\frac{1}{2}\sum\nolimits_{n=1}^N \left\{d_r \ln|\bPsi_c|+\tr\left\{\bbE\big[\tau_n\bPsi_c^{-1}(\bX_n-\bC\bY_n^r-\btW)\bSig_r^{-1}(\bX_n-\bC\bY_n^r-\btW)'|\bX_n\big]\right\}\right\}.\label{eqn:aecm.Qc}
	\end{equation}
	By \refe{eqn:dist.Yr.Xtau} and \refe{eqn:dist.tau.X}, we obtain 
	\begin{IEEEeqnarray}{rCl}
		\bbE[\tau_n\bY_n^{r}|\bX_n]&=&\bbE[\tau_n\bbE[\bY_n^r|\bX_n,\tau_n]|\bX_n]=\tilde{\tau}_n\bbE[\bY_n^r|\bX_n]=\tilde{\tau}_n\bM_c^{-1}\bC'\bPsi_c^{-1}(\bX_n-\btW),\label{eqn:EtauYr}\\
		\bbE[\tau_n\bY_n^r\bSig_r^{-1}{\bY_n^{r}}'|\bX_n]&=&\bbE[\tau_n\bbE(\bY_n^r\bSig_r^{-1}{\bY_n^{r}}'|\bX_n,\tau_n)|\bX_n]=d_r\bM_c^{-1}+\tilde{\tau}_n\bbE[\bY_n^r|\bX_n]\bSig_r^{-1}\bbE[{\bY_n^{r}}'|\bX_n],\label{eqn:EtauYrYr'}
	\end{IEEEeqnarray}
	where the computation of $\tilde{\tau}_n$ is based on the formula in \refe{eqn:Etau.X}, but with the substitution of $\btheta_1$ by $\bttheta_1$.
	%where $\tilde{\tau}_n$ is given by \refe{eqn:Etau.X}, \co{but with the substitution of $\btheta_1$ by $\bttheta_1$.}

	\noindent
	{\bf CM-step 2:} Given $\bttheta_1$ and $\btheta_3$, maximize $Q_c$ in \refe{eqn:aecm.Qc} w.r.t. $\btheta_2$ yields
	\begin{IEEEeqnarray}{rCl}
		\btC &=& \left\{\sum\nolimits_{n=1}^N(\bX_n-\btW)\bSig_r^{-1} \bbE[\tau_n{\bY_n^r}'|\bX_n] \right\} \left\{\sum\nolimits_{n=1}^N\bbE[\tau_n\bY_n^r \bSig_r^{-1} {\bY_n^r}'|\bX_n] \right\}^{-1}\label{eqn:aecm.C}\\
		\btPsi_c &=& \frac{1}{N d_r} \diag\left\{\sum\nolimits_{n=1}^N\tilde{\tau}_n(\bX_n-\btW)\bSig_r^{-1}
		(\bX_n-\btW)'- (\bX_n-\btW)\bSig_r^{-1}\bbE[\tau_n{\bY_n^r}'|\bX_n]\btC'\right\}.\label{eqn:aecm.Psic}
	\end{IEEEeqnarray}
	where $\bS_c$ is given by \refe{eqn:tbfa.Sc}.
	% \begin{IEEEeqnarray}{rCl}
		% 	\btC &=& \left\{\sum\nolimits_{n=1}^N(\bX_n-\btW)\bSig_r^{-1} \bbE[\tau_n{\bY_n^r}'|\bX_n] \right\} \left\{\sum\nolimits_{n=1}^N\bbE[\tau_n\bY_n^r \bSig_r^{-1} {\bY_n^r}'|\bX_n] \right\}^{-1}\nonumber\\
		% 	&=&\bS_c\bPsi_c^{-1}\bC\bM_c^{-1}\{\bM_c^{-1}+\bM_c^{-1}\bC'\bPsi_c^{-1}\bS_c\bPsi_c^{-1}\bC\bM_c^{-1}\}^{-1},\label{eqn:aecm.C}\\
		% 	\btPsi_c &=& \frac{1}{N d_r} \diag\left\{\sum\nolimits_{n=1}^N\tilde{\tau}_n(\bX_n-\btW)\bSig_r^{-1}
		% 	(\bX_n-\btW)'- (\bX_n-\btW)\bSig_r^{-1}\bbE[\tau_n{\bY_n^r}'|\bX_n]\btC'\right\}\nonumber\\
		% 	&=&\diag\left\{\bS_c-\bS_c\bPsi_c^{-1}\bC\bM_c^{-1}\btC'\right\}.\label{eqn:aecm.Psic}
		% \end{IEEEeqnarray}

	3) {\bf Cycle 3:} In the E-step, consider $(\cX, \cY^c, \cT)=\{\bX_n, \bY_n^c,\tau_n\}_{n=1}^N$ as the complete data. In the CM-step, given $\bttheta_1$ and $\bttheta_2$, maximize the expected complete data likelihood w.r.t. $\btheta_3$.
	
	{\bf E-step 3:} The log-likelihood of the complete data is
	\begin{equation*}\label{eqn:aecm.like.com.r}
		\cL_r^c(\btheta_3|\cX,
		\cY^c,\cT)=\sum\nolimits_{n=1}^N\ln{\{p(\bX_n|\bY^c_n,\tau_n)p(\bY^c_n|\tau_n)p(\tau_n)\}}.
	\end{equation*}
	Given $\bttheta_1, \btheta_2, \btheta_3$, calculate the expectation of $\cL_r^c$ w.r.t. the distribution $p(\cY^c,\cT|\cX;\bttheta_1,\bttheta_2,\btheta_3)$ (ignoring constant terms):      
	%$p(\cY^c|\cX,\cT;\bttheta_1,\bttheta_2,\btheta_3)p(\cT|\cX;\btheta)$
	\begin{equation}
		Q_r(\btheta_3)=-\frac{1}{2}\sum\nolimits_{n=1}^N \left\{d_c \ln|\bPsi_r|+\tr\left\{\bbE\big[\tau_n\btSig_c^{-1}(\bX_n-\bY_n^c\bR'-\btW)\bPsi_r^{-1}(\bX_n-\bY_n^c\bR'-\btW)'|\bX_n\big]\right\}\right\}.\label{eqn:aecm.Qr}
	\end{equation}
	By \refe{eqn:dist.Yc.Xtau} and \refe{eqn:dist.tau.X}, the required expectations are obtained by
	\begin{IEEEeqnarray}{rCl}
		\bbE[\tau_n\bY_n^c|\bX_n]&=&\bbE[\tau_n\bbE[\bY_n^c|\bX_n,\tau_n]|\bX_n]=\tilde{\tau}_n\bbE[\bY_n^c|\bX_n]=\tilde{\tau}_n(\bX_n-\btW)\bPsi_r^{-1}\bR\bM_r^{-1},\label{eqn:EtauYc}\\
		\bbE[\tau_n{\bY_n^c}'\btSig_c^{-1}\bY_n^c|\bX_n]&=&\bbE[\tau_n\bbE({\bY_n^c}'\btSig_c^{-1}\bY_n^c|\bX_n,\tau_n)|\bX_n]=d_c\bM_r^{-1}+\tilde{\tau}_n\bbE[{\bY_n^c}'|\bX_n]\btSig_c^{-1}\bbE[\bY_n^c|\bX_n],\label{eqn:EtauYc'Yc}
	\end{IEEEeqnarray}
	where the computation of $\tilde{\tau}_n$ is based on the formula in \refe{eqn:Etau.X}, but with the substitution of $\btheta_1$ and $\btheta_2$ by $\bttheta_1$ and $\bttheta_2$, respectively.
	%where $\tilde{\tau}_n$ is given by \refe{eqn:Etau.X}.
	
	\noindent
	{\bf CM-step 3:} Given $\bttheta_1$ and $\bttheta_2$, maximizing $Q_r$ in \refe{eqn:aecm.Qr} w.r.t. $\btheta_3$ yields
	\begin{IEEEeqnarray}{rCl}
		\btR&=&\left\{\sum\nolimits_{n=1}^N(\bX_n-\btW)'\btSig_c^{-1}\bbE[\tau_n\bY_n^c|\bX_n]\right\}\left\{\sum\nolimits_{n=1}^N\bbE[\tau_n{\bY_n^c}'\btSig_c^{-1}\bY_n^c|\bX_n]\right\}^{-1},\label{eqn:aecm.R}\\
		\btPsi_r&= &\frac{1}{Nd_c}\diag\left\{
		\sum\nolimits_{n=1}^N
		\tilde{\tau}_n(\bX_n-\btW)'\btSig_c^{-1}(\bX_n-\btW)-(\bX_n-\btW)'\btSig_c^{-1}\bbE[\tau_n\bY_n^c|\bX_n]\btR'\right\}.\label{eqn:aecm.Psir}
	\end{IEEEeqnarray}

	\subsubsection{Acceleration via Parameter Expansion}\label{sec:PX-alg}
	Due to the slow convergence of traditional EM-type algorithms for multivariate \emph{t} distributions, \cite{liu-PXEM} proposed a parameter expansion technique to accelerate them. This technique introduces auxiliary parameters to extend the original model while keeping the observed data model unchanged, and then develops EM-type algorithms on the extended complete data model. In this section, we apply this technique to \tbfa, and develop parameter-expanded ECME (PX-ECME) and PX-AECME algorithms to accelerate their parent ECME in \refs{sec:ECME} and AECM in \refs{sec:AECM}.
	
	After introducing the auxiliary parameter $\alpha$, the extended model is represented as follows:
	\begin{equation}\label{eqn:tbfa.x}
		\hskip-1.2em\left\{
		\begin{array}{l}
			\bX=\bC_\star\bZ{\bR'}_\star+\bW_\star+\bC_\star\beps_r+\beps_c{\bR'}_\star+\beps,\\ 
			\bZ|\tau \sim \cN_{q_c,q_r}(\bo,\bI/\tau,\bI),\beps_c|\tau \sim \cN_{d_c,q_r}(\bo,\bPsi_{c\star}/\tau,\bI),\\
			%\\
			\beps_r|\tau \sim \cN_{q_c,d_r}(\bo,\bI/\tau,\bPsi_{r\star}),\\ %\vspace{1ex}
			\beps|\tau \sim \cN_{d_c,d_r}(\bo,\bPsi_{c\star}/\tau,\bPsi_{r\star}),\,\,\tau\sim\alpha\cG(\nu_\star/2,\nu_\star/2),
		\end{array}
		\right.
	\end{equation}
	where the parameters are marked with subscript ${}_\star$ to distinguish them from the parameters in the original model \refe{eqn:tbfa}.
	
	The E-step(s) and CM-step(s) in PX-ECME and PX-AECM are similar to those in ECME and AECM, but they are implemented on the extended parameters $(\btheta_\star,\alpha)$. Under the extended model \refe{eqn:tbfa.x}, we obtain
	\begin{IEEEeqnarray}{rCl}
		\bX|\tau&\sim&\cN_{d_c,d_r}(\bW_\star,\bSig_{c\star}/\tau,\bSig_{r\star}), \quad \bX\sim t(\bW_\star,\bSig_{c\star}/\alpha,\bSig_{r\star},\nu_\star),\label{eqn:dist.X.x}\\
		\tau|\bX&\sim&\cG\left(\frac{\nu_\star+d_cd_r}{2},\frac{\nu_\star+\delta_{\bX}(\btheta_\star)}{2\alpha}\right),\label{eqn:dist.tau.X.x}
	\end{IEEEeqnarray}
	where
	\begin{equation}
		\delta_{\bX}(\btheta_\star) = \tr\{(\bSig_{c\star}/\alpha)^{-1}(\bX-\bW_\star)\bSig_{r\star}^{-1}(\bX-\bW_\star)'\}.\label{eqn:mdist.X.x}
	\end{equation}
	By \refe{eqn:dist.X.x}, $\bSig_{c\star}/\alpha$ corresponds to $\bSig_c$, and $(\bW_\star,\bSig_{r\star},\nu_\star)$ corresponds to $(\bW, \bSig_r,\nu)$. Moreover, since $\bSig_{c\star}=\bC_\star\bC_\star^{'}+\bPsi_{c\star}$, we establish the mappings $\bC_\star/\sqrt{\alpha} \leftrightarrow \bC$ and $\bPsi_{c\star}/\alpha \leftrightarrow \bPsi_c$. Through this mapping, the extended parameters $(\btheta_\star,\alpha)$ can be reduced back to the original parameters $\btheta$ at the end of each iteration.
	
	For the PX-ECME algorithm, we observe that, compared to the ECME algorithm, its difference lies in replacing $\bS_c$ in \refe{eqn:tbfa.Sc} and $\bS_r$ in \refe{eqn:tbfa.Sr} with $\bS_c^{*}$ in \refe{eqn:tbfa.Sc.x} and $\bS_r^{*}$ in \refe{eqn:tbfa.Sr.x}. %, and the detailed proof is provided in the supplementary material S1.1.
	\begin{IEEEeqnarray}{rCl}
		\bS_c^{*}&=&\frac{1}{d_r \sum\nolimits_{n=1}^N\tilde{\tau}_n}\sum\nolimits_{n=1}^N\tilde{\tau}_n (\bX_n-\btW)\bSig_r^{-1}(\bX_n-\btW)' =\bS_{c}N/\sum\nolimits_{n=1}^N\tilde{\tau}_n,\label{eqn:tbfa.Sc.x}\\
		\bS_r^{*}&=&\frac{1}{d_c \sum\nolimits_{n=1}^N\tilde{\tau}_n}\sum\nolimits_{n=1}^N\tilde{\tau}_n (\bX_n-\btW)'\btSig_c^{-1}(\bX_n-\btW)=\bS_{r}N/\sum\nolimits_{n=1}^N\tilde{\tau}_n. \label{eqn:tbfa.Sr.x}
	\end{IEEEeqnarray}
	% \begin{IEEEeqnarray}{rCl}
		% 	\bS_c^{*}&=&\frac{1}{d_r \sum\nolimits_{n=1}^N\tilde{\tau}_n}\sum\nolimits_{n=1}^N\tilde{\tau}_n (\bX_n-\btW)\bSig_r^{-1}(\bX_n-\btW)' \label{eqn:tbfa.Sc.x}\\
		% 	&=&\bS_{c}N/\sum\nolimits_{n=1}^N\tilde{\tau}_n,\nonumber\\
		% 	\bS_r^{*}&=&\frac{1}{d_c \sum\nolimits_{n=1}^N\tilde{\tau}_n}\sum\nolimits_{n=1}^N\tilde{\tau}_n (\bX_n-\btW)'\btSig_c^{-1}(\bX_n-\btW) \label{eqn:tbfa.Sr.x}\\
		% 	&=&\bS_{r}N/\sum\nolimits_{n=1}^N\tilde{\tau}_n.\nonumber
		% \end{IEEEeqnarray}
	
	For the PX-AECM algorithm, by model \refe{eqn:tbfa.x}, it can be implemented as follows:
	
	\noindent 1) {\bf Cycle 1:} 
	
	\noindent
	{\bf PXE-step 1:} Compute the required expectations:
	\begin{IEEEeqnarray}{rCl}
		\tilde{\tau}_{n\star}&=&\bbE_\star[\tau_n|\bX_n]=\alpha\frac{\nu_\star+d_cd_r}{\nu_\star+\delta_{\bX_n}(\btheta_\star)}.\label{eqn:Etau.X.x}\\
		\bbE_\star[\ln\tau_n|\bX_n]&=& \psi(\frac{\nu_\star+d_cd_r}{2})-\ln(\frac{\nu_\star+\delta_{\bX_n}(\btheta_\star)}{2\alpha}).\nonumber%\label{eqn:Elntau.X.x}
	\end{IEEEeqnarray}
	
	\noindent
	{\bf PXCM-step 1:} The updates for $\tilde{\alpha}$ and $\btW_\star$ are as follows:
	\begin{equation}
		\tilde{\alpha}=\sum\nolimits_{n=1}^N\tilde{\tau}_{n\star}/N, \quad \btW_\star=\frac1{\sum\nolimits_{n=1}^N\tilde{\tau}_{n\star}}\sum\nolimits_{n=1}^N\tilde{\tau}_{n\star}\bX_n. \label{eqn:W_star}
	\end{equation}
	% \begin{IEEEeqnarray}{rCl}
		% \tilde{\alpha}&=&\sum\nolimits_{n=1}^N\tilde{\tau}_{n\star}/N,\nonumber\\%\label{eqn:px.alpha}
		% \btW_\star&=&\frac1{\sum\nolimits_{n=1}^N\tilde{\tau}_{n\star}}\sum\nolimits_{n=1}^N\tilde{\tau}_{n\star}\bX_n. \label{eqn:W_star}
		% \end{IEEEeqnarray}
	The update for $\tilde{\nu_\star}$ is obtained by solving the following equation:
	\begin{equation}
		-\psi(\frac{\nu_\star}2)+\ln(\frac{\nu_\star}{2\tilde{\alpha}})+1+\frac1N\sum\nolimits_{n=1}^N\left(\bbE_\star[\ln\tau_n|\bX_n]-\tilde{\tau}_{n\star}/\tilde{\alpha}\right)=0. \label{eqn:nu_star}
	\end{equation}
	
	\noindent 2) {\bf Cycle 2:}
	
	\noindent
	{\bf PXE-step 2:} Compute the required expectations:
	\begin{IEEEeqnarray}{rCl}
		\bbE_\star[\tau_n\bY_n^{r}|\bX_n]&=&\bbE[\tau_n\bbE[\bY_n^r|\bX_n,\tau_n]|\bX_n]=\tilde{\tau}_{n\star}\bbE[\bY_n^r|\bX_n]=\tilde{\tau}_{n\star}\bM_{c\star}^{-1}\bC_\star^{'}\bPsi_{c\star}^{-1}(\bX_n-\btW_\star),\label{eqn:EtauYr.x}\\
		\bbE_\star[\tau_n\bY_n^r\bSig_{r\star}^{-1}{\bY_n^{r}}'|\bX_n]&=&\bbE[\tau_n\bbE(\bY_n^r\bSig_{r\star}^{-1}{\bY_n^{r}}'|\bX_n,\tau_n)|\bX_n]=d_r\bM_{c\star}^{-1}+\tilde{\tau}_{n\star}\bbE[\bY_n^r|\bX_n]\bSig_{r\star}^{-1}\bbE[{\bY_n^{r}}'|\bX_n],\label{eqn:EtauYrYr'.x}
	\end{IEEEeqnarray}
	where $\tilde{\tau}_{n\star}$ is computed using \refe{eqn:Etau.X.x}, but with the substitution of $(\alpha,\nu_\star,\bW_\star)$ by $(\tilde{\alpha},\tilde{\nu_\star},\btW_\star)$.
	
	\noindent
	{\bf PXCM-step 2:} The updates for $\btC_\star, \btPsi_{c\star}$ are as follows:
	\begin{IEEEeqnarray}{rCl}
		\btC_\star &=& \left\{\sum_{n=1}^{N}(\bX_n-\btW_\star)\bSig_{r\star}^{-1} \bbE_\star[\tau_n{\bY_n^r}'|\bX_n] \right\} \left\{\sum_{n=1}^{N} \bbE_\star[\tau_n\bY_n^r \bSig_{r\star}^{-1} {\bY_n^r}'|\bX_n] \right\}^{-1},\label{eqn:C_star}\\
		%&=&\left\{\sum_{n=1}^{N}\tilde{\tau}_{n\star}(\bX_n-\btW_\star)\bSig_{r\star}^{-1}(\bX_n-\btW_\star)'\bPsi_{c\star}^{-1}\bC_\star\bM_{c\star}^{-1}\right\}\notag\\
		%& &\cdot\>\left\{\sum_{n=1}^{N}[d_r\bM_{c\star}^{-1}+\tilde{\tau}_{n\star}\bM_{c\star}^{-1}\bC_\star^{'}\bPsi_{c\star}^{-1}(\bX_n-\btW_\star)\bSig_{r\star}^{-1}(\bX_n-\btW_\star)'\bPsi_{c\star}^{-1}\bC_\star\bM_{c\star}^{-1}]\right\}^{-1},\label{eqn:C_star}\\
		\btPsi_{c\star} &=& \frac{1}{N d_r} \diag\left\{\sum_{n=1}^{N}\tilde{\tau}_{n\star}(\bX_n-\btW_\star)\bSig_{r\star}^{-1}
		(\bX_n-\btW_\star)'- (\bX_n-\btW_\star)\bSig_{r\star}^{-1}\bbE_\star[\tau_n{\bY_n^r}'|\bX_n]\btC_\star^{'}\right\}.\label{eqn:Psic_star}
		%&=&\frac{1}{N d_r} \diag\left\{\sum_{n=1}^{N}\tilde{\tau}_{n\star}(\bX_n-\btW_\star)\bSig_{r\star}^{-1}
		%(\bX_n-\btW_\star)'-\tilde{\tau}_{n\star}(\bX_n-\btW_\star)\bSig_{r\star}^{-1}(\bX_n-\btW_\star)'\bPsi_{c\star}^{-1}\bC_\star\bM_{c\star}^{-1}\btC_\star^{'}\right\}.\label{eqn:Psic_star}
	\end{IEEEeqnarray}
	
	\noindent 3) {\bf Cycle 3:}
	
	\noindent
	{\bf PXE-step 3:} Compute the required expectations:
	\begin{IEEEeqnarray}{rCl}
		&&\bbE_\star[\tau_n\bY_n^c|\bX_n]=\bbE[\tau_n\bbE[\bY_n^c|\bX_n,\tau_n]|\bX_n]=\tilde{\tau}_{n\star}\bbE[\bY_n^c|\bX_n]=\tilde{\tau}_{n\star}(\bX_n-\btW_\star)\bPsi_{r\star}^{-1}\bR_\star\bM_{r\star}^{-1},\label{eqn:EtauYc.x}\\
		&&\bbE_\star[\tau_n{\bY_n^c}'\btSig_{c\star}^{-1}\bY_n^c|\bX_n]=\bbE[\tau_n\bbE({\bY_n^c}'\btSig_{c\star}^{-1}\bY_n^c|\bX_n,\tau_n)|\bX_n]=d_c\bM_{r\star}^{-1}+\tilde{\tau}_{n\star}\bbE[{\bY_n^c}'|\bX_n]\btSig_{c\star}^{-1}\bbE[\bY_n^c|\bX_n],\label{eqn:EtauYc'Yc.x}
	\end{IEEEeqnarray}
	where $\tilde{\tau}_{n\star}$ is computed using \refe{eqn:Etau.X.x}, but with the substitution of $(\alpha, \nu_\star,\bW_\star,\bC_\star,\bPsi_{c\star})$ by $(\tilde{\alpha},\tilde{\nu_\star},\btW_\star,\btC_\star,\btPsi_{c\star})$.
	
	\noindent
	{\bf PXCM-step 3:} The updates for $\btR_\star, \btPsi_{r\star}$ are as follows:
	\begin{IEEEeqnarray}{rCl}
		\btR_\star&=&\left\{\sum_{n=1}^{N}(\bX_n-\btW_\star)'\btSig_{c\star}^{-1}\bbE_\star[\tau_n\bY_n^c|\bX_n]\right\}\left\{\sum_{n=1}^{N}\bbE_\star[\tau_n{\bY_n^c}'\btSig_{c\star}^{-1}\bY_n^c|\bX_n]\right\}^{-1},\label{eqn:R_star}\\
		%&=&\left\{\sum_{n=1}^{N}\tilde{\tau}_{n\star}(\bX_n-\btW_\star)'\btSig_{c\star}^{-1}(\bX_n-\btW_\star)\bPsi_{r\star}^{-1}\bR_\star\bM_{r\star}^{-1}\right\}\notag\\
		%& &\cdot\>\left\{\sum_{n=1}^{N}d_c\bM_{r\star}^{-1}+\tilde{\tau}_{n\star}\bM_{r\star}^{-1}\bR_\star^{'}\bPsi_{r\star}^{-1}(\bX_n-\btW_\star)'\btSig_{c\star}^{-1}(\bX_n-\btW_\star)\bPsi_{r\star}^{-1}\bR_\star\bM_{r\star}^{-1}\right\}^{-1},\label{eqn:R_star}\\
		\btPsi_{r\star}&= &\frac{1}{Nd_c}\diag\left\{
		\sum_{n=1}^{N}\tilde{\tau}_{n\star}(\bX_n-\btW_\star)'\btSig_{c\star}^{-1}(\bX_n-\btW_\star)-(\bX_n-\btW_\star)'\btSig_{c\star}^{-1}\bbE_\star[\tau_n\bY_n^c|\bX_n]\btR_\star'\right\}.\label{eqn:Psir_star}
		%&=&\frac{1}{Nd_c}\diag\left\{
		%\sum_{n=1}^{N}\tilde{\tau}_{n\star}(\bX_n-\btW_\star)'\btSig_{c\star}^{-1}(\bX_n-\btW_\star)-\tilde{\tau}_{n\star}(\bX_n-\btW_\star)'\btSig_{c\star}^{-1}(\bX_n-\btW_\star)\bPsi_{r\star}^{-1}\bR_\star\bM_{r\star}^{-1}\btR_\star'\right\}.\label{eqn:Psir_star}
	\end{IEEEeqnarray}
	
	% For the PX-AECM algorithm, compared to the AECM algorithm, it modifies the parameter update expressions as follows (see \co{supplementary material S1.2} for proof):
	% \begin{IEEEeqnarray*}{rCl}
		% 	\btC^{*}
		% 	&=&\sqrt{\frac{N}{\sum\nolimits_{n=1}^N\tilde{\tau}_n}}\btC, \label{eqn:px-aecm.C}\\
		% 	\btPsi_c^{*}
		% 	&=&\frac{N}{\sum\nolimits_{n=1}^N\tilde{\tau}_{n}}\diag\left\{\bS_c-\sqrt{\frac{\sum\nolimits_{n=1}^N\tilde{\tau}_n}{N}}\bS_c\bPsi_c^{-1}\bC\bM_c^{-1}\btC'\right\},\label{eqn:px-aecm.Psic}\\
		% 	\btR^{*}
		% 	&=&\bS_r\bPsi_r^{-1}\bR\bM_r^{-1}\left\{\frac{\sum\nolimits_{n=1}^N\tilde{\tau}_n}{N}\bM_r^{-1} + \bM_r^{-1}\bR'\bPsi_r^{-1}\bS_r\bPsi_r^{-1}\bR\bM_r^{-1}\right\}^{-1},\label{eqn:px-aecm.R}\\
		% 	\btPsi_r^{*}
		% 	&=&\btPsi_{r}N/\sum\nolimits_{n=1}^N\tilde{\tau}_n.\label{eqn:px-aecm.Psir} 
		% \end{IEEEeqnarray*}
	% \co{The update of $\tilde{\nu}^{*}$ is obtained by solving the equation (S.8) in supplementary material S1.2.} %\refe{eqn:nu_star}
	
	%\subsection{Model selection}\label{sec:tbfa.select}
	
	\subsection{Matrix factor score}\label{sec:tbfa.score}
	In FA model, the factor scores $\bz$ is a vector and can be computed by $\bbE(\bz|\bx)$ \citep{anderson-mult-3ed,zhao2008-efa}. However, in \emph{t}BFA model, the factor scores $\bZ$ is a matrix, instead of a vector. From \refe{eqn:dist.Z.Ytau} and \refe{eqn:dist.Yr.Xtau}, the matrix factor score $\bbE(\bZ|\bX)$ can be obtained by
	\begin{equation}\label{eqn:reg}
		\bbE\left(\bZ\mid\bX\right) = \bbE\left[\bbE\left(\bZ\mid\bY^r,\tau\right)\mid\bX\right] = \bM_c^{-1} \bC' \bPsi_c^{-1} \left(\bX-\bW\right) \bPsi_r^{-1} \bR \bM_r^{-1}.
	\end{equation}
	
	%the most common method to compute  is regression method \citep{anderson-mult-3ed}. When treating $\bz$ as a random variable, the vector factor score . can be regarded as matrix factor score. we can obtain the expression for matrix factor score as follows:This subsection focuses on the computation of the matrix factor scores $\bZ$ in \emph{t}BFA model as defined in \refe{eqn:tbfa}. 

	\subsection{Asymptotic properties and estimating precision of parameter estimates}\label{sec:stderr}
	In this subsection, we discuss the theoretical properties of the ML estimator $\bwTheta$ as the sample size $N\rightarrow\infty$. Under general regularity conditions, $\bwTheta$ can converge in probability to the true value of $\bTheta_0$ and distribution to a multivariate normal distribution with the mean vector $\bTheta_0$ and covariance matrix being the inverse of the Fisher information matrix. Specifically, we have
	\begin{equation}\label{eqn:asym}
		\sqrt{N}(\bwTheta - \bTheta_0) \overset{L}{\rightarrow} \cN_{\cD}(\bo, \bI_1^{-1}(\bTheta_0)),
	\end{equation}
	where $\cD$ represents the dimension of $\bTheta$, which is the number of free parameters in \tbfa. $\bI_1(\bTheta_0)$ denotes the Fisher information matrix evaluated at $\bTheta_0$, and the Fisher information matrix containing $N$ i.i.d. observations can be computed as $\bI_N(\bTheta)=N\bI_1(\bTheta)$. The specific expressions of $\bI_N(\bTheta)$ are provided below.
	% (see details in \reft{tab:dof})
	
	%\co{as defined in \refs{sec:dof},}
	%Next, we derive the closed-form expression of $\bI_N(\bTheta)$ that can be used to compute the standard errors of $\bwTheta$ \citep{wang2017-atfa} in order to measure the accuracies of the ML estimators.
	To measure the accuracies of the ML estimators, following \cite{wang2017automated}, we derive the closed-form expression of $\bI_N(\bTheta)$ that can be used to compute the standard errors of $\bwTheta$. Let $\bmu = \vec(\bW)$. Denote $\btheta_c=\vech(\bSig_c)$ and $\btheta_r=\vech(\bSig_r)$. We calculate the first derivatives of \refe{eqn:tbfa.like} w.r.t. each element of $\bTheta$ to obtain the score vector
	\begin{IEEEeqnarray}{rCl}\label{eqn:score}
		\bs \left(\btheta\vert \cX \right) = \vec\left(\bs^{\bmu},\bs^{\btheta_c},\bs^{\btheta_r},\bs^{\nu}\right).
	\end{IEEEeqnarray}
	The Hessian matrix can be computed using the related second derivatives of the log-likelihood function w.r.t. each element of $\bTheta$
	\begin{IEEEeqnarray}{rCl}\label{eqn:Hessian}
		\bH(\btheta\mid\cX) = \begin{bmatrix}
			\bH^{\bmu\bmu} 				&  \bH^{\bmu\btheta_c}		&\bH^{\bmu\btheta_r} &\bH^{\bmu\nu}\\
			\bH^{\btheta_c\bmu}	&  \bH^{\btheta_c\btheta_c}	& \bH^{\btheta_c\btheta_r}& \bH^{\btheta_c\nu}\\
			\bH^{\btheta_r\bmu}	&  \bH^{\btheta_r\btheta_c}&  \bH^{\btheta_r\btheta_r} & \bH^{\btheta_r\nu}\\
			\bH^{\nu\bmu}	&  \bH^{\nu\btheta_c}&  \bH^{\nu\btheta_r} & \bH^{\nu\nu}
		\end{bmatrix}.
	\end{IEEEeqnarray}
	%denote the column-wise vectorization of $\bW$  represent the vectorized vectors containing unique sub-diagonal elements in $\bSig_c$ and $\bSig_r$
	
	The expressions for each element of \refe{eqn:score} and \refe{eqn:Hessian} can be found in the \refthm{thm:info}. To obtain the Fisher information matrix, the following \refthm{thm:Exp} is required. The Fisher information matrix $\bI_N(\bTheta)=-\sum_{n=1}^{N} \bbE[\bH_n(\btheta\mid \bX_n)]=-\bbE[\bH(\btheta\mid\cX)]$ is detailed in \refthm{thm:info}.
	\begin{thm} \label{thm:Exp}
		For tBFA model \refe{eqn:tbfa}, we have: 
		\begin{IEEEeqnarray*}{rCl}
			(a) \ &&\bbE\left\{\left(\nu+\delta_{\bX_n}(\btheta)\right)^{-1}\right\}=\left(\nu+d_cd_r\right)^{-1};\\
			(b)\ && \bbE\left\{\left(\nu+\delta_{\bX_n}\right)^{-1} \delta_{\bX_n}(\btheta)\right\}=\left(\nu+d_cd_r\right)^{-1} d_cd_r;\\
			(c)\ && \bbE\left\{\nu^2\left(\nu+\delta_{\bX_n}(\btheta)\right)^{-2}\right\}=\left(\nu+d_cd_r\right)^{-1}\left(\nu+d_cd_r+2\right)^{-1} \nu(\nu+2);\\
			(d)\ &&\bbE\left\{\left(\nu+\delta_{\bX_n}(\btheta)\right)^{-1} \beps_n \beps_n^{'}\right\}=\left(\nu+d_cd_r\right)^{-1} \left(\bSig_r \otimes \bSig_c\right);\\
			(e)\ && \bbE\left\{\left(\nu+\delta_{\bX_n}(\btheta)\right)^{-2} \beps_n \beps_n^{ '}\right\}=\left(\nu+d_cd_r\right)^{-1} \left(\nu+d_c d_r+2\right)^{-1}\left(\bSig_r \otimes \bSig_c\right) ; \\
			(f)\ && \bbE\left\{\left(\nu+\delta_{\bX_n}(\btheta)\right)^{-2} \beps_n^{'} \left( \bSig_r^{-1} 
			\dot{\bSig}_r^k \bSig_r^{-1}  \otimes  \bSig_c^{-1} \right) \beps_n \beps_n^{'} \left(\bSig_r^{-1} \otimes\left(\bSig_c^{-1} \dot{\bSig}_c^i \bSig_c^{-1}\right)\right) \beps_n\right\} \\ \ &&=\left(\nu+d_cd_r\right)^{-1}\left(\nu+d_cd_r+2\right)^{-1} 
			\left\{d_cd_r\tr\left(\bSig_r^{-1} \dot{\bSig}_r^k \right) \tr\left(\bSig_c^{-1} \dot{\bSig}_c^i \right) + 2\tr\left(\left(\bSig_r^{-1} \dot{\bSig}_r^k \right) \otimes \left(\bSig_c^{-1} \dot{\bSig}_c^i \right)\right)  \right\};\\
			(g) \ &&\bbE\left\{\left(\nu+\delta_{\bX_n}(\btheta)\right)^{-2} \beps_n^{'} \left( \bSig_r^{-1} \otimes \bSig_c^{-1} \dot{\bSig}_c^i \bSig_c^{-1}   \right) \beps_n \beps_n^{'} \left(\bSig_r^{-1} \otimes\left(\bSig_c^{-1} \dot{\bSig}_c^j \bSig_c^{-1}\right)\right) \beps_n\right\} \\ \ &&=\left(\nu+d_cd_r\right)^{-1}\left(\nu+d_cd_r+2\right)^{-1} 
			\left\{d_r{^2}\tr\left(\bSig_c^{-1} \dot{\bSig}_c^i \right) \tr \left(\bSig_c^{-1} \dot{\bSig}_c^j \right) + 2d_r\tr \left(\bSig_c^{-1} \dot{\bSig}_c^i \bSig_c^{-1} \dot{\bSig}_c^j  \right)  \right\};\\
			(h)\ && \bbE\left\{\left(\nu+\delta_{\bX_n}(\btheta)\right)^{-2} \beps_n^{'} \left( \bSig_r^{-1} \dot{\bSig}_r^k \bSig_r^{-1}  \otimes  \bSig_c^{-1} \right) \beps_n \beps_n^{'} \left(\left(\bSig_r^{-1} \dot{\bSig}_r^s \bSig_r^{-1}\right)  \otimes \bSig_c^{-1} \right) \beps_n\right\} \\ \ &&=\left(\nu+d_cd_r\right)^{-1}\left(\nu+d_cd_r+2\right)^{-1} 
			\left\{d_c{^2}\tr\left(\bSig_r^{-1} \dot{\bSig}_r^k \right) \tr\left(\bSig_r^{-1} \dot{\bSig}_r^s \right) + 2d_c\tr\left(\bSig_r^{-1} \dot{\bSig}_r^k\bSig_r^{-1} \dot{\bSig}_r^s \right)\right\},
		\end{IEEEeqnarray*}
		where $\beps_n=\vec \left( \bX_n-\bW \right)$, 
		$\dot{\bSig}_c^i =\partial \bSig_c/ \partial \btheta_c^i$ with $\btheta_c$ representing the i-th entry of $\btheta_c$, for each $i \in\{1, \ldots, d_c(d_c+1) / 2\}$, \ and $\dot{\bSig}_r^k =\partial \bSig_r/ \partial \btheta_r^k$ with $\btheta_r$ representing the k-th entry of $\btheta_r$, for each 
		$k \in \{1, \ldots, d_r(d_r+1) / 2\}$.
	\end{thm}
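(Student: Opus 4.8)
The plan is to exploit the Gaussian--Gamma hierarchy \refe{eqn:mvt.hrc} together with the conjugate posterior \refe{eqn:dist.tau.X}, so that every expectation collapses to a moment of $\tau$ times a Gaussian moment, with no direct integration against the \mt density. Write $\beps_n=\vec(\bX_n-\bW)$ and set the shorthand $\bV=\bSig_r\otimes\bSig_c$. By \refe{eqn:dist.X} and the vectorization of the matrix-normal law, $\beps_n\mid\tau\sim\cN_{d_cd_r}(\bo,\bV/\tau)$ with $\tau\sim\cG(\nu/2,\nu/2)$, and $\delta_{\bX_n}(\btheta)=\beps_n'\bV^{-1}\beps_n$. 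The two facts I will use throughout come from the posterior $\tau\mid\bX_n\sim\cG((\nu+d_cd_r)/2,(\nu+\delta_{\bX_n})/2)$ in \refe{eqn:dist.tau.X}, namely the pointwise identities
\begin{equation*}
\frac1{\nu+\delta_{\bX_n}}=\frac{\bbE[\tau\mid\bX_n]}{\nu+d_cd_r},\qquad \frac1{(\nu+\delta_{\bX_n})^2}=\frac{\bbE[\tau^2\mid\bX_n]}{(\nu+d_cd_r)(\nu+d_cd_r+2)},
\end{equation*}
obtained from the first two conditional Gamma moments, together with the marginal moments $\bbE[\tau]=1$ and $\bbE[\tau^2]=(\nu+2)/\nu$.

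For (a)--(c), which involve only the scalar $\delta_{\bX_n}$, the first identity and the tower property give $\bbE[(\nu+\delta_{\bX_n})^{-1}]=(\nu+d_cd_r)^{-1}\bbE[\bbE[\tau\mid\bX_n]]=(\nu+d_cd_r)^{-1}\bbE[\tau]$, which is (a); part (b) follows by writing $\delta_{\bX_n}/(\nu+\delta_{\bX_n})=1-\nu/(\nu+\delta_{\bX_n})$ and reusing (a); and (c) is the analogous computation from the second identity with $\bbE[\tau^2]=(\nu+2)/\nu$. For (d)--(e) I carry the matrix $\beps_n\beps_n'$ through the same reduction: since $\beps_n\beps_n'$ is $\bX_n$-measurable, the factor $\bbE[\tau\mid\bX_n]$ (resp.\ $\bbE[\tau^2\mid\bX_n]$) turns into $\bbE[\tau\,\beps_n\beps_n']$ (resp.\ $\bbE[\tau^2\,\beps_n\beps_n']$) after a second use of the tower property, and conditioning on $\tau$ with $\bbE[\beps_n\beps_n'\mid\tau]=\bV/\tau$ produces $\bbE[\tau\cdot\bV/\tau]=\bV$ (resp.\ $\bbE[\tau^2\cdot\bV/\tau]=\bbE[\tau]\,\bV=\bV$); dividing by the respective normalizers yields (d) and (e).

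The substantive work is in (f)--(h), where $(\nu+\delta_{\bX_n})^{-2}$ multiplies a quartic form $\beps_n'\bM_1\beps_n\,\beps_n'\bM_2\beps_n$. Applying the second identity as above reduces the expectation to $\bbE[\tau^2\,\beps_n'\bM_1\beps_n\,\beps_n'\bM_2\beps_n]$ up to the normalizer $(\nu+d_cd_r)^{-1}(\nu+d_cd_r+2)^{-1}$. Conditioning on $\tau$ and invoking the Isserlis/Wick formula for products of quadratic forms of a centered Gaussian with covariance $\bV/\tau$ (all the $\bM_i$ here are symmetric, as $\bSig_c$, $\bSig_r$ and their derivatives are) gives
\begin{equation*}
\bbE\bigl[\beps_n'\bM_1\beps_n\,\beps_n'\bM_2\beps_n\mid\tau\bigr]=\tau^{-2}\bigl\{\tr(\bM_1\bV)\tr(\bM_2\bV)+2\tr(\bM_1\bV\bM_2\bV)\bigr\},
\end{equation*}
so the $\tau^{-2}$ cancels the $\tau^2$ and what remains is deterministic.

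It then remains to simplify the three traces via the Kronecker identities $(\bA\otimes\bB)(\mathbf{C}\otimes\bD)=(\bA\mathbf{C})\otimes(\bB\bD)$ and $\tr(\bA\otimes\bB)=\tr(\bA)\tr(\bB)$. For (f), with $\bM_1=\bSig_r^{-1}\dot{\bSig}_r^k\bSig_r^{-1}\otimes\bSig_c^{-1}$ and $\bM_2=\bSig_r^{-1}\otimes(\bSig_c^{-1}\dot{\bSig}_c^i\bSig_c^{-1})$, one finds $\bM_1\bV=(\bSig_r^{-1}\dot{\bSig}_r^k)\otimes\bI_{d_c}$ and $\bM_2\bV=\bI_{d_r}\otimes(\bSig_c^{-1}\dot{\bSig}_c^i)$, whence $\tr(\bM_1\bV)\tr(\bM_2\bV)=d_cd_r\,\tr(\bSig_r^{-1}\dot{\bSig}_r^k)\tr(\bSig_c^{-1}\dot{\bSig}_c^i)$ and $\tr(\bM_1\bV\bM_2\bV)=\tr((\bSig_r^{-1}\dot{\bSig}_r^k)\otimes(\bSig_c^{-1}\dot{\bSig}_c^i))$, matching the stated expression; parts (g) and (h) are identical in form with the row/column roles interchanged (both $\bM_i$ acting purely on the column or purely on the row factor), producing the $d_r^2$ and $d_c^2$ leading coefficients together with the single-trace cross terms. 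I expect the main obstacle to be precisely this bookkeeping in (f)--(h): applying Wick's theorem with the correct index pairings and symmetry, and then collapsing the Kronecker traces so that the $d_c$, $d_r$ multiplicities attach to the right terms; the probabilistic reductions in (a)--(e), by contrast, are immediate once the two posterior identities are in hand.
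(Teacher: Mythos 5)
Your proof is correct, and for the substantive parts (f)--(h) it takes a genuinely different route from the paper's. The paper never evaluates the quartic Gaussian moment directly: it invokes the information-matrix (Bartlett) identity $\bbE\{\partial^2\log f/\partial\theta_c^i\partial\theta_r^k\}=-\bbE\{(\partial\log f/\partial\theta_c^i)(\partial\log f/\partial\theta_r^k)\}$, expands both sides, and observes that the unknown quantity $\bbE\bigl\{(\nu+\delta_{\bX_n}(\btheta))^{-2}\beps_n'\bM_1\beps_n\,\beps_n'\bM_2\beps_n\bigr\}$ (with $\bM_1,\bM_2$ the two Kronecker-form matrices in (f)) enters the two sides linearly with different coefficients, $\tfrac12(\nu+d_cd_r)$ versus $-\tfrac14(\nu+d_cd_r)^2$, while every remaining term collapses via part (d) to traces; equating the sides then gives a linear equation that is solved for the quartic moment. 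You instead convert $(\nu+\delta_{\bX_n}(\btheta))^{-2}$ into $\bbE[\tau^2\mid\bX_n]$ up to the normalizer using the conjugate Gamma posterior \refe{eqn:dist.tau.X}, push the weight through the tower property, and apply the Isserlis/Wick formula conditionally on $\tau$, where the $\tau^{-2}$ from the conditional covariance $\bV/\tau$ (with $\bV=\bSig_r\otimes\bSig_c$) cancels the $\tau^2$ exactly; your Kronecker reductions, e.g.\ $\bM_1\bV=(\bSig_r^{-1}\dot{\bSig}_r^k)\otimes\bI_{d_c}$ and $\bM_2\bV=\bI_{d_r}\otimes(\bSig_c^{-1}\dot{\bSig}_c^i)$ in (f), are exactly right and reproduce the stated coefficients in (f), (g), and (h). As for what each approach buys: the paper's trick recycles the score and Hessian expressions that are needed anyway for \refthm{thm:info} and avoids any Gaussian fourth-moment computation, at the cost of being indirect and resting on the regularity conditions behind the Bartlett identity; your scale-mixture argument is self-contained, treats (a)--(h) uniformly with the same two posterior-moment identities, makes transparent why the constants $(\nu+d_cd_r)^{-1}(\nu+d_cd_r+2)^{-1}$ appear, and needs only the symmetry of the $\bM_i$ (which holds, since $\dot{\bSig}_c^i$ and $\dot{\bSig}_r^k$ are symmetric) for Wick's formula in the stated form. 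Your hierarchical derivation of (a)--(e) is likewise a sound, self-contained replacement for the paper's appeal to the $F$/Beta distributional facts and the cited prior work.
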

\begin{proof}
	%See \refap{sec:thm1.proof}. %supplementary material S1.2.
	Recalling that $\vec\left(\bX_n \right) \sim t_{d_cd_r}(\vec(\bW),\bSig_r \otimes \bSig_c,\nu)$,\ $\delta_{\bX_n}(\btheta)=\tr\{\bSig_c^{-1}(\bX_n-\bW)\bSig_r^{-1}(\bX_n-\bW)'\} \sim d_cd_r F\left(d_cd_r, \nu\right)$ and 
$\nu/(\nu+\delta_{\bX_n}(\btheta)) \sim \operatorname{Beta}\left(\nu / 2, d_cd_r/ 2\right)$.  With reference to these properties and the work by \cite{wang2017automated}, we can readily demonstrate that (a)-(e). Hereafter, we will provide the proof solely for (f).

(f) Let $\theta_c^i$ represent the $i$th entry of $\btheta_c$,
$\theta_r^k$ denote the $k$-th entry of $\btheta_r$, 
$ \dot{\bSig}_c^i=\partial \bSig_c/\partial \theta_c^i$ for $i \in\{1, \ldots, d_c(d_c+1) / 2\}$ and 
$ \dot{\bSig}_r^k=\partial \bSig_r/\partial \theta_r^k$
for $k \in\{1, \ldots, d_r(d_r+1) / 2\}$ throughout the following proof. Due to%Because
\begin{IEEEeqnarray}{rCl}
	\bbE \left\{\frac{\partial^2 \log f\left(\bx_n\right)}{\partial \theta_c^i \partial 
		\theta_r^k}\right\}= -\bbE \left\{\frac{\partial \log f\left(\bx_n\right)}{\partial \theta_c^i} \frac{\partial \log f\left(\bx_n \right)}{\partial \theta_r^k}\right\},  
	\label{eqn:gs}
\end{IEEEeqnarray}
we first derive %Initially, we derive
\begin{IEEEeqnarray}{rCl}  
	\nonumber
	\frac{\partial^2 \log f\left(\bx_n\right)}{\partial \theta_c^i \partial \theta_r^k}
	\nonumber
	= &&\frac{1}{2}\left(\nu+d_cd_r \right)
	\frac{\beps_n^{'} \left( \bSig_r^{-1} \dot{\bSig}_r^k \bSig_r^{-1}  \otimes  \bSig_c^{-1} \right) \beps_n \beps_n^{'} \left(\bSig_r^{-1} \otimes \left(\bSig_c^{-1} \dot{\bSig}_c^i \bSig_c^{-1}\right)\right) \beps_n}{\left(\nu+\delta_{\bX_n}(\btheta)\right)^{2}} \\
	&&-\frac{1}{2}\left(\nu+d_cd_r\right) \frac{ \tr \left\{ \left( \left( \bSig_r^{-1} \dot{\bSig}_r^k \bSig_r^{-1}\right)  \otimes  \left( \bSig_c^{-1} \dot{\bSig}_c^i \bSig_c^{-1} \right)\right) \beps_n \beps_n^{'}  \right\}}{\nu+\delta_{\bX_n}(\btheta)}, \label{eqn:Sec}
\end{IEEEeqnarray}
and 
\begin{IEEEeqnarray}{rCl} 
	-\left\{\frac{\partial \log f\left(\bx_n \right)}{\partial \theta_c^i }\right\} \left\{\frac{\partial \log f\left(\bx_n \right)}{\partial \theta_r^k}\right\} = -\frac{1}{4} d_cd_r
	\tr \left(\bSig_c^{-1}\dot{\bSig}_c^i \right) \tr\left(\bSig_r^{-1}\dot{\bSig}_r^k \right)\\
	\nonumber
	&&\hskip-26.5em +\frac{1}{4} d_r \left(\nu+d_cd_r\right) \frac{\beps_n^{'}  \left( \bSig_r^{-1} \dot{\bSig}_r^k \bSig_r^{-1}\otimes  \bSig_c^{-1} \right) \beps_n} {\nu+\delta_{\bX_n}(\btheta)} \tr \left(\bSig_c^{-1} \dot{\bSig}_c^i\right) \\
	\nonumber
	&&\hskip-26.5em +\frac{1}{4} d_c \left(\nu+d_cd_r\right) \frac{\beps_n^{'}  \left( \bSig_r^{-1} \otimes  \left( 
		\bSig_c^{-1} \dot{\bSig}_c^i \bSig_c^{-1} \right)  \right) \beps_n} {v+\delta_{\bX_n}(\btheta)} \tr \left(\bSig_r^{-1} \dot{\bSig}_r^k\right) \\
	&&\hskip-26.5em-\frac{1}{4}\left(\nu+d_cd_r\right)^2\left\{\frac{\beps_n^{'}  \left( \bSig_r^{-1} \otimes  \left( 
		\bSig_c^{-1} \dot{\bSig}_c^i \bSig_c^{-1} \right) \right) \beps_n \beps_n^{'}  \left( \bSig_r^{-1} \dot{\bSig}_r^k \bSig_r^{-1}\otimes  \bSig_c^{-1} \right) \beps_n }{\left(\nu+\delta_{\bX_n}(\btheta)\right)^2}\right\}.
	\label{eqn:Fir} 
\end{IEEEeqnarray}

% \begin{IEEEeqnarray}{rCl} 
	%      \nonumber
	%     -\left\{\frac{\partial \log f\left(\bx_n \right)}{\partial \theta_c^i }\right\} \left\{\frac{\partial \log f\left(\bx_n \right)}{\partial \theta_r^k}\right\} = -\frac{1}{4} d_cd_r
	%     \tr \left(\bSig_c^{-1}\dot{\bSig}_c^i \right) \tr\left(\bSig_r^{-1}\dot{\bSig}_r^k \right)\\
	%      \nonumber
	%      &&\hskip-26.5em +\frac{1}{4} d_r \left(\nu+d_cd_r\right) \frac{\beps_n^{'}  \left( \bSig_r^{-1} \dot{\bSig}_r^k \bSig_r^{-1}\otimes  \bSig_c^{-1} \right) \beps_n} {\nu+\delta_{\bX_n}(\btheta)} \tr \left(\bSig_c^{-1} \dot{\bSig}_c^i\right) \\
	%      \nonumber
	%      &&\hskip-26.5em +\frac{1}{4} d_c \left(\nu+d_cd_r\right) \frac{\beps_n^{'}  \left( \bSig_r^{-1} \otimes  \left( 
		%      \bSig_c^{-1} \dot{\bSig}_c^i \bSig_c^{-1} \right)  \right) \beps_n} {v+\delta_{\bX_n}(\btheta)} \tr \left(\bSig_r^{-1} \dot{\bSig}_r^k\right) \\
	%     &&\hskip-26.5em-\frac{1}{4}\left(\nu+d_cd_r\right)^2\left\{\frac{\beps_n^{'}  \left( \bSig_r^{-1} \otimes  \left( 
		%      \bSig_c^{-1} \dot{\bSig}_c^i \bSig_c^{-1} \right) \right) \beps_n \beps_n^{'}  \left( \bSig_r^{-1} \dot{\bSig}_r^k \bSig_r^{-1}\otimes  \bSig_c^{-1} \right) \beps_n }{\left(\nu+\delta_{\bX_n}(\btheta)\right)^2}\right\}.
	%       \label{eqn:Fir} 
	% \end{IEEEeqnarray}

Taking the expectations for \refe{eqn:Sec} and \refe{eqn:Fir}, we obtain
\begin{IEEEeqnarray}{rCl}  \label{eqn:ESec}
	\nonumber
	\bbE \left\{\frac{\partial^2 \log f\left(\bx_n\right)}{\partial \theta_c^i \partial \theta_r^k}\right\}=&& \frac{1}{2}\left(\nu+d_cd_r \right)
	\bbE \left\{ \frac{\beps_n^{'} \left( \bSig_r^{-1} \dot{\bSig}_r^k \bSig_r^{-1}  \otimes  \bSig_c^{-1} \right) \beps_n \beps_n^{'} \left(\bSig_r^{-1} \otimes \left(\bSig_c^{-1} \dot{\bSig}_c^i \bSig_c^{-1}\right)\right) \beps_n}{\left(\nu+\delta_{\bX_n}(\btheta)\right)^{2}}\right\} \\
	\nonumber
	&& -\frac{1}{2}\left(\nu+d_cd_r\right) \bbE \left\{\frac{ \tr \left\{ \left( \left( \bSig_r^{-1} \dot{\bSig}_r^k \bSig_r^{-1}\right)  \otimes  \left( \bSig_c^{-1} \dot{\bSig}_c^i \bSig_c^{-1} \right)\right) \beps_n \beps_n^{'}  \right\}}{\nu+\delta_{\bX_n}(\btheta)}\right\}  \\
	\nonumber
	&& \hskip-8em =\frac{1}{2}\left(\nu+d_cd_r \right)
	\bbE \left\{ \frac{\beps_n^{'} \left( \bSig_r^{-1} \dot{\bSig}_r^k \bSig_r^{-1}  \otimes  \bSig_c^{-1} \right) \beps_n \beps_n^{'} \left(\bSig_r^{-1} \otimes \left(\bSig_c^{-1} \dot{\bSig}_c^i \bSig_c^{-1}\right)\right) \beps_n}{\left(\nu+\delta_{\bX_n}(\btheta)\right)^{2}}\right\} = -\frac{1}{2} \tr \left\{ \left( \bSig_r^{-1} \dot{\bSig}_r^k \right)  \otimes  \left( \bSig_c^{-1} \dot{\bSig}_c^i  \right) \right\},
\end{IEEEeqnarray}

\begin{IEEEeqnarray}{rCl} \label{eqn:EFir}
	\nonumber
	&& -\bbE\left[\left\{\frac{\partial \log f\left(\bx_n \right)}{\partial \theta_c^i }\right\}\left\{\frac{\partial \log f\left(\bx_n \right)}{\partial \theta_r^k}\right\}\right]\\
	\nonumber
	=&&-\frac{1}{4} d_cd_r
	\tr \left(\bSig_c^{-1}\dot{\bSig}_c^i \right) \tr\left(\bSig_r^{-1}\dot{\bSig}_r^k \right)+\frac{1}{4} d_r \left(\nu+d_cd_r\right) \bbE \left\{ \frac{\beps_n^{'}  \left( \bSig_r^{-1} \dot{\bSig}_r^k \bSig_r^{-1}\otimes  \bSig_c^{-1} \right) \beps_n} {\nu+\delta_{\bX_n}(\btheta)} \right\} \tr \left(\bSig_c^{-1} \dot{\bSig}_c^i\right) \\
	\nonumber
	&& +\frac{1}{4} d_c \left(\nu+d_cd_r\right)\bbE \left\{ \frac{\beps_n^{'}  \left( \bSig_r^{-1} \otimes  \left( 
		\bSig_c^{-1} \dot{\bSig}_c^i \bSig_c^{-1} \right)  \right) \beps_n} {\nu+\delta_{\bX_n}(\btheta)} \right\} \tr \left(\bSig_r^{-1} \dot{\bSig}_r^k\right) \\
	\nonumber
	&& -\frac{1}{4}\left(\nu+d_cd_r\right)^2 \bbE \left\{\frac{\beps_n^{'}  \left( \bSig_r^{-1} \otimes  \left( 
		\bSig_c^{-1} \dot{\bSig}_c^i \bSig_c^{-1} \right) \right) \beps_n \beps_n^{'}  \left( \bSig_r^{-1} \dot{\bSig}_r^k \bSig_r^{-1}\otimes  \bSig_c^{-1} \right) \beps_n }{\left(\nu+\delta_{\bX_n}(\btheta)\right)^2}\right\} \\
	\nonumber
	=&& -\frac{1}{4} d_cd_r
	\tr \left(\bSig_c^{-1}\dot{\bSig}_c^i \right) \tr\left(\bSig_r^{-1}\dot{\bSig}_r^k \right) +\frac{1}{4} d_cd_r \tr \left(\bSig_c^{-1} \dot{\bSig}_c^i\right) \tr \left( \bSig_r^{-1} \dot{\bSig}_r^k \right)  
	+\frac{1}{4} d_cd_r \tr \left(\bSig_c^{-1} \dot{\bSig}_c^i\right) \tr \left( \bSig_r^{-1} \dot{\bSig}_r^k \right)  \\
	&& -\frac{1}{4}\left(\nu+d_cd_r\right)^2 \bbE \left\{\frac{\beps_n^{'}  \left( \bSig_r^{-1} \otimes  \left( 
		\bSig_c^{-1} \dot{\bSig}_c^i \bSig_c^{-1} \right) \right) \beps_n \beps_n^{'}  \left( \bSig_r^{-1} \dot{\bSig}_r^k \bSig_r^{-1}\otimes  \bSig_c^{-1} \right) \beps_n }{\left(\nu+\delta_{\bX_n}(\btheta)\right)^2}\right\} 
\end{IEEEeqnarray}

Additional details regarding the terms in \refe{eqn:ESec} and \refe{eqn:EFir} are
\begin{equation*}
	\begin{aligned}
		& \hskip-2.5em-\frac{1}{2}\left(\nu+d_cd_r\right) \bbE \left\{\frac{ \tr \left\{ \left( \left( \bSig_r^{-1} \dot{\bSig}_r^k \bSig_r^{-1}\right)  \otimes  \left( \bSig_c^{-1} \dot{\bSig}_c^i \bSig_c^{-1} \right)\right) \beps_n \beps_n^{'}  \right\}}{\nu+\delta_{\bX_n}(\btheta)}\right\} \\
		=& -\frac{1}{2}\left(\nu+d_cd_r\right) \bbE \left[ \tr \left\{\frac{  \left( \left( \bSig_r^{-1} \dot{\bSig}_r^k \bSig_r^{-1}\right)  \otimes  \left( \bSig_c^{-1} \dot{\bSig}_c^i \bSig_c^{-1} \right)\right) \beps_n \beps_n^{'}  }{\nu+\delta_{\bX_n}(\btheta)}\right\}\right] \\
		=& -\frac{1}{2}\left(\nu+d_cd_r\right)  \tr \left\{  \left( \left( \bSig_r^{-1} \dot{\bSig}_r^k \bSig_r^{-1}\right)  \otimes  \left( \bSig_c^{-1} \dot{\bSig}_c^i \bSig_c^{-1} \right)\right) 
		\bbE \left(\frac{\beps_n \beps_n^{'}  }{\nu+\delta_{\bX_n}(\btheta)}\right)\right\} \\
		=& -\frac{1}{2}\left(\nu+d_cd_r\right)  \tr \left\{  \left( \left( \bSig_r^{-1} \dot{\bSig}_r^k \bSig_r^{-1}\right)  \otimes  \left( \bSig_c^{-1} \dot{\bSig}_c^i \bSig_c^{-1} \right)\right) 
		\left(\frac{\bSig_r \otimes  \bSig_c }{\nu+d_cd_r}\right)\right\} = -\frac{1}{2} \tr \left\{ \left( \bSig_r^{-1} \dot{\bSig}_r^k \right)  \otimes  \left( \bSig_c^{-1} \dot{\bSig}_c^i  \right) \right\},
	\end{aligned}
\end{equation*}

% \begin{equation*}
	%     \begin{aligned}
		%          & \hskip-2.5em-\frac{1}{2}\left(\nu+d_cd_r\right) \bbE \left\{\frac{ \tr \left\{ \left( \left( \bSig_r^{-1} \dot{\bSig}_r^k \bSig_r^{-1}\right)  \otimes  \left( \bSig_c^{-1} \dot{\bSig}_c^i \bSig_c^{-1} \right)\right) \beps_n \beps_n^{'}  \right\}}{\nu+\delta_{\bX_n}(\btheta)}\right\} \\
		%         =& -\frac{1}{2}\left(\nu+d_cd_r\right) \bbE \left[ \tr \left\{\frac{  \left( \left( \bSig_r^{-1} \dot{\bSig}_r^k \bSig_r^{-1}\right)  \otimes  \left( \bSig_c^{-1} \dot{\bSig}_c^i \bSig_c^{-1} \right)\right) \beps_n \beps_n^{'}  }{\nu+\delta_{\bX_n}(\btheta)}\right\}\right] \\
		%         =& -\frac{1}{2}\left(\nu+d_cd_r\right)  \tr \left\{  \left( \left( \bSig_r^{-1} \dot{\bSig}_r^k \bSig_r^{-1}\right)  \otimes  \left( \bSig_c^{-1} \dot{\bSig}_c^i \bSig_c^{-1} \right)\right) 
		%         \bbE \left(\frac{\beps_n \beps_n^{'}  }{\nu+\delta_{\bX_n}(\btheta)}\right)\right\} \\
		%         =& -\frac{1}{2}\left(\nu+d_cd_r\right)  \tr \left\{  \left( \left( \bSig_r^{-1} \dot{\bSig}_r^k \bSig_r^{-1}\right)  \otimes  \left( \bSig_c^{-1} \dot{\bSig}_c^i \bSig_c^{-1} \right)\right) 
		%          \left(\frac{\bSig_r \otimes  \bSig_c }{\nu+d_cd_r}\right)\right\} \\
		%         =&  -\frac{1}{2} \tr \left\{ \left( \bSig_r^{-1} \dot{\bSig}_r^k \right)  \otimes  \left( \bSig_c^{-1} \dot{\bSig}_c^i  \right) \right\},
		%     \end{aligned}
	% \end{equation*}
%and 
\begin{IEEEeqnarray*}{rCl}
	&& \hskip-1.5em \frac{1}{4} d_r \left(\nu+d_cd_r\right) \bbE \left\{ \frac{\beps_n^{'}  \left( \bSig_r^{-1} \dot{\bSig}_r^k \bSig_r^{-1}\otimes  \bSig_c^{-1} \right) \beps_n} {\nu+\delta_{\bX_n}(\btheta)} \right\}=\frac{1}{4} d_r \left(\nu+d_cd_r\right) \bbE \left\{ \tr\left( \frac{\beps_n^{'}  \left( \bSig_r^{-1} \dot{\bSig}_r^k \bSig_r^{-1}\otimes  \bSig_c^{-1} \right) \beps_n} {\nu+\delta_{\bX_n}(\btheta)} \right) \right\}\\
	&&= \frac{1}{4} d_r \left(\nu+d_cd_r\right) \tr\left\{ \left( \bSig_r^{-1} \dot{\bSig}_r^k \bSig_r^{-1}\otimes  \bSig_c^{-1} \right) \bbE \left( \frac{\beps_n \beps_n^{'}} {\nu+\delta_{\bX_n}(\btheta)} \right) \right\}=\frac{1}{4} d_cd_r \tr \left( \bSig_r^{-1} \dot{\bSig}_r^k \right), 
\end{IEEEeqnarray*}
% \begin{IEEEeqnarray*}{rCl}
	%     && \hskip-2.5em \frac{1}{4} d_r \left(\nu+d_cd_r\right) \bbE \left\{ \frac{\beps_n^{'}  \left( \bSig_r^{-1} \dot{\bSig}_r^k \bSig_r^{-1}\otimes  \bSig_c^{-1} \right) \beps_n} {\nu+\delta_{\bX_n}(\btheta)} \right\}\\
	%     =&& \frac{1}{4} d_r \left(\nu+d_cd_r\right) \bbE \left\{ \tr\left( \frac{\beps_n^{'}  \left( \bSig_r^{-1} \dot{\bSig}_r^k \bSig_r^{-1}\otimes  \bSig_c^{-1} \right) \beps_n} {\nu+\delta_{\bX_n}(\btheta)} \right) \right\}\\
	%     =&& \frac{1}{4} d_r \left(\nu+d_cd_r\right) \tr\left\{ \left( \bSig_r^{-1} \dot{\bSig}_r^k \bSig_r^{-1}\otimes  \bSig_c^{-1} \right) \bbE \left( \frac{\beps_n \beps_n^{'}} {\nu+\delta_{\bX_n}(\btheta)} \right) \right\} \\
	%     =&& \frac{1}{4} d_cd_r \tr \left( \bSig_r^{-1} \dot{\bSig}_r^k \right), 
	% \end{IEEEeqnarray*}
and 
\begin{IEEEeqnarray*}{rCl}
	&&\hskip-1.5em\frac{1}{4} d_c \left(\nu+d_cd_r\right)\bbE \left\{ \frac{\beps_n^{'}  \left( \bSig_r^{-1} \otimes  \left( 
		\bSig_c^{-1} \dot{\bSig}_c^i \bSig_c^{-1} \right)  \right) \beps_n} {\nu+\delta_{\bX_n}(\btheta)} \right\} =\frac{1}{4} d_c \left(\nu+d_cd_r\right) \bbE \left\{ \tr\left(\frac{\beps_n^{'}  \left( \bSig_r^{-1} \otimes  \left( 
		\bSig_c^{-1} \dot{\bSig}_c^i \bSig_c^{-1} \right)  \right) \beps_n} {\nu+\delta_{\bX_n}(\btheta)} \right) \right\} \\
	&&\hskip1em =\frac{1}{4} d_c \left(\nu+d_cd_r\right) \tr\left\{ \left( \bSig_r^{-1} \otimes  \left( \bSig_c^{-1} \dot{\bSig}_c^i \bSig_c^{-1} \right)\right) \bbE \left( \frac{\beps_n \beps_n^{'}} {\nu+\delta_{\bX_n}(\btheta)} \right) \right\}=\frac{1}{4} d_cd_r \tr \left(\bSig_c^{-1} \dot{\bSig}_c^i\right)  
\end{IEEEeqnarray*}
% \begin{IEEEeqnarray*}{rCl}
	%     &&\hskip-2.5em\frac{1}{4} d_c \left(\nu+d_cd_r\right)\bbE \left\{ \frac{\beps_n^{'}  \left( \bSig_r^{-1} \otimes  \left( 
		%      \bSig_c^{-1} \dot{\bSig}_c^i \bSig_c^{-1} \right)  \right) \beps_n} {\nu+\delta_{\bX_n}(\btheta)} \right\} \\
	%     =&& \frac{1}{4} d_c \left(\nu+d_cd_r\right) \bbE \left\{ \tr\left(\frac{\beps_n^{'}  \left( \bSig_r^{-1} \otimes  \left( 
		%      \bSig_c^{-1} \dot{\bSig}_c^i \bSig_c^{-1} \right)  \right) \beps_n} {\nu+\delta_{\bX_n}(\btheta)} \right) \right\} \\
	%     =&& \frac{1}{4} d_c \left(\nu+d_cd_r\right) \tr\left\{ \left( \bSig_r^{-1} \otimes  \left( \bSig_c^{-1} \dot{\bSig}_c^i \bSig_c^{-1} \right)\right) \bbE \left( \frac{\beps_n \beps_n^{'}} {\nu+\delta_{\bX_n}(\btheta)} \right) \right\} \\   
	%     =&&  \frac{1}{4} d_cd_r \tr \left(\bSig_c^{-1} \dot{\bSig}_c^i\right) 
	% \end{IEEEeqnarray*}
respectively. Consequently, employing the information from \refe{eqn:gs} results in:
\begin{IEEEeqnarray*}{rCl}
	&&\hskip-2.5em\bbE\left\{\left(\nu+\delta_{\bX_n}(\btheta)\right)^{-2} \beps_n^{'} \left( \bSig_r^{-1} 
	\dot{\bSig}_r^k \bSig_r^{-1}  \otimes  \bSig_c^{-1} \right) \beps_n \beps_n^{'} \left(\bSig_r^{-1} \otimes\left(\bSig_c^{-1} \dot{\bSig}_c^i \bSig_c^{-1}\right)\right) \beps_n\right\} \\ 
	=&&\left(\nu+d_cd_r\right)^{-1}\left(\nu+d_cd_r+2\right)^{-1} 
	\left\{d_cd_r\tr\left(\bSig_r^{-1} \dot{\bSig}_r^k \right) \tr\left(\bSig_c^{-1} \dot{\bSig}_c^i \right) + 2\tr\left(\left(\bSig_r^{-1} \dot{\bSig}_r^k \right) \otimes \left(\bSig_c^{-1} \dot{\bSig}_c^i \right)\right)  \right\}.      
\end{IEEEeqnarray*}
% \begin{IEEEeqnarray*}{rCl}
	%      &&\hskip-2.5em\bbE\left\{\left(\nu+\delta_{\bX_n}(\btheta)\right)^{-2} \beps_n^{'} \left( \bSig_r^{-1} 
	%      \dot{\bSig}_r^k \bSig_r^{-1}  \otimes  \bSig_c^{-1} \right) \beps_n \beps_n^{'} \left(\bSig_r^{-1} \otimes\left(\bSig_c^{-1} \dot{\bSig}_c^i \bSig_c^{-1}\right)\right) \beps_n\right\} \\ 
	%      =&&\left(\nu+d_cd_r\right)^{-1}\left(\nu+d_cd_r+2\right)^{-1} 
	%      \left\{d_cd_r\tr\left(\bSig_r^{-1} \dot{\bSig}_r^k \right) \tr\left(\bSig_c^{-1} \dot{\bSig}_c^i \right) \right. \notag\\
	%           &  &  \left. \> 
	%      + 2\tr\left(\left(\bSig_r^{-1} \dot{\bSig}_r^k \right) \otimes \left(\bSig_c^{-1} \dot{\bSig}_c^i \right)\right)  \right\}.
	% \end{IEEEeqnarray*}

(g) and (h) can be proven in a similar way. This completes the proof of \refthm{thm:Exp}. %.
\end{proof}
	
	\begin{thm} \label{thm:info}
		For tBFA model in \refe{eqn:tbfa}, the Fisher information matrix $\bI_N(\bTheta)$ containing $N$ i.i.d. observations can be expressed as a block matrix \refe{eqn:inform},
		\begin{IEEEeqnarray}{rCl}\label{eqn:inform}
			\bI_N(\bTheta) =-\sum_{n=1}^{N} \bbE[\bH_n(\btheta\mid\cX)]= \begin{bmatrix}
				
				\bI_N^{\bmu\bmu}  &  \bo 	&\bo &\bo \\
				\bo 	&  \bI_N^{\btheta_c\btheta_c}	& \bI_N^{\btheta_c\btheta_r}& \bI_N^{\btheta_c\nu}\\
				\bo	&  \bI_N^{\btheta_r\btheta_c}&  \bI_N^{\btheta_r\btheta_r} & \bI_N^{\btheta_r\nu}\\
				\bo	&  \bI_N^{\nu\btheta_c}&  \bI_N^{\nu\btheta_r} & \bI_N^{\nu\nu}
			\end{bmatrix},
		\end{IEEEeqnarray}	
		Here, the specific expressions for each element of $\bI_N^{\btheta_c\btheta_c}$, $\bI_N^{\btheta_r\btheta_r}$ and $\bI_N^{\bmu\bmu}$, $\bI_N^{\nu \nu}$ are presented below:
		\begin{IEEEeqnarray*}{rCl}
			&& \bI_N^{\bmu\bmu} = N\frac{\nu+d_cd_r}{\nu+d_cd_r+2}\left(\bSig_r^{-1} \otimes \bSig_c^{-1}\right), \\
			&& \bI_N^{\nu\nu} = N \left\{\frac{1}{4}\bTG(\frac{\nu}{2})-\frac{1}{4}\bTG(\frac{\nu+d_cd_r}{2})-\frac{d_cd_r(\nu+d_cd_r+4)}{2\nu(\nu+d_cd_r)(\nu+d_cd_r+2)}\right\},\\
			&& \bI_N^{\theta_c^i\theta_c^j} = \frac{ Nd_r}{2(\nu+d_cd_r+2)}\left\{\left(\nu+d_cd_r \right) \tr \left(\bSig_c^{-1} \dot{\bSig}_c^i \bSig_c^{-1} \dot{\bSig}_c^j \right)-d_r\tr \left(\bSig_c^{-1} \dot{\bSig}_c^i\right)\tr \left(\bSig_c^{-1} \dot{\bSig}_c^j\right) \right\}.\\
			&& \bI_N^{\theta_c^i\theta_r^k} =\frac{ N}{2(\nu+d_cd_r+2)}\left\{\left(\nu+d_cd_r \right)\tr \left\{\left(\bSig_r^{-1} \dot{\bSig}_r^k\right) \otimes \left(\bSig_c^{-1} \dot{\bSig}_c^i\right)\right\}-d_cd_r\tr \left(\bSig_r^{-1} \dot{\bSig}_r^k\right)  \tr \left(\bSig_c^{-1} \dot{\bSig}_c^i\right)  \right\}.\\
			&& \bI_N^{\theta_c^i\nu} = \frac{-N d_r}{(\nu+d_cd_r)(\nu+d_cd_r+2)} \tr\left(\bSig_c^{-1} \dot{\bSig}_c^i \right),\\
			&& \bI_N^{\theta_r^k\theta_r^s} = \frac{ Nd_c}{2(\nu+d_cd_r+2)}\left\{\left(\nu+d_cd_r \right)\tr \left(\bSig_r^{-1} \dot{\bSig}_r^k \bSig_r^{-1} \dot{\bSig}_r^s \right)-d_c\tr \left(\bSig_r^{-1} \dot{\bSig}_r^k\right)\tr \left(\bSig_r^{-1} \dot{\bSig}_r^s\right) \right\}.\\
			&& \bI_N^{\theta_r^k\nu} = \frac{-N d_c}{(\nu+d_cd_r)(\nu+d_cd_r+2)} \tr\left(\bSig_r^{-1} \dot{\bSig}_r^k \right),
		\end{IEEEeqnarray*}
		where $\dot{\bSig}_c^i=\partial \bSig_c/\partial \theta_c^i=\bC(\partial \bC'/\partial \theta_c^i )+ (\partial \bC/\partial \theta_c^i )\bC'$ if $\theta_c^i \in \bC$, and then $\dot{\bSig}_c^i=\partial \bPsi_c / \partial \theta_c^i$ if $\theta_c^i \in \bPsi_c$. $\theta_c^i$ denotes the $i$-th element of $\btheta_c$, $i \in \{1,\dots,d_c(d_c+1)/2\}$. Similarly, $\dot{\bSig}_r^k=\partial \bSig_r/\partial \theta_r^k=\bR(\partial \bR'/\partial \theta_r^k )+ (\partial \bR/\partial \theta_r^k )\bR'$ if $\theta_r^k \in \bR$, and $\dot{\bSig}_r^i=\partial \bPsi_r / \partial \theta_r^k$ if $\theta_r^k \in \bPsi_r$,  $\theta_r^k$ denotes the $k$-th element of $\btheta_r$, $k \in \{1,\dots,d_r(d_r+1)/2\}$. Note that $\partial \bC / \partial \theta_c^i, \partial \bPsi_c/ \partial \theta_c^i,\partial \bR / \partial \theta_r^k$, and $\partial \bPsi_r/ \partial \theta_r^k$ are 0-1 matrices, i.e., each element of the matrix is 0 or 1.
	\end{thm}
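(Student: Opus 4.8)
The plan is to form $\bI_N(\bTheta)=-\sum_{n=1}^N\bbE[\bH_n(\btheta\mid\bX_n)]=-N\,\bbE[\bH_n]$ using the i.i.d. structure, differentiate the single-observation log-density from \refe{eqn:tbfa.like} twice in each pair of parameter blocks, and evaluate the resulting expectations with the moment identities of \refthm{thm:Exp}. Writing $\beps_n=\vec(\bX_n-\bW)$ so that $\delta_{\bX_n}(\btheta)=\beps_n'(\bSig_r^{-1}\otimes\bSig_c^{-1})\beps_n$, the log-density splits into the data term $-\tfrac12(\nu+d_cd_r)\ln(\nu+\delta_{\bX_n})$, the determinant term $-\tfrac12(d_r\ln|\bSig_c|+d_c\ln|\bSig_r|)$, and a scalar term in $\nu$ alone. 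Throughout I would use $\partial\ln|\bSig_c|/\partial\theta_c^i=\tr(\bSig_c^{-1}\dot{\bSig}_c^i)$ and $\partial\bSig_c^{-1}/\partial\theta_c^i=-\bSig_c^{-1}\dot{\bSig}_c^i\bSig_c^{-1}$ (and the analogous $r$-identities) to write the score and Hessian in terms of $\dot{\bSig}_c^i,\dot{\bSig}_r^k$.

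First I would dispatch the zero blocks. Since $\partial\log f/\partial\bmu=(\nu+d_cd_r)(\nu+\delta_{\bX_n})^{-1}(\bSig_r^{-1}\otimes\bSig_c^{-1})\beps_n$ is odd in $\beps_n$, each mixed second derivative $\partial^2\log f/\partial\bmu\,\partial\theta_c^i$, $\partial^2\log f/\partial\bmu\,\partial\theta_r^k$ and $\partial^2\log f/\partial\bmu\,\partial\nu$ is again an odd function of $\beps_n$; because $\beps_n$ is symmetric about $\bo$ under \refe{eqn:dist.X}, their expectations vanish, giving the block-diagonal isolation of $\bmu$ in \refe{eqn:inform}. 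For $\bI_N^{\bmu\bmu}$ I would differentiate the $\bmu$-score once more w.r.t. $\bmu'$; the two surviving expectations are evaluated by \refthm{thm:Exp}(a) and (e), collapsing to a multiple of $\bSig_r^{-1}\otimes\bSig_c^{-1}$ with the prefactor $(\nu+d_cd_r)/(\nu+d_cd_r+2)$. For $\bI_N^{\nu\nu}$ I would twice differentiate the $\ln(\nu+\delta_{\bX_n})$ and $\Gamma/\nu$ pieces: the trigamma contributions $\bTG(\nu/2)$ and $\bTG((\nu+d_cd_r)/2)$ come from the $\Gamma$ terms, while \refthm{thm:Exp}(a)--(c) handle the remaining $(\nu+\delta_{\bX_n})^{-1}$ and $(\nu+\delta_{\bX_n})^{-2}$ moments.

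For the covariance blocks the work is heavier. The mixed block $\bI_N^{\theta_c^i\theta_r^k}$ equals $-N$ times the expected cross-partial already assembled in the proof of \refthm{thm:Exp}(f): substituting the quartic moment (f) into the Hessian expectation \refe{eqn:Sec}--\refe{eqn:ESec}, combining it with the closed-form trace piece evaluated there, and negating reproduces the stated entry. The diagonal blocks $\bI_N^{\theta_c^i\theta_c^j}$ and $\bI_N^{\theta_r^k\theta_r^s}$ follow the identical route but invoke the quartic identities \refthm{thm:Exp}(g) and (h) respectively, while the $\theta$--$\nu$ blocks $\bI_N^{\theta_c^i\nu}$ and $\bI_N^{\theta_r^k\nu}$ need only the simpler quadratic-form moments of \refthm{thm:Exp}(a)--(e). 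Finally the chain rule $\dot{\bSig}_c^i=\bC(\partial\bC'/\partial\theta_c^i)+(\partial\bC/\partial\theta_c^i)\bC'$ for $\theta_c^i\in\bC$ and $\dot{\bSig}_c^i=\partial\bPsi_c/\partial\theta_c^i$ for $\theta_c^i\in\bPsi_c$ (with the analogous $r$-identities) transfers the results to the free loadings and uniquenesses. The main obstacle I anticipate is the careful accounting of the several quartic forms in $\beps_n$ generated by the second derivatives of $(\nu+d_cd_r)\ln(\nu+\delta_{\bX_n})$: the Kronecker factors must be kept in the exact left-to-right order appearing on the left-hand sides of \refthm{thm:Exp}(f)--(h), and one must verify that the quadratic cross-terms arising in the score products cancel against those in the direct Hessian (as in the passage from \refe{eqn:Fir} to \refe{eqn:EFir}), so that only the two surviving trace structures remain.
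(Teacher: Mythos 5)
Your proposal is correct and follows essentially the same route as the paper's own proof: write out the score vector and Hessian entries block by block, take expectations using the moment identities of \refthm{thm:Exp} (with (f), (g), (h) supplying the quartic forms for the $\btheta_c$--$\btheta_r$, $\btheta_c$--$\btheta_c$, and $\btheta_r$--$\btheta_r$ blocks, and (a)--(e) the remaining ones), and assemble the block matrix. Your explicit odd-in-$\beps_n$ symmetry argument for the vanishing $\bmu$-mixed blocks is a slightly more transparent justification than the paper's terse appeal to \refthm{thm:Exp}, but it is the same overall strategy, not a different proof.
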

\begin{proof}
	%	See \refap{sec:thm2.proof}. %supplementary material S1.3.
	\noindent
{\bf Score vector:} The score vector $\bs\left(\btheta \mid \bx_n\right)$ is the vector of the first derivatives of $\ln f \left(\bx_n\right)$ w.r.t. $\btheta=(\bmu, \btheta_c, \btheta_r, \nu)$. It comprises the following entries:
\begin{IEEEeqnarray*}{rCl}
	\bs^\mu &&=\sum_{n=1}^{N}\frac{\left(\nu+d_cd_r\right)}{\left(\nu+\delta_{\bX_n}(\btheta)\right)} \left(\bSig_r^{-1}  \otimes \bSig_c^{-1}\right) \beps_n^{'}, \\
	\bs^{\theta_c^i} &&= -\frac{1}{2}\sum_{n=1}^{N} \left[d_r \tr \left(\bSig_c^{-1}\dot{\bSig}_c^i\right) - \left(\nu+d_cd_r\right)\tr \left\{ \frac{\bSig_c^{-1} \left(\bX_n-\bW\right)\bSig_r^{-1}\left(\bX_n-\bW\right)^{'}\bSig_c^{-1}\dot{\bSig}_c^i}{\nu+\delta_{\bX_n}(\btheta)} \right\} \right] , \\
	\bs^{\theta_r^k} &&= -\frac{1}{2}\sum_{n=1}^{N} \left[d_c \tr\left(\bSig_r^{-1}\dot{\bSig}_r^k \right) - \left(\nu+d_cd_r\right)\tr \left\{ \frac{ \bSig_r^{-1}\left(\bX_n-\bW \right)^{'}\bSig_c^{-1}\left(\bX_n-\bW\right)\bSig_r^{-1}\dot{\bSig}_r^k}{\nu+\delta_{\bX_n}(\btheta)} \right\} \right] ,\\
	\bs^{\nu}&&=\frac{1}{2}\sum_{n=1}^{N}\left[ \psi\left(\frac{\nu+d_cd_r}{2}\right)-\psi\left(\frac{\nu}{2}\right)+ \ln(\nu)+1-\ln \left(\nu+\delta_{\bX_n}(\btheta)\right)-\frac{\nu+d_cd_r}{\nu+\delta_{\bX_n}(\btheta)}
	\right].
\end{IEEEeqnarray*}

\noindent
{\bf The elements of the Hessian matrix are:}
\begin{IEEEeqnarray*}{rCl}
	\bH^{\bmu\bmu} 
	&&= \sum_{n=1}^{N} \left[ \frac{2\left(\nu+d_cd_r\right)}{\left( \nu+\delta_{\bX_n}(\btheta)\right)^2}\left(\bSig_r^{-1} \otimes \bSig_c^{-1} \right)\beps_n\beps_n^{'}\left(\bSig_r^{-1} \otimes \bSig_c^{-1} \right)-\frac{\nu+d_cd_r}{\nu+\delta_{\bX_n}(\btheta)}\left(\bSig_r^{-1} \otimes \bSig_c^{-1} \right) \right],\\
	\bH^{\bmu\theta_c^i} 
	&&= \sum_{n=1}^{N} \left[ \frac{\nu+d_cd_r}{\left( \nu+\delta_{\bX_n}(\btheta)\right)^2}\tr\left\{\bSig_c^{-1}\dot{\bSig}_c^i\bSig_c^{-1}   \left(\bX_n-\bW \right)\bSig_r^{-1} \left(\bX_n-\bW \right)^{'} \right \} \left(\bSig_r^{-1} \otimes \bSig_c^{-1}  \right)\beps_n\right. \\
	&&\quad \left.-\frac{\nu+d_cd_r}{\nu+\delta_{\bX_n}(\btheta)}\left\{\bSig_r^{-1} \otimes \left(\bSig_c^{-1}\dot{\bSig}_c^i\bSig_c^{-1} \right)\right\}\beps_n \right],\\ 
	\bH^{\bmu\theta_r^k} 
	&&= \sum_{n=1}^{N} \left[ \frac{\nu+d_cd_r}{\left( \nu+\delta_{\bX_n}(\btheta)\right)^2}\tr\left\{\bSig_r^{-1}\dot{\bSig}_r^k\bSig_r^{-1}   \left(\bX_n-\bW \right)^{'}\bSig_c^{-1} \left(\bX_n-\bW \right) \right \} \left(\bSig_r^{-1} \otimes \bSig_c^{-1}  \right)\beps_n\right. \\
	&&\quad \left.-\frac{\nu+d_cd_r}{\nu+\delta_{\bX_n}(\btheta)}\left\{\left(\bSig_r^{-1}\dot{\bSig}_r^k\bSig_r^{-1} \right) \otimes \bSig_c^{-1} \right\}\beps_n \right],\\
	\bH^{\bmu\nu} 
	&&= \sum_{n=1}^{N} \frac{\delta_{\bX_n}(\btheta)-d_cd_r}{\left( \nu+\delta_{\bX_n}(\btheta)\right)^2}\left(\bSig_r^{-1} \otimes \bSig_c^{-1}  \right)\beps_n,\\
	\bH^{\nu\nu} 
	&&= -\frac{1}{2}\sum_{n=1}^{N}\left[\frac{1}{\nu+\delta_{\bX_n}(\btheta)}+ \frac{\delta_{\bX_n}(\btheta)-d_cd_r}{(\nu+\delta_{\bX_n}(\btheta))^2} \right]+\frac{N}{4}\left[ \bTG(\frac{\nu+d_cd_r}{2})-\frac{1}{4}\bTG(\frac{\nu}{2})+\frac{2}{\nu} \right],\\
	\bH^{\theta_c^i \theta_c^j} 
	&&=-\frac{1}{2} \sum_{n=1}^{N} \left[ d_r \tr\left(\bSig_c^{-1}\ddot{\bSig}_c^{ij}-\bSig_c^{-1} \dot{\bSig}_c^j \bSig_c^{-1} \dot{\bSig}_c^i\right) \right.\\
	&&\quad +\left.\frac{\nu+d_cd_r}{\left( \nu+\delta_{\bX_n}(\btheta)\right)}\tr \left \{2\bSig_c^{-1} \dot{\bSig}_c^j \bSig_c^{-1} \dot{\bSig}_c^i \bSig_c^{-1}\left(\bX_n-\bW\right)\bSig_r^{-1}\left(\bX_n-\bW\right)' %\right.\right.\\
	%&& \quad \left.\left.
	-\bSig_c^{-1} \ddot{\bSig}_c^{ij} \bSig_c^{-1} \left(\bX_n-\bW\right)\bSig_r^{-1}\left(\bX_n-\bW\right)'\right\} \right.\\
	&&\quad \left.- \frac{\nu+d_cd_r}{\left( \nu+\delta_{\bX_n}(\btheta)\right)^2}\tr\left\{\bSig_c^{-1} \left(\bX_n-\bW\right)\bSig_r^{-1}\left(\bX_n-\bW\right)' \bSig_c^{-1}\dot{\bSig}_c^{j}\right\}  
	% \right. \notag\\
	% && \hskip1em \left. \cdot\> 
	\tr\left\{\bSig_c^{-1} \left(\bX_n-\bW\right)\bSig_r^{-1}\left(\bX_n-\bW\right)'\bSig_c^{-1}\dot{\bSig}_c^{i}\right\} \right],\\
	\bH^{\theta_c^{i} \theta_r^{k}}
	&&= -\frac{1}{2}\sum_{n=1}^{N}\left[ \frac{\nu+d_cd_r}{ \nu+\delta_{\bX_n}(\btheta)}\tr \left\{ \bSig_c^{-1} \dot{\bSig}_c^i \bSig_c^{-1} \left(\bX_n-\bW\right) \bSig_r^{-1} \dot{\bSig}_r^k \bSig_r^{-1} \left(\bX_n-\bW\right)' \right\}\right. \\
	&&\quad \left.-\frac{\nu+d_cd_r}{\left( \nu+\delta_{\bX_n}(\btheta)\right)^2}\tr \left\{ \bSig_c^{-1} \dot{\bSig}_c^i \bSig_c^{-1} \left(\bX_n-\bW\right) \bSig_r^{-1}\left(\bX_n-\bW\right)' \right\}  
	% \right. \notag\\
	%      &  & \hskip1em \left. \cdot\> 
	\tr \left\{ \bSig_r^{-1} \dot{\bSig}_r^k \bSig_r^{-1} \left(\bX_n-\bW\right)' \bSig_c^{-1}\left(\bX_n-\bW\right)\right\} \right],
\end{IEEEeqnarray*}

\begin{IEEEeqnarray*}{rCl}
	\bH^{\theta_c^{i} \nu}
	&&= \frac{1}{2}\sum_{n=1}^{N} \frac{\delta_{\bX_n}(\btheta)-d_cd_r}{ \left(\nu+\delta_{\bX_n}(\btheta)\right)^2}\tr \left\{ \bSig_c^{-1} \left(\bX_n-\bW\right) \bSig_r^{-1}\left(\bX_n-\bW\right)' \bSig_c^{-1} \dot{\bSig}_c^i \right\},\\ 
	\bH^{\theta_r^k \theta_r^s} 
	&&=-\frac{1}{2} \sum_{n=1}^{N} \left [d_c \tr\left(\bSig_r^{-1}\ddot{\bSig}_r^{ks}-\bSig_r^{-1} \dot{\bSig}_r^s \bSig_r^{-1} \dot{\bSig}_r^k\right)\right.\\
	&& \quad +\left.\frac{\nu+d_cd_r}{\left( \nu+\delta_{\bX_n}(\btheta)\right)}\tr \left \{2\bSig_r^{-1} \dot{\bSig}_r^s \bSig_r^{-1} \dot{\bSig}_r^k \bSig_r^{-1}\left(\bX_n-\bW\right)' \bSig_c^{-1}\left(\bX_n-\bW\right)
	% \right.\right.\\
	% &&\quad \left.\left.
	-\bSig_r^{-1} \ddot{\bSig}_r^{ks} \bSig_r^{-1} \left(\bX_n-\bW\right)'\bSig_c^{-1}\left(\bX_n-\bW\right)\right\} \right.\\
	&&\quad \left.- \frac{\nu+d_cd_r}{\left( \nu+\delta_{\bX_n}(\btheta)\right)^2}\tr\left\{\bSig_r^{-1} \left(\bX_n-\bW\right)'\bSig_c^{-1}\left(\bX_n-\bW\right)\bSig_r^{-1}\dot{\bSig}_r^{s}\right\} 
	% \right. \notag\\
	%       &  & \hskip1em \left. \cdot\> 
	\tr\left\{\bSig_r^{-1} \left(\bX_n-\bW\right)'\bSig_c^{-1}\left(\bX_n-\bW\right)\bSig_r^{-1}\dot{\bSig}_r^{k}\right\}\right],\\
	\bH^{\theta_r^{k} \nu}
	&&= \frac{1}{2}\sum_{n=1}^{N} \frac{\delta_{\bX_n}(\btheta)-d_cd_r}{ \left(\nu+\delta_{\bX_n}(\btheta)\right)^2}\tr \left\{ \bSig_r^{-1} \left(\bX_n-\bW\right)' \bSig_c^{-1}\left(\bX_n-\bW\right) \bSig_r^{-1} \dot{\bSig}_r^k \right\}, \\
	\bH^{\nu \nu} && = \frac{1}{2}\sum_{n=1}^{N} \left[\frac{1}{2}\bTG(\frac{\nu+d_cd_r}{2}) - \frac{1}{2}\bTG(\frac{\nu}{2})+\frac{1}{\nu}-\frac{1}{\nu+\delta_{\bX_n}(\btheta)}-\frac{\delta_{\bX_n}(\btheta)-d_cd_r}{(\nu+\delta_{\bX_n}(\btheta))^2}\right],
\end{IEEEeqnarray*}
where $\ddot{\bSig}_c^{ij} = \partial^2 \bSig_c / \partial \theta_c^i \partial \theta_c^j$ for each $i,j \in \{1,\dots,d_c(d_c+1)/2\}$, $\ddot{\bSig}_r^{ks} = \partial^2 \bSig_r / \partial \theta_r^k \partial \theta_r^s$ for each $k,s \in \{1,\dots,d_r(d_r+1)/2\}$.    % \noindent
% {\bf Proof of \refthm{thm:info}} \label{sec:Prop1.proof}  
By \refthm{thm:Exp}, it can be shown that $\bbE \left[\bH^{\bmu\theta_c}\right] =\bo$, $\bbE \left[\bH^{\bmu\theta_r}\right] =\bo$ and $\bbE \left[\bH^{\bmu\nu}\right] =\bo$, and 
\begin{IEEEeqnarray*}{rCl}
	\bI_N^{\theta_c^i\theta_r^k} =&&-\bbE\left[\bH^{\theta_c^{i} \theta_r^{k}}\right] = \frac{1}{2}\sum_{n=1}^{N} \bbE  \left[\frac{\nu+d_cd_r}{ \nu+\delta_{\bX_n}(\btheta)}\tr \left\{ \bSig_c^{-1} \dot{\bSig}_c^i \bSig_c^{-1} \left(\bX_n-\bW\right) \bSig_r^{-1} \dot{\bSig}_r^k \bSig_r^{-1} \left(\bX_n-\bW\right)' \right\}\right. \\
	&&\left.-\frac{\nu+d_cd_r}{\left( \nu+\delta_{\bX_n}(\btheta)\right)^2}\tr \left\{ \bSig_c^{-1} \dot{\bSig}_c^i \bSig_c^{-1} \left(\bX_n-\bW\right) \bSig_r^{-1}\left(\bX_n-\bW\right)' \right\} \tr \left\{ \bSig_r^{-1} \dot{\bSig}_r^k \bSig_r^{-1} \left(\bX_n-\bW\right)' \bSig_c^{-1}\left(\bX_n-\bW\right)\right\} \right] \\
	=&& \frac{1}{2}\sum_{n=1}^{N} \left\{ \left(\nu+d_cd_r\right) \bbE  \left[\frac{\beps_n^{'}\left\{ \left( \bSig_r^{-1} \dot{\bSig}_r^k \bSig_r^{-1}\right) \otimes \left(\bSig_c^{-1} \dot{\bSig}_c^i \bSig_c^{-1} \right)\right\}\beps_n}{ \nu+\delta_{\bX_n}(\btheta)} \right.\right] \\
	&&\left.-\left(\nu+d_cd_r\right)\bbE \left[\frac{\beps_n^{'}\left(\bSig_r^{-1} \dot{\bSig}_r^k \bSig_r^{-1}\otimes \bSig_c^{-1}\right)\beps_n \beps_n^{'} \left(\bSig_r^{-1} \otimes \left(\bSig_c^{-1} \dot{\bSig}_c^i \bSig_c^{-1}\right)\right)\beps_n}{\left( \nu+\delta_{\bX_n}(\btheta)\right)^2}\right] \right\} \\
	=&& \frac{1}{2}\sum_{n=1}^{N} \left\{ \left(\nu+d_cd_r\right)\left\{\left( \bSig_r^{-1} \dot{\bSig}_r^k \bSig_r^{-1}\right) \otimes \left(\bSig_c^{-1} \dot{\bSig}_c^i \bSig_c^{-1} \right)\right\}  \bbE  \left[\frac{\beps_n \beps_n^{'}}{ \nu+\delta_{\bX_n}(\btheta)} \right.\right] \\
	&&\left.-\left(\nu+d_cd_r\right)\bbE \left[\frac{\beps_n^{'}\left(\bSig_r^{-1} \dot{\bSig}_r^k \bSig_r^{-1}\otimes \bSig_c^{-1}\right)\beps_n \beps_n^{'} \left(\bSig_r^{-1} \otimes \left(\bSig_c^{-1} \dot{\bSig}_c^i \bSig_c^{-1}\right)\right)\beps_n}{\left( \nu+\delta_{\bX_n}(\btheta)\right)^2}\right] \right\}\\
	=&& \frac{1}{2}\sum_{n=1}^{N} \left[\left(\nu+d_cd_r\right)\left\{\left( \bSig_r^{-1} \dot{\bSig}_r^k \right) \otimes \left(\bSig_c^{-1} \dot{\bSig}_c^i \right)\right\}  \right. \\
	&&\left.-\left(\nu+d_cd_r\right)\left(\nu+d_cd_r\right)^{-1}\left(\nu+d_cd_r+2\right)^{-1} \left\{d_cd_r\tr\left(\bSig_r^{-1} \dot{\bSig}_r^k \right) \tr\left(\bSig_c^{-1} \dot{\bSig}_c^i \right) 
	+2\tr\left(\left(\bSig_r^{-1} \dot{\bSig}_r^k \right) \otimes\left(\bSig_c^{-1} \dot{\bSig}_c^i \right)\right)  \right\}  \right]\\
	=&&\frac{ N}{2(\nu+d_cd_r+2)}\left[\left(\nu+d_cd_r \right)\tr \left\{\left(\bSig_r^{-1} \dot{\bSig}_r^k\right) \otimes  \left(\bSig_c^{-1} \dot{\bSig}_c^i\right) \right\}-d_cd_r\tr \left(\bSig_r^{-1} \dot{\bSig}_r^k\right)  \tr \left(\bSig_c^{-1} \dot{\bSig}_c^i\right) \right].
\end{IEEEeqnarray*}
Other cases can be demonstrated using a similar method.
\end{proof}
	
	\section{Experiments}\label{sec:expr}
	In this section, a series of experiments are conducted on simulated and real data to understand our proposed \emph{t}BFA and compare the performance of  \emph{t}BFA with three relevant methods: BFA \citep{zhao2023-BFA}, \emph{t}FA \citep{zhang2014robust} and FA. All computations are performed by Matlab 9.13 on a desktop computer with Intel Core i7-8700K 3.70GHz CPU and 64G RAM.
	%Since the primary focus of this paper is not the comparison of different criteria, we simply take the BIC as the model selection criterion in all experiments.
	
	\subsection{Notes on implementation}
	\subsubsection{Initialization and stopping rules}
	For all methods, we set $\eta = 0.005$. Unless otherwise stated, all iterative algorithms are initialized randomly. Iterations are stopped when the relative change in the log-likelihood function ($|1-\cL^{(t)}/\cL^{(t+1)}|$) is smaller than a threshold $tol$ (set to $10^{-8}$ in our experiments) or when the number of iterations exceeded a certain value $t_{max}$ (set to $1000$ in our experiments). 
	
	\subsubsection{Model selection}
	In practice, the number of factors $(q_c, q_r)$ in \tbfa is often unknown. To determine $(q_c, q_r)$, we adopt the Bayesian Information Criterion (BIC). BIC is well known for its theoretical consistency \citep{Schwarz1978}. The BIC is defined as follows:
	\begin{equation}\label{eqn:BIC}
		\cL^{*}\left\{(q_c,q_r),\bwTheta(q_c,q_r)\right\} = -2\cL\left\{\bwTheta(q_c,q_r)\mid\cX\right\}+\cD \ln N,
	\end{equation} 
	where $\bwTheta(q_c,q_r)$ represents the ML estimate of parameters $\bTheta$ in the $(q_c,q_r)$-factor model, $\cL(\bwTheta\mid\cX)$ denotes the value of the log-likelihood function \refe{eqn:tbfa.like} evaluated at $\bwTheta(q_c,q_r)$, and $\cD$ is the number of free parameters.
	
	% a commonly used model selection criterion known for 
	% For reader convenience, \reft{tab:dof} summarizes the number of free parameters for \tbfa, BFA, \tfa, and FA models.
	%\begin{table}[htb]
	%	\centering
	%	\caption{\label{tab:dof} Summary of the number of free parameters in \tbfa, BFA, \tfa, and FA Models.}
	%	\begin{tabular}{cc}
		%		\toprule
		%		Method       & Number of free parameter $\cD$ \\ \midrule
		%		\tbfa       & $d_c(q_c+1)-q_c(q_c-1)/2 +d_r(q_r+1)-q_r(q_r-1)/2 + d_cd_r + 1$ \\  
		%		BFA        & $d_c(q_c+1)-q_c(q_c-1)/2 +d_r(q_r+1)-q_r(q_r-1)/2 + d_cd_r$\\
		%		\tfa       & $d(q + 2) - q(q - 1)/2 + 1$  \\
		%		FA        & $d(q + 2) - q(q - 1)/2$  \\ \bottomrule
		%	\end{tabular}
	%\end{table}
	
	%The definition of the optimal model under the BIC criterion is given by
	By BIC, $(q_c, q_r)$ is determined by the mimimizer of $\cL^{*}$ in the candidate set $\mathcal{I}=\{(q_c,q_r) \mid q_{c,min}\leq q_c\leq q_{c,max},q_{r,min}\leq q_r\leq q_{r,max}\}$. Suppose that the true model is within the candidate set, due to the consistency of the BIC, as the sample size $N\rightarrow \infty$, the true model will be selected with probability approaching one.
	% By BIC, $(q_c, q_r)$ is determined by $(\hat{q}_c,\hat{q}_r) = \mathop{\arg\min}\limits_{(q_c,q_r)\in \mathcal{I}} \left[\cL^{*}\left\{(q_c,q_r),\bwTheta(q_c,q_r)\right\} \right],$
	% where $\mathcal{I}=\{(q_c,q_r) \mid q_{c,min}\leq q_c\leq q_{c,max},q_{r,min}\leq q_r\leq q_{r,max}\}$ is the set of candidate models. Suppose that the true model is within the candidate set, due to the consistency of the BIC, as the sample size $N\rightarrow \infty$, the true model will be selected with probability approaching one.
	%\begin{equation}\label{eqn:bestmodel} \end{equation}
	%by using \refe{eqn:bestmodel}.
	
	\subsection{Synthetic data}
	\subsubsection{Convergence of different algorithms}\label{sec:cov}
	In this experiment, we use three simulated datasets (i.e., Data 1, Data2 and Data3) to compare the convergence performance of four ML estimation algorithms for \emph{t}BFA: ECME, PX-ECME, AECM, and PX-AECM. These datasets are generated by BFA, namely $\bX\sim \cN_{d_c,d_r}(\bW,\bC\bC'+\bPsi_c,\bR\bR'+\bPsi_r)$. Data1 and Data2 are two low-dimensional datasets and Data3 is a high-dimensional one. The details about the data settings are given below.
	
	(i) Data1: Low-dimensional data with ordinary noise. The sample size $N=500$. The data dimensions $d_c = d_r = 10$, and the numbers of factors $q_c = q_r = 3$. Let $\bU_c$ (resp. $\bU_r$) is a $d_c\times q_c$ (resp. $d_r\times q_r$) randomly generated column-orthogonal matrix and $\bL_c$ (resp. $\bL_r$) is a $q_c\times q_c$ ($q_r\times q_r$) diagonal matrix. The parameter settings in BFA model are given by 
	\begin{IEEEeqnarray*}{rCl}
		&&\bW = \bo,\quad\bL_c = \diag\{\sqrt{5},\sqrt{4.5},\sqrt{4}\}, \quad\bL_r = \diag\{\sqrt{10},\sqrt{9},\sqrt{8}\}\\
		&&\bC = \bU_c\bL_c, \,\,\bR = \bU_r\bL_r,\,\,\bPsi_c = \diag\{\mbox{linspace}(0.5,1,10)\}, \,\,\bPsi_r = \diag\{\mbox{linspace}(1,2,10)\},
	\end{IEEEeqnarray*}
	% \begin{IEEEeqnarray*}{rCl}
		% 	&&\bW = \bo,\quad\bL_c = \diag\{\sqrt{5},\sqrt{4.5},\sqrt{4}\}, \quad\bL_r = \diag\{\sqrt{10},\sqrt{9},\sqrt{8}\}\\
		% 	&&\bC = \bU_c\bL_c, \quad\bR = \bU_r\bL_r,\\
		% 	&&\bPsi_c = \diag\{\mbox{linspace}(0.5,1,10)\}, \quad\bPsi_r = \diag\{\mbox{linspace}(1,2,10)\},
		% \end{IEEEeqnarray*}
	where \mbox{linspace}$(a, b, n)$ represents a vector of $n$ evenly spaced points from $a$ to $b$, i.e., $(a,a+(b-a)/(n-1),\dots,b)'$. 
	
	(ii) Data2: Low-dimensional data with low noise. As EM algorithms tend to be inefficient in low-noise linear models \citep{Kaar-slowem,zhao2008-efa}, we examine the performance of four algorithms under low-noise conditions. For Data2, $\bPsi_c = \diag\{\mbox{linspace}(0.05,0.1,10)\}$, $\bPsi_r = \diag\{\mbox{linspace}(0.1,0.2,10)\}$ are used to simulate low noise, with other parameter settings identical to Data1.
	
	(iii) Data3: High-dimensional data. Due to the lower computational complexity at a single iteration of the AECM algorithm compared to the ECME algorithm, we expect the AECM algorithm to consume less CPU time on high-dimensional data. To explore this, we use high-dimensional Data3 to compare the convergence performance of the four algorithms. The data dimensions are $d_c = 2000, d_r = 10$, the numbers of factors are also $q_c = q_r = 3$, the sample size is $N=100$, and other parameter settings are similar as Data1.
	
	\begin{figure*}[tb]
		\centering \scalebox{0.65}[0.53]{\includegraphics*{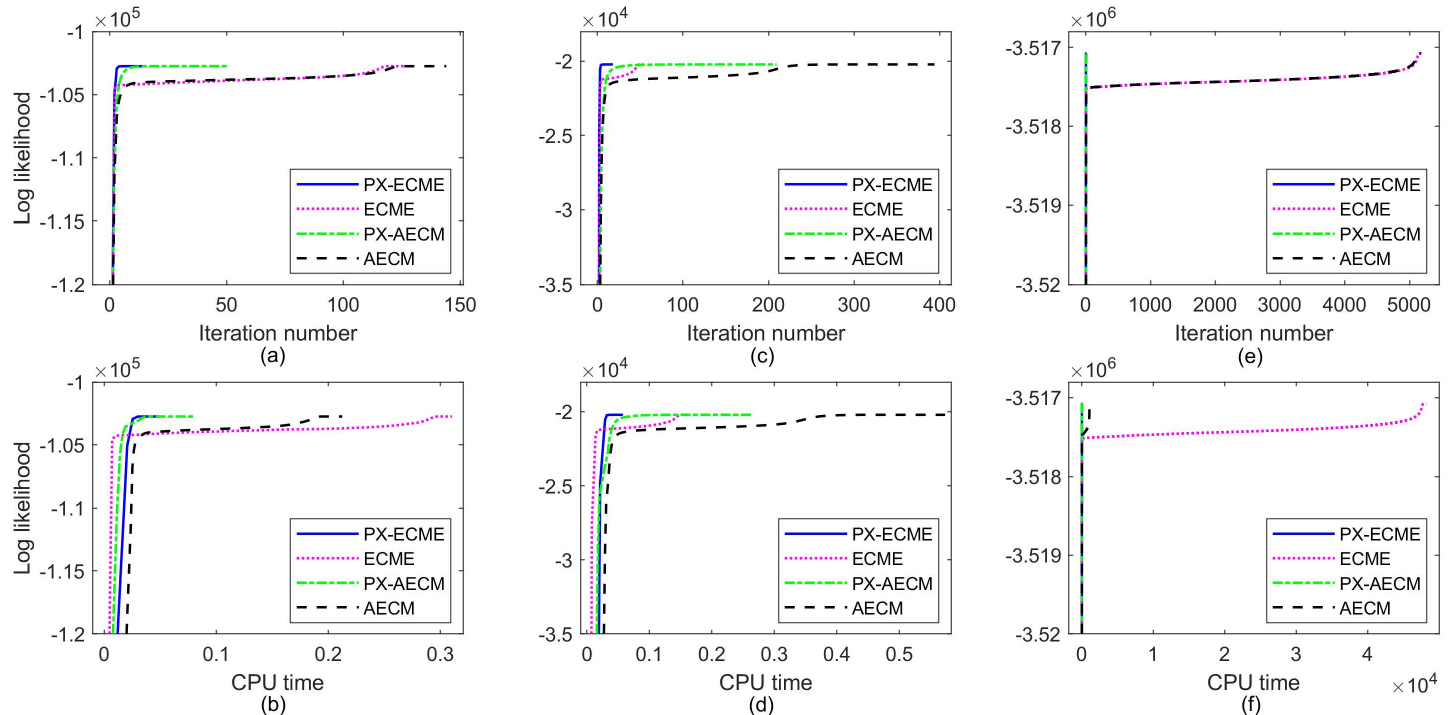}}
		\caption{Changes in log-likelihood $\cL$ for PX-ECME (solid), ECME (dotted), PX-AECM (dashdot), and AECM (dashed) algorithms with (a) number of iterations on Data1; (b) CPU time on Data1; (c) number of iterations on Data2; (d) CPU time on Data2; (e) number of iterations on Data3; (f) CPU time on Data3.} \label{fig:cov}
	\end{figure*}
	
	All four algorithms use the true factor dimensions $(3,3)$ to fit these three datasets. For demonstration purpose, we set $tol=10^{-9}$ for Data3. \reff{fig:cov} displays the evolution of the log-likelihood versus the number of iterations and used CPU time for the four algorithms on these three datasets. The main observations are as follows.
	\begin{enumerate}[(i)]
		\item For Data1, both PX-ECME and ECME converge faster compared to their counterparts PX-AECM and AECM; PX algorithms converge faster than their non-PX counterparts.
		\item For Data2, the conclusions are roughly similar to those on Data1. However, AECM-type algorithms converge much slower on Data2 than on Data1, while ECME-type algorithms are insensitive to low noise.
		\item For Data3, AECM-type algorithms consume less time compared to their ECME-type counterparts; PX algorithms are more efficient than their non-PX counterparts.
	\end{enumerate}
	%Due to the tendency of ECME and AECM algorithms to converge to local optima when fitting high-dimensional data, 
	
	Overall, PX-ECME algorithm performs well on different datasets. Therefore, we will use PX-ECME algorithm for \emph{t}BFA in the following experiments.
	%Due to the As subsequent experiments exclusively involve low-dimensional datasets,
	
	\subsubsection{Robustness}\label{sec:rbst}
	
	This experiment investigates the robustness of \emph{t}BFA, BFA, \emph{t}FA, and FA using simulated data containing outliers. Initially, we generate a dataset with $N = 1000$ normal observations from Data1 in \refs{sec:cov}. Subsequently, we add it with $Np/(1-p)$ abnormal observations so that the proportion of all outliers is $p$. Inspired by \cite{zhao2023-rfpca}, we considered three types of outliers under four different situations:
	\begin{enumerate}[(i)]
		\item Three types of outliers: 1) Factor Component (FC) outliers, showing outlyingness \emph{only} in the FC subspace; 2) Orthogonal Component (OC) outliers, showing outlyingness \emph{only} in the OC subspace; 3) FC+OC outliers, showing outlyingness in both the FC and OC subspaces.
		%Three types of outliers: 1) Factor Component (FC) outliers (which show outlyingness \emph{only} in the FC subspace); 2) orthogonal component (OC) outliers (which show outlyingness \emph{only} in the OC subspace); 3) FC+OC outliers (which show outlyingness in both the FC and OC subspaces). 
		\item Four situations (Sit): two abnormal levels in two cases (symmetry and asymmetry). To be specific, the three types of outliers are generated from the following situations: Sit-I: $U(-100,100)$, Sit-II: $U(-10000,10000)$, Sit-III: $U(100,110)$, and Sit-IV: $U(10000,11000)$. In the symmetric cases, the outliers are symmetrical about the true population mean and thus their impacts on the estimate of the mean parameter may be relatively small.
	\end{enumerate}
	In total, 12 outlier scenarios are investigated. For demonstration, \reff{fig:outliers} presents a two-dimensional subspace plot that illustrates the three types of outliers in an asymmetric case. Further details on generating the three types of outliers are provided in \refap{sec:gout}.
	
	\begin{figure*}[htb]
		\centering \scalebox{0.65}[0.55]{\includegraphics*{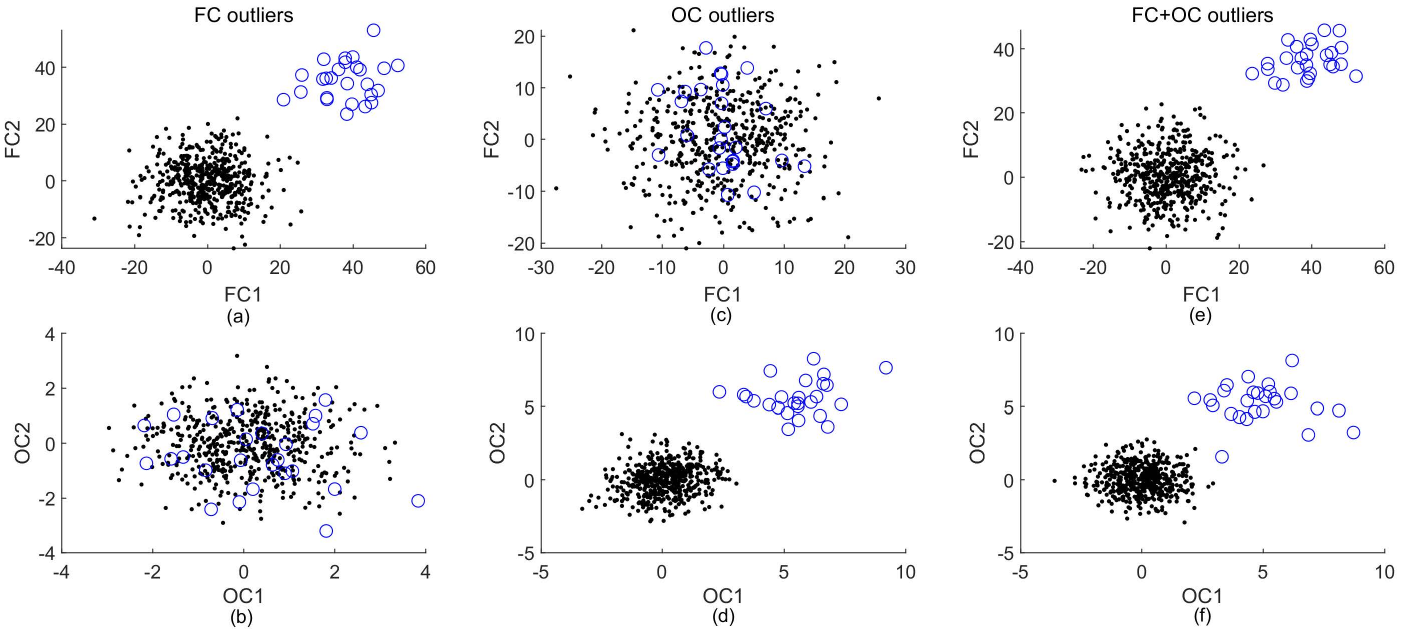}} 
		\caption{The two-dimensional subspace plot of the three kinds of outliers in an asymmetric case, where the observations and outliers are shown as black points and blue circles, respectively. First column: FC outliers in the (a) FC subspace, and (b) OC subspace; Second column: OC outliers in the (c) FC subspace, and (d) OC subspace; Third column: FC+OC outliers in the (e) FC subspace, and (f) OC subspace.}
		\label{fig:outliers}
	\end{figure*}
	
	To assess the accuracy of parameter estimates obtained by different methods, we report the relative differences between the true covariance matrix $\bSig=\bSig_r\otimes\bSig_c$ and the estimated $\bwSig$, denoted as $\Vert\bSig-\bwSig \Vert_F/\Vert\bSig\Vert_F$, where $\Vert\cdot\Vert_F$ represents the Frobenius norm. For FA and \emph{t}FA, $\bwSig=\bwA\bwA'+\bwPsi$. For BFA and \emph{t}BFA, $\bwSig=\bwSig_r\otimes\bwSig_c$, where $\bwSig_c=\bwC\bwC'+\bwPsi_c$ and $\bwSig_r=\bwR\bwR'+\bwPsi_r$. To reduce statistical variability, we report the average results over 50 repetitions.
	
	\begin{table*}[!t]
		\centering
		\caption{\label{tab:robust} Relative difference between the estimated and true covariance matrix for multiple $p$'s in six cases (i.e., three kinds of outliers (FC, OC, and FC+OC) in symmetric and asymmetric cases).}
		\begin{tabular}{cccrrrrrrrrrr}
			\toprule
			\multirow{2}{*}{Type} & \multirow{2}{*}{Method} &\multirow{2}{*}{\makecell[c]{Without\\outliers}}    & \multicolumn{10}{c}{Outliers with a proportion $p$}      \\ \cmidrule{4-13}
			&                         &          &0.5\%   &1\%   &2\%   &5\%   &9\%        &0.5\%   &1\%   &2\%   &5\%   &9\%  \\ \midrule
			&                         &          &\multicolumn{5}{c}{Sit-I: $U(-100,100)$} & \multicolumn{5}{c}{Sit-III: $U(100,110)$} \\ \cmidrule(r){4-8} \cmidrule{9-13}
			\multirow{4}{*}{FC}   &       \emph{t}BFA       &    0.0   &   0.0  & 0.1  & 0.1  & 0.4  & 0.8       &   0.1  &  0.1 & 0.2  & 1.2  & 14.5  \\
			&            BFA          &    0.0   &   4.7  & 8.7  & 16.4 & 40.0 & 69.5      &  13.5  & 25.5 & 48.2 &116.2 & 196.9  \\
			&       \emph{t}FA        &    0.2   &   0.3  & 0.3  &  0.4 & 1.4  & 21.8      &   0.5  & 55.3 &240.3 &774.2 &1442.9  \\
			&            FA           &    0.2   &  18.5  & 29.5 & 52.7 &127.1 & 219.9     & 113.8  &225.4 &442.0 &1099.1&1884.7  \\ \midrule %\cmidrule(r){4-8} \cmidrule{9-13}
			\multirow{4}{*}{OC}   &       \emph{t}BFA       &   0.0   &   0.0  & 0.0  & 0.1  & 0.1  & 0.1       &   0.0  &  0.0 & 0.1  & 0.1  & 0.5  \\
			&            BFA          &    0.0   &   0.9  & 1.4  & 2.3  & 5.5  & 9.8       &  11.9  & 22.6 & 40.8 & 83.7 & 125.3  \\
			&       \emph{t}FA        &    0.2   &   0.2  & 0.2  & 0.2  & 0.2  & 0.2       &   0.2  &  0.2 &  0.2 & 71.5 & 133.3  \\
			&            FA           &    0.2   &   2.3  & 3.3  & 4.6  & 8.2  & 12.3      &  15.3  & 30.3 & 59.5 &148.0 & 253.8  \\ \midrule%\cmidrule(r){4-8} \cmidrule{9-13} 
			\multirow{4}{*}{FC+OC}&       \emph{t}BFA       &   0.0   &   0.0  & 0.0  & 0.0  & 0.0  & 0.1       &   0.0  &  0.1 & 0.1  & 0.3  & 2.1  \\
			&            BFA          &    0.0   &  14.3  &26.2  &50.5  &128.1 &228.1      &  29.4  & 53.5 & 96.6 &226.5 & 379.7  \\
			&       \emph{t}FA        &    0.2   &   0.2  & 0.2  & 0.2  & 0.2  & 0.2       &   0.2  &  0.3 & 73.9 &253.7 & 491.8  \\
			&            FA           &    0.2   &  32.6  &48.9  &74.3  &150.8 &244.6      & 217.8  &431.2 &846.5 &2103.6&3609.4  \\ \bottomrule                       	
		\end{tabular}
	\end{table*}
	\reft{tab:robust} displays the results for Sit-I and Sit-III at varying proportions $p$. The results for Sit-II and Sit-IV, with higher abnormal levels, are similar to those in \reft{tab:robust}, and details are omitted here. Looking at \reft{tab:robust}, we have the following.
	%Please refer to Table~S1 in supplementary materials for the details.
	%  and Table~S1
	\begin{enumerate}[(i)]
		\item \emph{t}BFA versus BFA and FA: \emph{t}BFA is more robust than BFA and FA, both of which are susceptible to outliers. The larger the value of $p$, the greater the distance.
		\item \emph{t}BFA versus \emph{t}FA: \emph{t}FA is not robust except for OC and FC+OC outliers in the symmetric case. Notably, for FC outliers in the asymmetric case, \emph{t}FA is only robust when the proportion of outliers is 0.5\%, but not when it exceeds 1\%. In contrast, \emph{t}BFA maintains robustness across all $p$ values in all six cases. This implies that \emph{t}BFA can handle datasets with a significantly higher proportion of outliers compared to \emph{t}FA.
	\end{enumerate}
	
	\emph{Remarks}: Our results obtained in this subsection are generally comparable with those in \cite{zhao2023-rfpca}. The main reason for the superior performance of \tbfa over \tfa is given below. It is known that the robustness of estimators can be assessed through breakdown points, and  the breakdown point of the multivariate \emph{t} distribution is upper-bounded by $1/(d+\nu)$ \citep{Dumbgen-bkd} and thus smaller than $1/d$. For our used data, $d=cr=100$ and thus $1/d=1\%$. This elucidates the complete failure of \tfa when $p\geq1\%$ for PC outliers in the asymmetric case. However, this bound does not hold for \tbfa, since \tbfa still performs well even when $p=9\%$ across all six cases. This suggests that the proposed \emph{t}BFA has a significantly higher bound than \emph{t}FA, making \emph{t}BFA more suitable for matrix data.
	% , as evidenced in Table~S2 of the supplementary material
	% \emph{Remarks}: The robustness of estimators can be assessed through breakdown points. As proved in \cite{Dumbgen-bkd}, the breakdown point of the multivariate \emph{t} distribution, indicating the proportion of arbitrarily large outlying observations, is upper-bounded by $1/(d+\nu)$ (where $d$ is the dimension and $\nu$ is the degrees of freedom), and this bound is strictly less than $1/d$. For our used dataset, the dimensions are given by $d=cr=100$, resulting in $1/d=1\%$. This elucidates the superior performance of \emph{t}FA when $p=0.5\%$, yet its complete failure when $p\geq1\%$ for PC outliers in the asymmetric case. Notably, the theoretical boundary established for the multivariate \emph{t} distribution does not hold for \emph{t}BFA. \emph{t}BFA still performs well even when $p=9\%$ across all six cases, as evidenced in Table~S2 of the supplementary material. This suggests that the proposed \emph{t}BFA has a significantly higher bound than \emph{t}FA, making \emph{t}BFA more suitable for matrix data.

	\subsubsection{Accuracies of estimators}\label{sec:acc}
	This experiment investigates the finite sample performance of the ML estimators and the estimated standard errors. The experiment utilizes $5\times5$-dimensional matrix datasets generated by \tbfa \refe{eqn:tbfa}, with the following parameter settings:
	\begin{IEEEeqnarray*}{rCl}
		\begin{split}
			\bW&=\bo,\quad \nu=3,\quad 
			\bPsi_c = \diag\{0.1,0.2,0.3,0.4,0.5\},\quad      \bPsi_r = \diag\{0.2,0.3,0.4,0.5,0.6\},\\
			\bC'&=
			\begin{bmatrix}
				-1.03 & -0.78 & -1.35 & -1.05 & -2.10 \\
				3.47  & -4.39 & 6.99 & -3.44 & -2.83
			\end{bmatrix},\quad
			\bR' = \begin{bmatrix}
				-1.12 & -1.40 & -1.45 & -1.54 & -1.15 \\
				-2.15  & -5.44 & -4.71 & 6.28 & 2.04,
			\end{bmatrix}.
		\end{split}
	\end{IEEEeqnarray*}
	% \begin{IEEEeqnarray*}{rCl}
		% 	\begin{split}
			% 		\bW&=\bo,\quad \nu=3, \\
			% 		\bPsi_c &= \diag\{0.1,0.2,0.3,0.4,0.5\},\quad      \bPsi_r = \diag\{0.2,0.3,0.4,0.5,0.6\},\\
			% 		\bC'&=
			% 		\begin{bmatrix}
				% 			-1.03 & -0.78 & -1.35 & -1.05 & -2.10 \\
				% 			3.47  & -4.39 & 6.99 & -3.44 & -2.83
				% 		\end{bmatrix},\\
			% 		\bR' &= \begin{bmatrix}
				% 			-1.12 & -1.40 & -1.45 & -1.54 & -1.15 \\
				% 			-2.15  & -5.44 & -4.71 & 6.28 & 2.04,
				% 		\end{bmatrix}.
			% 	\end{split}
		% \end{IEEEeqnarray*}
	
	%The true factor dimensions are $q_c=q_r=2$ for each dataset. 
	To study performance under finite sample conditions, datasets with different sample sizes $N$ are generated from the set $N \in \{100, 500, 5000\}$. We fit \emph{t}BFA with the true factor dimensions $q_c=q_r=2$ for each dataset. To reduce the effects of randomness, we perform 100 repetitions for each dataset. The following metrics are calculated.
	
	The performance of the ML estimators is evaluated using the root mean square error (RMSE):
	\begin{equation*}
		\hbox{RMSE}\left(\bwTheta_i\right)=\sqrt{\frac{1}{100} \sum_{h=1}^{100}\left(\bwTheta_i^{(h)}-\bTheta_i\right)^2},
	\end{equation*}
	where $\bwTheta^{(h)}$ represents the  ML estimators of the true parameters $\bTheta$ obtained in the $h$-th repetition.
	
	Additionally, to investigate the consistency of the estimated standard errors, the average values of information matrix-based standard errors (IMSE) in \refe{eqn:inform} and empirical standard deviations (ESTD) are calculated:
	% The information matrix for IMSE is given in \refe{eqn:inform}.
	\begin{equation*}
		\hbox{IMSE}\left(\bwTheta_i\right) = \frac{1}{100}\sum_{h=1}^{100}\text{SE}(\bwTheta_i^{(h)}), \quad
		\hbox{ESTD}\left(\bwTheta_i\right) = \sqrt{\frac{1}   {99}\left[\sum_{h=1}^{100}\left(\bwTheta_i^{(h)}\right)^2-\frac{1}{100}\left(\sum_{h=1}^{100}\bwTheta_i^{(h)}\right)^2\right]},
	\end{equation*}
	% \begin{IEEEeqnarray*}{rCl}
		% 	&\hbox{IMSE}\left(\bwTheta_i\right) = \frac{1}{100}\sum_{h=1}^{100}\text{SE}(\bwTheta_i^{(h)}), \\
		% 	&\hbox{ESTD}\left(\bwTheta_i\right) = \sqrt{\frac{1}   {99}\left[\sum_{h=1}^{100}\left(\bwTheta_i^{(h)}\right)^2-\frac{1}{100}\left(\sum_{h=1}^{100}\bwTheta_i^{(h)}\right)^2\right]},
		% \end{IEEEeqnarray*}
	where $\text{SE}\left(\bwTheta_i^{(h)}\right)$ denotes the information matrix-based standard errors (IMSE) of $\bTheta_i$ in the $h$-th repetition in \refs{sec:stderr}. Note that the restrictions mentioned in \refs{sec:tbfa.identi} are imposed on \emph{t}BFA to ensure the uniqueness of the estimators.
	
	\begin{table}[htb]
		\centering
		\caption{\label{tab:rmse} RMSEs of the ML estimates.}
		\begin{tabular}{cccccccccccc}
			\toprule
			$\bC,\nu$         &  $N$ &$c_{21}$ &$c_{31}$ &$c_{41}$ &$c_{51}$ &$c_{12}$ &$c_{22}$ &$c_{32}$ &$c_{42}$ &$c_{52}$ &$\nu$ \\ \midrule
			\multirow{3}{*}{RMSE} & 100  & 0.122   & 0.110   & 0.111   & 0.148   & 0.211   & 0.241   & 0.392   & 0.192   & 0.170   & 0.548\\
			& 500  & 0.054   & 0.053   & 0.051   & 0.068   & 0.095   & 0.103   & 0.177   & 0.082   & 0.071   & 0.220\\
			& 5000 & 0.018   & 0.017   & 0.017   & 0.023   & 0.027   & 0.032   & 0.051   & 0.025   & 0.023   & 0.061\\ \midrule
			$ \bPsi_c,\bPsi_r$    & $N$  &$\psi_{c,2}$ &$\psi_{c,3}$ &$\psi_{c,4}$ &$\psi_{c,5}$ &$\psi_{r,1}$ &$\psi_{r,2}$ &$\psi_{r,3}$ &$\psi_{r,4}$ &$\psi_{r,5}$\\ \midrule
			\multirow{3}{*}{RMSE} & 100  & 0.029       & 0.157       & 0.048       & 0.072       & 0.037       & 0.118       & 0.122       & 0.114       & 0.081 \\
			& 500  & 0.012       & 0.067       & 0.022       & 0.033       & 0.014       & 0.044       & 0.047       & 0.048       & 0.035 \\
			& 5000 & 0.005       & 0.019       & 0.007       & 0.010       & 0.005       & 0.014       & 0.013       & 0.014       & 0.010 \\ \bottomrule
		\end{tabular}
	\end{table}
	
	\begin{table}[htb]
		\centering
		\caption{\label{tab:std} STDs and IMSEs of the ML estimates.}
		\begin{tabular}{cccccccccccc}
			\toprule
			$N$              & Measure &$c_{21}$ &$c_{31}$ &$c_{41}$ &$c_{51}$ &$c_{12}$ &$c_{22}$ &$c_{32}$ &$c_{42}$ &$c_{52}$ &$\nu$ \\ \midrule
			\multirow{2}{*}{100} &  ESTD   & 0.122   & 0.110   & 0.111   & 0.149   & 0.212   & 0.242   & 0.394   & 0.192   & 0.171   & 0.538    \\
			&  IMSE   & 0.124   & 0.113   & 0.118   & 0.159   & 0.199   & 0.229   & 0.369   & 0.184   & 0.175   & 0.444   \\
			\multirow{2}{*}{500} &  ESTD   & 0.054   & 0.054   & 0.051   & 0.068   & 0.095   & 0.102   & 0.176   & 0.081   & 0.071   & 0.218    \\
			&  IMSE   & 0.055   & 0.050   & 0.053   & 0.071   & 0.089   & 0.102   & 0.165   & 0.082   & 0.078   & 0.192   \\
			\multirow{2}{*}{5000} &  ESTD   & 0.018   & 0.016   & 0.017   & 0.023   & 0.027   & 0.033   & 0.052   & 0.025   & 0.023   & 0.062    \\
			&  IMSE   & 0.018   & 0.016   & 0.017   & 0.023   & 0.028   & 0.032   & 0.052   & 0.026   & 0.025   & 0.060   \\ \midrule
			$N$       & Measure &$\psi_{c,2}$ &$\psi_{c,3}$ &$\psi_{c,4}$ &$\psi_{c,5}$ &$\psi_{r,1}$ &$\psi_{r,2}$ &$\psi_{r,3}$ &$\psi_{r,4}$ &$\psi_{r,5}$\\ \midrule
			\multirow{2}{*}{100} &  ESTD   & 0.029       & 0.156       & 0.048       & 0.072       & 0.037       & 0.118       & 0.120       & 0.113       & 0.082 \\
			&  IMSE   & 0.029       & 0.154       & 0.049       & 0.074       & 0.035       & 0.110       & 0.105       & 0.109       & 0.083  \\
			\multirow{2}{*}{500} &  ESTD   & 0.012       & 0.067       & 0.022       & 0.033       & 0.014       & 0.044       & 0.047       & 0.048       & 0.035 \\
			&  IMSE   & 0.013       & 0.069       & 0.022       & 0.033       & 0.016       & 0.048       & 0.046       & 0.048       & 0.037  \\
			\multirow{2}{*}{5000} &  ESTD   & 0.005       & 0.019       & 0.007       & 0.010       & 0.005       & 0.014       & 0.013       & 0.014       & 0.010 \\
			&  IMSE   & 0.004       & 0.022       & 0.007       & 0.011       & 0.005       & 0.015       & 0.014       & 0.015       & 0.012  \\\bottomrule
		\end{tabular}
	\end{table}
	
	\reft{tab:rmse} provides the root mean squared error (RMSE) values of the ML estimates for four parameters $\bC, \nu, \bPsi_c, \bPsi_r$ under different sample sizes. The results for $\bR$ are similar to those for $\bC$ and are omitted here. As shown in \reft{tab:rmse}, with the increase in sample size, the RMSE values of all model parameters gradually decrease, indicating more accurate model estimation and demonstrating the consistency of the ML estimator.
	%can be found in Table~S2 of the supplementary materials. 
	
	\reft{tab:std} presents the estimated standard errors of the ML estimates for four parameters. From \reft{tab:std}, it can be observed that both ESTDs and IMSEs decrease as sample size $N$ increases. Importantly, the two values approach each other, indicating the accuracy and stability of the theoretical standard errors calculated based on the information matrix.
	
	%\subsubsection{Outlier detection}\label{sec:odtc}

	\subsection{Real data}\label{sec:real}
	In this section, we use real data to investigate the proposed \emph{t}BFA, including the performance of BIC-based model selection, the interpretation of row and column common factors, and the prediction of matrix factor scores.
	
	\subsubsection{Athletic records data}
	Although factor analysis is widely used, the results in the literature mostly focus on vector data and there is relative paucity of the analysis on matrix data directly. For better comparison, in this experiment, vector data is artificially augmented to matrix data so that we can utilize the known factor analysis results on vector data as a reference. 
	
	%Due to the relative paucity of effective matrix-based factor analysis tools in the literature,  with limited analysis results for row and column factors in matrix data. Therefore, 
	
	We use a vector-valued dataset from Exercise 12.8 in \cite{Haerdle2019-fa}, denoted by Data-V. The dataset is composed of men’s athletic records of 8 track events from 55 countries/regions in the 1984 Olympic Games. That is, Data-V consists of 55 observations and 8 variables. \reft{tab:data.sport} provides a summary of variable information, including sample means, variances, kurtosis, and Jarque-Bera tests \citep{jarque1980efficient}. As indicated by the descriptive statistics in the table, all variables exhibit high kurtosis. The Jarque-Bera test $p$-values suggest that none of the variables follow a normal distribution. Therefore, modeling with heavy-tailed distributions would be appropriate.
	%\fb{delete two columns in Table 4: `sample krutosis' and `sample skewness', similar as Table 6. \co{Removed sample skewness}}
	%It is readily seen that most of the attributes are moderately to strongly skewed with a heavy tail.
	
	\begin{table}[htb]
		\centering
		\caption{\label{tab:data.sport} An overview of Men’s track records dataset.}
		\begin{tabular}{clcccc}
			\toprule
			\multirow{2}{*}{Variable}  &\multirow{2}{*}{Description}  &\multirow{2}{*}{\makecell[c]{Sample\\Mean}}  &\multirow{2}{*}{\makecell[c]{Sample\\Variance}}  &\multirow{2}{*}{\makecell[c]{Sample\\kurtosis}}   &\multirow{2}{*}{\makecell[c]{$p$-value of the \\Jarque-Bera test}} \\ 
			~       &       ~            &   ~     &   ~       &  ~         &  ~ \\ \midrule
			$\bx_1$     &100 meters (s)      & 10.47   & 0.12      & 11.50      & < 0.001 \\
			$\bx_2$     &200 meters (s)      & 20.94   & 0.42      & 4.64       & 0.009 \\
			$\bx_3$     &400 meters (s)      & 46.44   & 2.12      & 8.43       & < 0.001 \\
			$\bx_4$     &800 meters (min)    & 1.79    & 0.00      & 6.79       & < 0.001 \\
			$\bx_5$     &1500 meters (min)   & 3.70    & 0.02      & 6.24       & < 0.001 \\
			$\bx_6$     &5000 meters (min)   & 13.85   & 0.64      & 5.40       & < 0.001 \\
			$\bx_7$     &10000 meters (min)  & 28.99   & 3.27      & 5.64       & < 0.001 \\
			$\bx_8$     &marathon (min)      & 136.62  & 85.14     & 4.19       & 0.003 \\ \bottomrule
		\end{tabular}
	\end{table}
	
	To select the factor dimension $q$, we use BIC for FA and \emph{t}FA on the original Data-V. The candidate model set is defined as $\mathcal{I}=\{q \mid 1 \leq q \leq 4\}$, encompassing 4 models. \reff{fig:sport.bic} (a) shows the evolutions of the BIC values of all candidates with varying $(q)$ on Data-V for both FA and \tfa. \reft{tab:sport.bic} summarizes the optimal results. It can be observed that both FA and \emph{t}FA favor a 2-factor model. However, it is noticeable that the BIC values for \emph{t}FA models are consistently smaller than those for FA models, suggesting that the 2-factor \emph{t}FA model provides the best fit.
	%In the first stage, all candidate models are fitted, and in the second stage, BIC is used to choose the optimal ones. The BIC curves presented in \reff{fig:sport.bic} (a) indicate that 
	
	\begin{figure*}[htb]
		\centering \scalebox{0.8}[0.6]{\includegraphics*{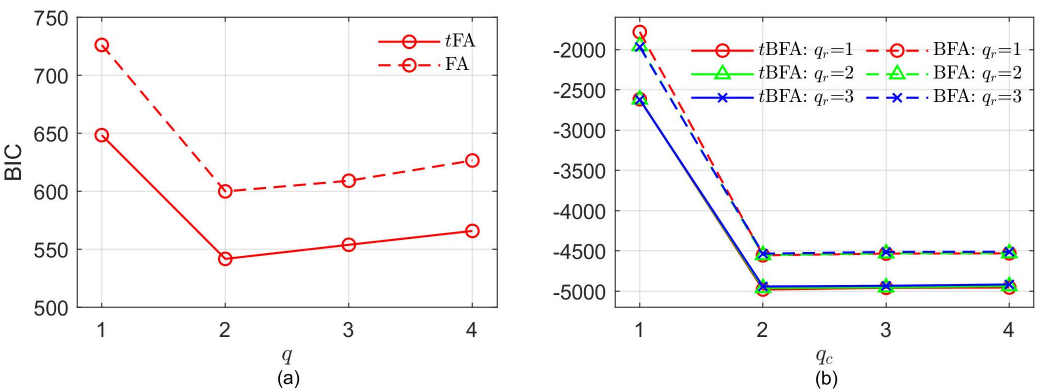}} 
		\caption{(a) BIC values versus various $q$ for FA and \tfa; (b) BIC values versus various $(q_c,q_r)$ for BFA and \tbfa.}
		\label{fig:sport.bic}
	\end{figure*}
	
	Next, we augment Data-V and compare the performance of \emph{t}BFA and BFA on the augmented matrix data. The original Data-V can be considered as a special $8 \times 1 \times 55$ matrix-valued dataset, i.e., $d_c = 8, d_r = 1$. Based on the results of FA, with row factor number $q_c = 2$, column factor number $q_r = 1$, and the estimated factor loading matrix denoted as $\hat{\bA}$, combined with the estimated degrees of freedom $\hat{\nu}=4.28$ from \emph{t}FA, we set $\bC=\hat{\bA}$, $\nu=4.28$, and randomly generate a $d_r \times 1$ matrix $\bR$. By \refe{eqn:tbfa.r}, we can augment Data-V from $d_r=1$ to any $d_r$ ($d_r>1$). In this experiment, we set $d_r=7$, resulting in a matrix dataset of dimensions $8 \times 7 \times 55$, which is denoted as Data-M. Note that the factor dimensions of Data-M remain unchanged from those of Data-V, i.e., $(q_c=2,q_r=1)$.
	
	%Next, we conduct two-stage \emph{t}BFA and two-stage BFA on the Data-M dataset. 
	To select the factor dimensions $(q_c, q_r)$, we apply BIC for both BFA and \tbfa on Data-M. The candidate model set is $\mathcal{I}=\{(q_c, q_r) \mid 1 \leq q_c \leq 4, 1 \leq q_r \leq 3\}$, including a total of 12 models. \reff{fig:sport.bic} (b) shows the evolutions of the BIC values of all candidates with varying $(q_c,q_r)$ on Data-M for both BFA and \emph{t}BFA. \reft{tab:sport.bic} summarizes the optimal results. It can be seen from \reff{fig:sport.bic} (b) and \reft{tab:sport.bic} that both matrix-based factor analysis models correctly select the true model. Notably, the BIC values for \tbfa are smaller than that of BFA model, indicating that \emph{t}BFA provides a better fit to the Data-M.
	%The plot indicates that when the factor subspace dimensions are $(q_c=2, q_r=1)$, both BFA and \emph{t}BFA models obtain the minimum BIC values, namely, -4554.03 and -4978.74, respectively. Thus, based on the BIC criterion, The results of FA and \tfa are also included. 
	
	\begin{table}[htb]
		\centering
		\caption{\label{tab:sport.bic} Results of the optimal models.} 
		%The optimal BIC values and the number of factors for \tbfa, BFA, \tfa, and FA
		\begin{tabular}{ccccc}
			\toprule
			Dataset                   & Method       & No. of factors    &  BIC       & $\hat{\nu}$\\ \midrule
			\multirow{2}{*}{Data-V}   & FA           & 2                 & 599.93     & — \\
			& \tfa         & 2                 & 541.76     & 4.28 \\
			\multirow{2}{*}{Data-M}   & BFA          & (2,1)             & -4554.03   & — \\
			& \tbfa        & (2,1)             & -4978.74   &7.23 \\ \bottomrule
		\end{tabular}
	\end{table}
	
	To further compare the results of FA ($q=2$) and \emph{t}FA ($q=2$) on Data-V with BFA ($q_c=2, q_r=1$) and \emph{t}BFA ($q_c=2, q_r=1$) on Data-M, varimax rotation \citep{lawley_fasm} is applied to the factor loading matrices $\hat{\bA}$ in FA and \emph{t}FA, as well as the column factor loading matrices $\hat{\bC}$ in BFA and \emph{t}BFA. The rotated factor loading results are presented in \reft{tab:sport.loading}. Despite some numerical differences, all four methods exhibit consistent factor structure patterns. Specifically, one factor captures explosive power for short-distance races and speed and interprets variables $\bx_1,\bx_2,\bx_3$, and another factor captures endurance for long-distance races and interprets the remaining variables, i.e., $\bx_4, \bx_5, \bx_6, \bx_7, \bx_8$. The experimental results indicate that the BIC-based \emph{t}BFA successfully extracts meaningful factors and performs well as expected on Data-M. Note that it is infeasible to apply \emph{t}FA to the vectorized Data-M as vectorization yields a dataset of dimensions $56 \times 55$, facing the high-dimensional problem of $d>N$. 
	\begin{table}[htb]
		\centering
		\caption{\label{tab:sport.loading} Comparison of factor loading matrices obtained by different methods.}
		\begin{tabular}{ccccccccc}
			\toprule
			Method              & $\bx_1$ & $\bx_2$ & $\bx_3$ & $\bx_4$ & $\bx_5$ & $\bx_6$ & $\bx_7$ & $\bx_8$  \\ \midrule
			\multirow{2}{*}{\emph{t}BFA}   &-1.20  &-1.54  &-2.19   &\cellcolor{blue!15}-2.72  &\cellcolor{blue!15}-3.10 &\cellcolor{blue!15}-3.44 &\cellcolor{blue!15}-3.45  &\cellcolor{blue!15}-3.52 \\
			&\cellcolor{blue!15}-3.54   &\cellcolor{blue!15}-3.47   &\cellcolor{blue!15}-2.94   & -2.51   & -2.14   & -1.62   & -1.59   & -1.15 \\ \cmidrule{2-9}
			\multirow{2}{*}{BFA}          & -1.56   & -1.94   & -2.67   &\cellcolor{blue!15}-3.27   &\cellcolor{blue!15}-3.72   &\cellcolor{blue!15}-4.10   &\cellcolor{blue!15}-4.10   &\cellcolor{blue!15}-4.13 \\
			&\cellcolor{blue!15}-4.40   &\cellcolor{blue!15}-4.29   &\cellcolor{blue!15}-3.62   & -3.08   & -2.62   & -1.98   & -1.95   & -1.42 \\ \cmidrule{2-9}
			\multirow{2}{*}{\emph{t}FA}     & 0.13    & 0.24    & 0.37    &\cellcolor{blue!15}0.44    &\cellcolor{blue!15}0.56    &\cellcolor{blue!15}0.62    &\cellcolor{blue!15}0.62    &\cellcolor{blue!15}0.62 \\
			&\cellcolor{blue!15}0.62    &\cellcolor{blue!15}0.70    &\cellcolor{blue!15}0.55    & 0.41    & 0.34    & 0.22    & 0.21    & 0.14 \\ \cmidrule{2-9}
			\multirow{2}{*}{FA}        & 0.28    & 0.38    & 0.54    &\cellcolor{blue!15}0.68    &\cellcolor{blue!15}0.79    &\cellcolor{blue!15}0.89    &\cellcolor{blue!15}0.90    &\cellcolor{blue!15}0.91 \\
			&\cellcolor{blue!15}-0.91   &\cellcolor{blue!15}-0.88   &\cellcolor{blue!15}-0.74   & -0.62   & -0.53   & -0.39   & -0.40   & -0.28 \\ \bottomrule
		\end{tabular}
	\end{table}
	
	\subsubsection{Fama–French 10 by 10 return series}
	In this experiment, we use the Fama–French 10 by 10 return series \footnote{Available from \url{http://mba.tuck.dartmouth.edu/pages/faculty/ken.french/data_library.html}.} \cite{wang2019factor}. This dataset comprises 624 matrix observations, covering monthly returns from January 1964 to December 2015, a total of 624 months. Each observation is a 10$\times$10 matrix, where the column variables represents the ten levels of market capital (Size) from small to large, and the row variables represents the ten levels of book-to-equity ratio (BE) from small to large. Based on this, stocks are classified into 100 portfolios. Following \cite{wang2019factor}, we preprocess the data by performing market adjustment on each observation, namely subtracting the corresponding monthly excess market returns to obtain market-adjusted return series.
	%First, we conducted exploratory data analysis. More detailed information can be found at  
	
	To assess the normality assumption for the data, we performe the Jarque-Bera test for each variable (i.e., portfolio). For convenience, the set of all 100 variables are denoted by $\Omega$. The results are reported in \ref{tab:test.return}. Based on the reported $p$-values, the normal assumption for 99 out of 100 variables are rejected, indicating the data does not follow the matrix-variate normal distribution.%\fb{use 0.001, 0.05 in table} %We observed that in 99 out of 100 dimensions, the $p$-values are less than 0.05, suggesting that the data does not adhere to the assumption of matrix variate normality. vectorized the data into a 100$\times$624 matrix and 
	%for each dimension.
	\begin{table}[htb]
		\centering
		\caption{\label{tab:test.return} Jarque-Bera test for the vectorized Fama-French 10 by 10 return series.}
		\begin{tabular}{ccc}
			\toprule
			$p$-value        & No. of variables    & Variable index \\ \midrule
			$p\leq0.001$\        & 94                     & $\Omega-\{10,25,43,46,53,74\}$ \\
			$0.001 < p\leq0.05$    & 5                      &  $\{25,43,46,53,74\}$ \\
			$p>0.05$            & 1                      & $\{10\}$   \\  \bottomrule
		\end{tabular}
	\end{table}
	
	%\begin{table}[htb]
	%	\centering
	%	\caption{\label{tab:test.return} Jarque-Bera test for the vectorized Fama-French 10 by 10 return series.}
	%	\begin{tabular}{ccc}
		%		\toprule
		%		$p$-value range        & Number of attributes    & Attribute \\ \midrule
		%		$p\leq0.001$\        & 94                     & $S_1=\{x_1,\dots,x_{100}\}-S_2-S_3$ \\
		%		$0.001 < p\leq0.05$    & 5                      &  $S_2=\{x_{25},x_{43}$,$x_{46}$,$x_{53}$,$x_{74}\}$ \\
		%		$p>0.05$            & 1                      & $S_3=\{x_{10}\}$   \\  \bottomrule
		%	\end{tabular}
	%\end{table}
	
	We use BFA and \tbfa to analyze this dataset. BIC is used to select the factor dimensions $(q_c, q_r)$ for both models. The candidate set is $\mathcal{I}=\{(q_c,q_r) \mid 1 \leq q_c \leq 6, 1 \leq q_r \leq 6\}$. \reff{fig:return.bic} shows the evolutions of BIC values for BFA and \tbfa as $(q_c,q_r)$ takes different values. \reft{tab:return.bic} collects the optimal results. From \reff{fig:return.bic} and \reft{tab:return.bic}, it can be observed that both BFA and \tbfa achieve the minimum BIC values when the factor dimensions are $(q_c=2, q_r=3)$. Moreover, it is observed that the BIC values for all candidate models of \tbfa are lower than the corresponding BFA models, indicating that \tbfa provides a better fit compared to BFA.
	
	%$x_1$-$x_9$, $x_{11}$-$x_{24}$, $x_{26}$-$x_{42}$, $x_{44}$, $x_{45}$, $x_{47}$-$x_{52}$, $x_{47}$-$x_{52}$, $x_{54}$-$x_{73}$, $x_{75}$-$x_{100}$ Next, we analyze the data using \tbfa and BFA models based on the BIC criterion. 
	
	\begin{figure*}[htb]
		\centering \scalebox{0.8}[0.6]{\includegraphics*{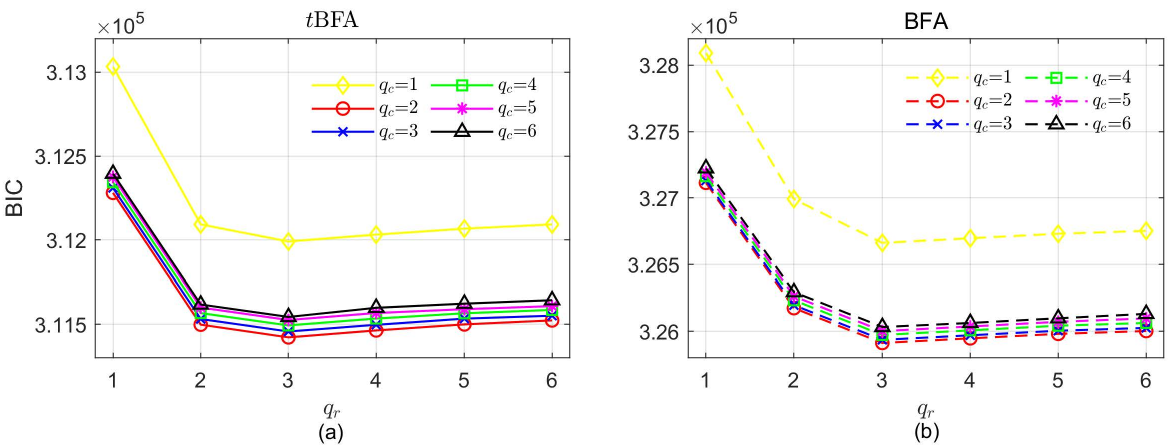}} 
		\caption{BIC values versus various $(q_c,q_r)$ for (a) \tbfa and (b) BFA models.}
		% (a) Plots of BIC values for \tbfa with various $(q_c,q_r)$. (b) Plots of BIC values for BFA with various $(q_c,q_r)$.
		\label{fig:return.bic}
	\end{figure*}
	
	\begin{table}[htb]
		\centering
		\caption{\label{tab:return.bic} The results of optimal models for \tbfa and BFA.} 
		%Optimal BIC values and the selected $(q_c,q_r)$ by \tbfa and BFA.
		%Maximum likelihood estimation results by two-stage \tbfa and BFA
		\begin{tabular}{cccc}
			\toprule
			Method       & Choice of $(q_c,q_r)$    &  BIC         & $\hat{\nu}$\\ \midrule
			\tbfa        & (2,3)                    & 311422.78    &  6.29 \\
			BFA          & (2,3)                    & 325912.98    &    —    \\  \bottomrule
		\end{tabular}
	\end{table}
	
	To further examine the obtained column (Size) and row (BE) common factors, the varimax rotation is applied to the column and row factor loading matrices, respectively. \reft{tab:return.load} shows the results scaled by 30. For comparison, the result by MFM taken from \cite{wang2019factor} is also included. From \reft{tab:return.load}, it can be observed that
	\begin{enumerate}[(i)]
		\item Size factors. The two common factors obtained by \tbfa and BFA are comparable. Specifically, the first factor could be named `small size factor', mainly explaining the portfolios with the smallest market capital (from S1 to S5). The second factor could be named `big size factor', primarily explaining portfolios with larger market capital (from S6 to S10). In constrast, there are possibly two or three factors for MFM as the largest S10 portfolio may be classified as a single factor.  
		
		\item BE factors. Both \tbfa and BFA obtain three common factors, while MFM gives two factors. In detail, for \tbfa, the first factor could be named `medium BE factor', as it mainly explains portfolios with BE levels from 5 to 7, i.e., medium book-to-equity ratios. Likewise, the second factor could be named `high BE factor', mainly explaining portfolios with BE levels from 8 to 10, namely relatively high book-to-equity ratios), and the third factor named `small BE factor', mainly explaining portfolios with BE levels from 1 to 4, namely relatively low book-to-equity ratios. We note that \cite{fama1995size} defined the lowest 30\% of BE as Low, the highest 30\% as High, and the middle 40\% as Medium. This is roughly consistent with the factors obtained by \tbfa and BFA, while the factors by MFM are not the case.
	\end{enumerate}
	
	\begin{table}[htb]
		\centering
		\caption{\label{tab:return.load} Loading matrices by \tbfa, BFA and MFM on Fama–French data set, }
		\begin{tabular}{cccccccccccc}
			\toprule
			%Size               & ~     & ~      & ~      & ~     & ~    & ~     & ~    & ~    & ~    & ~    & ~  \\ \midrule
			\multicolumn{12}{c}{Size} \\ \midrule
			Method              &Factor & S1      & S2      & S3     & S4     & S5     & S6    & S7    & S8    & S9    & S10  \\ \midrule
			\multirow{2}{*}{\tbfa}   &1      &\cellcolor{blue!15}-26  &\cellcolor{blue!15}-28 &\cellcolor{blue!15}-23 &\cellcolor{blue!15}-21  &\cellcolor{blue!15}-18      &-14      &-10      &-8      &-4      &3 \\
			&2      &1        &-1       &-9      &-10     &-12     &\cellcolor{blue!15}-15 &\cellcolor{blue!15}-15 &\cellcolor{blue!15}-17  &\cellcolor{blue!15}-15 &\cellcolor{blue!15}-11 \\ \cmidrule{2-12}
			\multirow{2}{*}{BFA}        &1    &\cellcolor{blue!15}-29  &\cellcolor{blue!15}-29 &\cellcolor{blue!15}-24 &\cellcolor{blue!15}-20 &\cellcolor{blue!15}-17   &-14      &-10      &-8      &-4      &4  \\
			&2     &-1       &2        &10      &11      &14   &\cellcolor{blue!15}17 &\cellcolor{blue!15}17 &\cellcolor{blue!15}18 &\cellcolor{blue!15}16     &\cellcolor{blue!15}9 \\ \cmidrule{2-12}
			\multirow{2}{*}{MFM}     &1   &\cellcolor{blue!15}-13  &\cellcolor{blue!15}-14 &\cellcolor{blue!15}-13 &\cellcolor{blue!15}-13 &\cellcolor{blue!15}-10   &-5      &-2      &1      &6   &\cellcolor{blue!15}7  \\
			&2    &0       &0        &-2      &3      &5   &\cellcolor{blue!15}12 &\cellcolor{blue!15}12 &\cellcolor{blue!15}18 &\cellcolor{blue!15}15    &\cellcolor{blue!15}5 \\ \midrule 
			%Book-to-Equity                & ~       & ~      & ~      & ~     & ~    & ~     & ~    & ~    & ~    & ~    & ~  \\ \midrule 
			\multicolumn{12}{c}{Book-to-Equity} \\ \midrule
			Method              &Factor   & BE1     & BE2     & BE3    & BE4    & BE5    & BE6     & BE7    & BE8    & BE9    & BE10  \\ \midrule
			\multirow{3}{*}{\tbfa}   &1        & 7   & 11  & 18  & 20   &\cellcolor{blue!15}23  &\cellcolor{blue!15}23 &\cellcolor{blue!15}24 &16    & 11    & 9\\
			&2    &-5    &-7    &-8   &-10   &-10   &-13   &-15   &\cellcolor{blue!15}-31 &\cellcolor{blue!15}-36 &\cellcolor{blue!15}-43 \\ 
			&3   &\cellcolor{blue!15}-39 &\cellcolor{blue!15}-35 &\cellcolor{blue!15}-25 &\cellcolor{blue!15}-21  &-16  &-13    &-8    &-5   &-7   &-9 \\ \cmidrule{2-12}
			\multirow{3}{*}{BFA}        &1   &9  &17 &\cellcolor{blue!15}29 &\cellcolor{blue!15}29 &\cellcolor{blue!15}32 &\cellcolor{blue!15}31 &\cellcolor{blue!15}35 &20 &19 & 10\\
			&2   &-6    &-7    &-8   &-13   &-12   &-15    &-15   &\cellcolor{blue!15}-47 &\cellcolor{blue!15}-43 &\cellcolor{blue!15}-61 \\
			&3   &\cellcolor{blue!15}48  &\cellcolor{blue!15}41  &29  &24  &17  &12    &7    &6   &8   &10 \\  \cmidrule{2-12}
			\multirow{2}{*}{MFM}   &1 &\cellcolor{blue!15}-21 &\cellcolor{blue!15}-14 &\cellcolor{blue!15}-11 &\cellcolor{blue!15}-9 &-4  &-1   &-1   &-4  &1  &3 \\
			&2  &\cellcolor{blue!15}-9  &2  &3  &7  &\cellcolor{blue!15}9 &\cellcolor{blue!15}10 &\cellcolor{blue!15}10 &\cellcolor{blue!15}10 &\cellcolor{blue!15}13    &\cellcolor{blue!15}14 \\ \bottomrule   
		\end{tabular}
	\end{table}
	
	\reff{fig:return.time} plots the estimated factor matrix over time. From \reff{fig:return.time}, it can be seen that, given the market size factor, the volatility of the `low BE factor' is significantly lower than that of the `medium and high BE factors'. This is because low BE indicates the typical performance of companies with higher average capital return rates (i.e., growth stocks), while high BE ratios often signify sustained low returns on book value, indicative of the typical performance of relatively poor-performing companies \citep{fama1995size}. Furthermore, when the BE factor is given, the larger the market capital, the more stable the return.
	%while high BE often signifies the sustained low returns of companies that are relatively poor performers
	\begin{figure*}[htb]
		\centering \scalebox{0.7}[0.5]{\includegraphics*{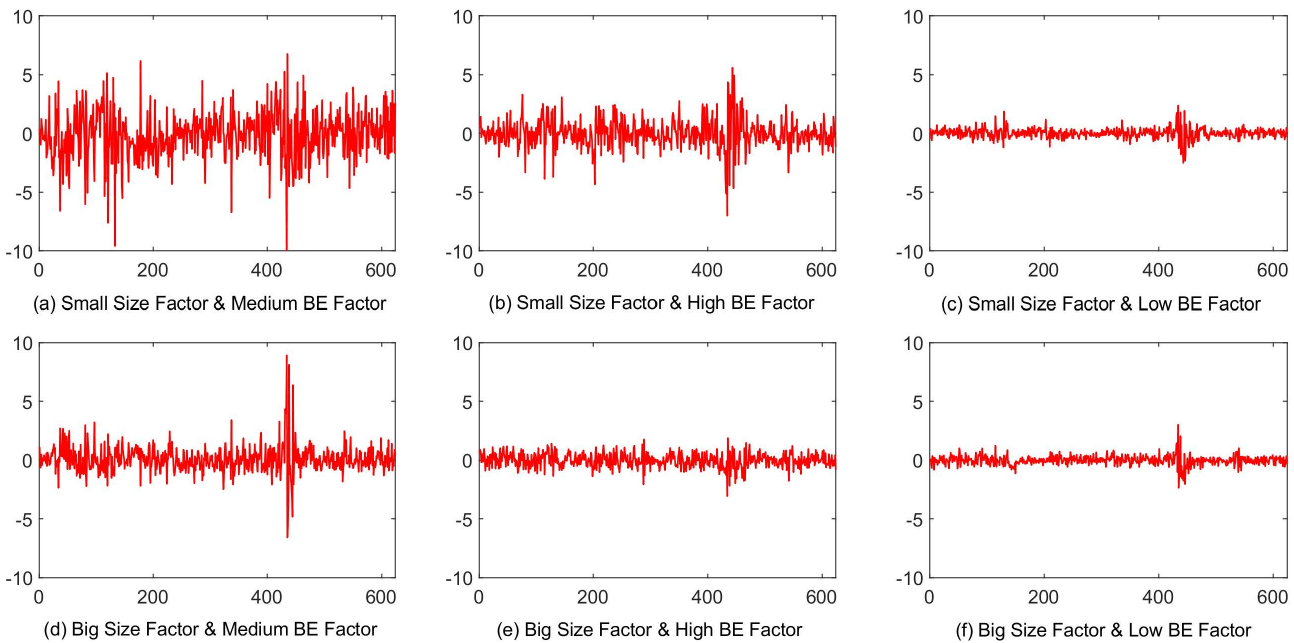}} 
		\caption{Fama–French series: Estimated factors over time by \tbfa.}
		\label{fig:return.time}
	\end{figure*}

	\section{Conclusion and future works}\label{sec:conclusion} 
	To improve BFA in the presence of heavy-tailed matrix data or matrix-valued outliers, we propose in this paper a novel factor analysis tool called \tbfa, using the matrix-variate \emph{t} distribution. For the task of parameter estimation, we present two EM-type algorithms, namely, ECME and AECM. AECM scales better on high-dimensional matrix data while ECME converges faster. To expedite both ECME and AECM, we incorporate the PX technique and introduce two enhanced variants, namely PX-ECME and PX-AECM, demonstrating clear advantages over their non-PX counterparts. To assess the accuracy of the obtained ML estimator, we explicitly drive the closed-form expression of Fisher information matrix. Unlike \tfa, where factor scores are vectors, the factor scores in \tbfa are matrices.
	We also conduct a visual analysis of the matrix factor scores. Our empirical results on matrix datasets show that \tbfa improves the robustness of BFA as expected. More importantly, \tbfa performs much better than its vector-based cousin \tfa, primarily owing to its higher breakdown point. 
	
	It is important to highlight that the newly proposed \tbfa is tailored for matrix-valued data. In contemporary applications, there is a growing prevalence of naturally occurring tensor-valued data. For instance, the mortality data analyzed in \citep{fosdick2014-sfa} consists of four modes: mode 1 encompasses 40 countries, mode 2 involves 9 time points, mode 3 encompasses 23 age groups, and mode 4 includes 2 gender groups. We consider extending the proposed \tbfa model to accommodate tensor-valued data as a future research direction. 
	
	For future works, inspired by the works on the vector-based mixtures of factor analysis \citep{zhao2008-mfa-ecm,zhao2014-hbic-mppca,wang2022mtfa}, it would be interesting to extend our \tbfa in this paper to mixtures of \tbfa for robustly clustering matrix data.
	
	\section*{Acknowledgements} 
	This work was supported by the National Natural Science Foundation of China under Grant 12161089.
	% partly by Yunnan Fundamental Research Projects under Grant 2019FB002, partly by the Science Foundation of Yunnan Educational Department under Grant.
	
	\appendix	
	%	\begin{appendices}
%		\section{Proof of \refthm{thm:Exp}} \label{sec:thm1.proof}
%	
%		
%		
%		\section{Proof of \refthm{thm:info}} \label{sec:thm2.proof} 

		\section{Generating the three types of outliers}\label{sec:gout}
		Let $\bC_{fc}$ (resp. $\bR_{fc}$) be the $d_c \times q_c$ (resp. $d_r \times q_r$) factor loading matrix, spanning the column (resp. row) factor component subspace. Correspondingly, $\bC_{oc}$ and $\bR_{oc}$ denote their orthogonal complements. To introduce outliers, inspired by \cite{she2016robust}, we utilize the following contaminated model, which is a modification of \tbfa \refe{eqn:tbfa}:
		
		\begin{equation}\label{eqn:outliers}
			\bX_o = \bW + \bC_{fc}\bZ\bR'_{fc}+\bC_{fc}\beps_r+\beps_c\bR_{fc}'+\beps + \bC_o\bZ_o\bR'_o.
		\end{equation}
		Depending on the type of outliers, $\bC_o, \bZ_o$, and $\bR_o$ in \refe{eqn:outliers} are set as follows. 
		
		(i) FC outliers, $\bC_o = [\bC_{fc},\bO]$ and $\bR_o = [\bR_{fc},\bO]$, where $\bO$ is a zero matrix of suitable dimension. $\bZ_o$ is a zero matrix except that the entries of the submatrix comprising the first $q_c$ columns and $q_r$ rows are drawn from a uniform distribution.
		
		(ii) OC outliers, $\bC_o = [\bO,\bC_{oc}]$ and $\bR_o = [\bO,\bR_{oc}]$. $\bZ_o$ is a zero matrix except that the entries of the submatrix comprising the last $d_c-q_c$ columns and $d_r-q_r$ rows are drawn from a uniform distribution.
		
		(iii) FC+OC outliers, $\bC_o = [\bC_{fc},\bC_{oc}]$ and $\bR_o = [\bR_{fc},\bR_{oc}]$, the entries of $\bZ_o$ are drawn from a uniform distribution.
		
		The uniform distribution in (i), (ii), and (iii) is one of those in the four situations described in \refs{sec:rbst}.

	\bibliography{journals,tbfa,jhzhao-pub}
	\bibliographystyle{elsarticle-harv}
	
\end{document}